\documentclass{article}
\usepackage{iclr2027_conference,times}
\iclrfinalcopy  

\usepackage[utf8]{inputenc} 
\usepackage[T1]{fontenc}    

\usepackage{url}            
\usepackage{booktabs}       
\usepackage{nicefrac}       
\usepackage{microtype}      
\usepackage{xcolor}         

\usepackage{xparse}
\usepackage{xspace}
\usepackage{xpatch}
\usepackage{textcomp}
\usepackage{lipsum}
\usepackage{todonotes}

\usepackage[normalem]{ulem}   
\usepackage{soul}             

\usepackage{amsmath}
\usepackage{amssymb}
\usepackage{amsfonts}
\usepackage{mathtools}
\usepackage{mathrsfs}
\usepackage{amsthm}
\usepackage{bm}
\usepackage{bbm}
\usepackage{empheq}
\usepackage{extarrows}
\usepackage[makeroom]{cancel}

\usepackage{pifont}
\usepackage{bbding}
\usepackage{dirtytalk}
\usepackage{siunitx}

\usepackage{enumitem}

\usepackage{graphicx}
\usepackage{float}
\usepackage{wrapfig}
\graphicspath{{./figs/}}

\usepackage{makecell}
\usepackage{multirow}
\usepackage{array}
\usepackage{tabularx}
\usepackage{xltabular}
\usepackage{tabularray}
\usepackage{colortbl}

\newcolumntype{Y}{>{\raggedright\arraybackslash}X}
\newcolumntype{C}{>{\centering\arraybackslash}X}

\usepackage{algorithm}
\usepackage{algpseudocode}

\usepackage{fontawesome5}
\usepackage{tcolorbox}
\usepackage{tikz}

\usepackage{listings}
\lstdefinestyle{py}{
    language=Python,
    basicstyle=\ttfamily\small,
    backgroundcolor=\color{gray!5},
    frame=lines,
    columns=fullflexible,
    keepspaces=true,
    showstringspaces=false
}

\usepackage[
    pdfencoding=auto,
    colorlinks=true,
    linkcolor=colorrefs,   
    citecolor=colorcite,   
    urlcolor=linkblue,
]{hyperref}
\usepackage[nameinlink]{cleveref}

\definecolor{neuro}{HTML}{39A300}
\definecolor{comp}{HTML}{077AF8}
\definecolor{theo}{HTML}{EF4136}
\definecolor{RowColorTop}{HTML}{e0dccb}
\definecolor{RowColorDark}{HTML}{f2efe6}
\definecolor{RowColorLight}{HTML}{fdfbf7}
\definecolor{MSBlue}{rgb}{.204,.353,.541}
\definecolor{MSLightBlue}{rgb}{.31,.506,.741}

\definecolor{BlogBlue}{HTML}{2d6a99}
\definecolor{BlogBG}{HTML}{f9f6f5}

\definecolor{py_0}{HTML}{1F77B4}
\definecolor{py_1}{HTML}{FF7F0E}
\definecolor{py_2}{HTML}{2CA02C}
\definecolor{py_3}{HTML}{D62728}
\definecolor{py_4}{HTML}{9467BD}
\definecolor{py_5}{HTML}{8C564B}
\definecolor{py_6}{HTML}{E377C2}
\definecolor{py_7}{HTML}{7F7F7F}
\definecolor{py_8}{HTML}{BCBD22}
\definecolor{py_9}{HTML}{17BECF}

\definecolor{gr}{HTML}{37C532}
\definecolor{grey}{HTML}{4D4D4D}
\definecolor{gray}{HTML}{4D4D4D}
\definecolor{lightgray}{HTML}{666666}
\definecolor{lightlightgray}{HTML}{999999}
\definecolor{color_enc}{HTML}{ED1C24}
\definecolor{color_dec}{HTML}{29ABE2}
\definecolor{color_fwd}{HTML}{2ea52a}
\definecolor{color_decenc}{HTML}{dd1aff}
\def\enc#1{{\color{color_enc}{#1}}\xspace}
\def\dec#1{{\color{color_dec}{#1}}\xspace}

\definecolor{colorcite}{HTML}{7038b2}
\definecolor{colorrefs}{HTML}{8e5325}
\definecolor{linkblue}{rgb}{0.3, 0.4, 0.8}
\definecolor{journalred}{rgb}{0.8, 0.1, 0.3}

\DeclarePairedDelimiter{\norminner}{\lVert}{\rVert}

\NewDocumentCommand{\norm}{s o m}{%
    \IfBooleanTF{#1}%
    {\norminner{#3}}
    {%
        \IfNoValueTF{#2}
        {\norminner*{#3}}
        {\norminner[#2]{#3}}
    }%
}

\DeclarePairedDelimiterX{\klinner}[2]{(}{)}{#1\,\delimsize\|\,#2}

\NewDocumentCommand{\kl}{s o m m}{%
    \cD_{\mathtt{KL}}%
    \IfBooleanTF{#1}%
    {\klinner{#3}{#4}}
    {%
        \IfNoValueTF{#2}%
        {\klinner*{#3}{#4}}
        {\klinner[#2]{#3}{#4}}
    }%
}

\newcommand{\KL}[2]{\cD_{\mathtt{KL}}\Big{(}{#1}\,\big\|\,{#2}\Big{)}}

\DeclarePairedDelimiter{\expectinner}{[}{]}
\newsavebox{\expcontent}
\newsavebox{\expfences}

\NewDocumentCommand{\expect}{s o m m}{%
    \EE_{#3}%
    \IfBooleanTF{#1}%
    {\expectinner{#4}}
    {%
        \IfNoValueTF{#2}%
        {
         \sbox{\expcontent}{$\displaystyle #4$}%
         \sbox{\expfences}{$\displaystyle \expectinner{\vphantom{X}}$}%
         \ifdim\dimexpr\ht\expcontent+\dp\expcontent\relax>\dimexpr\ht\expfences+\dp\expfences\relax
             \!\expectinner*{#4}%
         \else
             \expectinner{#4}%
         \fi
        }%
        {
         \!\expectinner[#2]{#4}%
        }%
    }%
}

\NewDocumentCommand{\pderiv}{o m e{^}}{%
    \frac{\partial\IfValueT{#3}{^{#3}}\IfValueT{#1}{#1}}
         {\partial #2\IfValueT{#3}{^{#3}}}
}

\NewDocumentCommand{\deriv}{o m e{^}}{%
    \frac{d\IfValueT{#3}{^{#3}}\IfValueT{#1}{#1}}
         {d #2\IfValueT{#3}{^{#3}}}
}

\newcommand{\argmin}[1]{\underset{#1}{\operatorname{argmin}}\ }

\newcommand{\minimize}[1]{\underset{#1}{\operatorname{minimize}}\ }

\DeclareRobustCommand{\ba}{\bm{a}}
\DeclareRobustCommand{\bb}{\bm{b}}

\DeclareRobustCommand{\bd}{\bm{d}}
\DeclareRobustCommand{\be}{\bm{e}}
\DeclareRobustCommand{\bg}{\bm{g}}

\DeclareRobustCommand{\bp}{\bm{p}}

\DeclareRobustCommand{\br}{\bm{r}}
\DeclareRobustCommand{\bs}{\bm{s}}

\DeclareRobustCommand{\bu}{\bm{u}}
\DeclareRobustCommand{\bv}{\bm{v}}
\DeclareRobustCommand{\bw}{\bm{w}}
\DeclareRobustCommand{\bx}{\bm{x}}

\DeclareRobustCommand{\bz}{\bm{z}}

\DeclareRobustCommand{\bdelta}{\bm{\delta}}
\DeclareRobustCommand{\bepsilon}{\bm{\epsilon}}

\DeclareRobustCommand{\blambda}{\bm{\lambda}}
\DeclareRobustCommand{\bmu}{\bm{\mu}}

\DeclareRobustCommand{\bphi}{\bm{\phi}}
\DeclareRobustCommand{\bpi}{\bm{\pi}}

\DeclareRobustCommand{\btheta}{\bm{\theta}}

\DeclareRobustCommand{\bPhi}{\bm{\Phi}}

\DeclareRobustCommand{\bSigma}{\bm{\Sigma}}

\DeclareRobustCommand{\bA}{\mathbf{A}}  
\DeclareRobustCommand{\bC}{\mathbf{C}}  
\DeclareRobustCommand{\bD}{\mathbf{D}}  
\DeclareRobustCommand{\bE}{\mathbf{E}}  
\DeclareRobustCommand{\bF}{\mathbf{F}}  
\DeclareRobustCommand{\bG}{\mathbf{G}}  
\DeclareRobustCommand{\bI}{\mathbf{I}}  
\DeclareRobustCommand{\bJ}{\mathbf{J}}  
\DeclareRobustCommand{\bM}{\mathbf{M}}  
\DeclareRobustCommand{\bQ}{\mathbf{Q}}  
\DeclareRobustCommand{\bR}{\mathbf{R}}  
\DeclareRobustCommand{\bS}{\mathbf{S}}  
\DeclareRobustCommand{\bV}{\mathbf{V}}  
\DeclareRobustCommand{\bW}{\mathbf{W}}  
\DeclareRobustCommand{\bZ}{\mathbf{Z}}  

\DeclareRobustCommand{\cD}{\mathcal{D}}  
\DeclareRobustCommand{\cF}{\mathcal{F}}  
\DeclareRobustCommand{\cN}{\mathcal{N}}  
\DeclareRobustCommand{\cR}{\mathcal{R}}  

\DeclareRobustCommand{\EE}{\mathbb{E}}  
\DeclareRobustCommand{\RR}{\mathbb{R}}  

\def\1{\bm{1}}

\newcommand{\ssub}[1]{_{\scriptscriptstyle #1}}

\DeclareMathOperator{\Tr}{Tr}

\DeclareMathOperator{\diag}{diag}

\NewDocumentCommand{\pwrld}{ e{^} }{%
  p\IfValueT{#1}{^{#1}}_{\text{world}}%
}
\NewDocumentCommand{\pself}{ e{^} }{%
  p\IfValueT{#1}{^{#1}}_{\text{\textcolor{color_dec}{self}-world}}%
}
\NewDocumentCommand{\pbrn}{ e{^} }{%
  p\IfValueT{#1}{^{#1}}_{{\dec{\text{brain}}}}%
}
\NewDocumentCommand{\Pobs}{ e{^} }{%
  \hat{\dec{P}}\IfValueT{#1}{^{#1}}_{\text{obs}}%
}

\NewDocumentCommand{\boxit}{ O{} +m }{
    \begin{tcolorbox}[
        width=\linewidth,     
        colback=yellow!20, 
        colframe=darkgray,    
        halign title=left,    
        halign=center,        
        before upper={\thickmuskip=5mu\relax \medmuskip=4mu\relax},
        #1
    ]
    #2
    \end{tcolorbox}
}

\NewDocumentCommand{\note}{ O{Note} +m }{%
    \begin{tcolorbox}[title={#1},colback=BlogBG,colframe=MSBlue,sharp corners,boxrule=0.6pt]
    #2
    \end{tcolorbox}
}

\newcommand{\mlion}{\ifmmode\mathrm{M\mbox{-}Lion}\else M-Lion\xspace\fi}
\newcommand{\mmuon}{\ifmmode\mathrm{M\mbox{-}Muon}\else M-Muon\xspace\fi}
\newcommand{\msignum}{\ifmmode\mathrm{M\mbox{-}Signum}\else M-Signum\xspace\fi}
\newcommand{\LogLion}{\ifmmode\mathrm{Log\mbox{-}Lion}\else Log-Lion\xspace\fi}
\newcommand{\TanhLion}{\ifmmode\mathrm{Tanh\mbox{-}Lion}\else Tanh-Lion\xspace\fi}
\newcommand{\LogMuon}{\ifmmode\mathrm{Log\mbox{-}Muon}\else Log-Muon\xspace\fi}
\newcommand{\TanhMuon}{\ifmmode\mathrm{Tanh\mbox{-}Muon}\else Tanh-Muon\xspace\fi}

\def\sayit#1{\say{\textit{{#1}}}}

\def\entry#1#2{${#1}\scriptscriptstyle{\color{lightgray}\pm{#2}}$}   

\NewDocumentCommand{\quoteblue}{m o o}{
    \IfValueTF{#2}{\def\LeftMargin{#2}}{\def\LeftMargin{0.10}} 
    \IfValueTF{#3}{\def\RightMargin{#3}}{\def\RightMargin{\LeftMargin}}
    \hspace{\LeftMargin\textwidth}%
    \begin{tabular}{
        !{\color{BlogBlue}\vrule width 2pt}
        p{\dimexpr\textwidth - \LeftMargin\textwidth - \RightMargin\textwidth\relax}
    }
    {#1} 
    \end{tabular}
}
\NewDocumentCommand{\circled}{ O{1.8pt} O{0.8pt} m }{
    \tikz[baseline=(char.base)]{
        \node[shape=circle,draw,line width=#2,inner sep=#1] (char) {#3};
    }
}
\newcommand{\underbracegray}[2]{
    {\color{lightlightgray}
        \underbrace{\color{black}#1}_{\color{black}#2}
    }
}
\makeatletter
\newcommand{\namelabel}[1]{%
  \phantomsection
  \renewcommand{\@currentlabel}{#1}%
  \label{#1}%
}
\makeatother
\makeatletter
\RenewDocumentCommand{\boxed}{O{10pt} O{0.8pt} m}{%
  \begingroup
  \setlength{\fboxsep}{#1}%
  \setlength{\fboxrule}{#2}%
  \fbox{$\m@th\displaystyle #3$}%
  \endgroup
}
\makeatother
\makeatletter
\NewDocumentCommand{\boxedcap}{O{1.0pt} O{0.2pt} m}{%
  \begingroup
  \setlength{\fboxsep}{#1}%
  \setlength{\fboxrule}{#2}%
  \fbox{$\m@th\displaystyle #3$}%
  \endgroup
}
\makeatother

\let\oldtodo\todo
\RenewDocumentCommand{\todo}{O{} m}{%
    \oldtodo[%
        backgroundcolor=orange,
        textcolor=white,
        size=\small,
        #1
    ]{#2}
}

\crefname{figure}{Fig.}{Figs.}
\Crefname{figure}{Figure}{Figures}

\crefname{equation}{eq.}{eqs.}
\Crefname{equation}{Equation}{Equations}

\crefname{table}{Table}{Tables}
\Crefname{table}{Table}{Tables}

\crefname{algorithm}{Alg.}{Algs.}
\Crefname{algorithm}{Algorithm}{Algorithms}

\newtheoremstyle{liteplain}
  {\dimexpr\parskip+6pt\relax} 
  {10pt}                        
  {\itshape}                    
  {}                            
  {\bfseries}                   
  {.}                           
  {0.5em}                       
  {}                            

\newtheoremstyle{litedefinition}
  {\dimexpr\parskip+6pt\relax} 
  {10pt}                        
  {\normalfont}                 
  {}                            
  {\bfseries}                   
  {.}                           
  {0.5em}                       
  {}                            

\theoremstyle{liteplain}

\newtheorem{theorem}{Theorem}[section]
\newtheorem{proposition}[theorem]{Proposition}

\newtheorem{corollary}[theorem]{Corollary}

\theoremstyle{litedefinition}

\theoremstyle{liteplain}
\newtheorem*{theorem*}{Theorem}
\newtheorem*{proposition*}{Proposition}
\newtheorem*{lemma*}{Lemma}
\newtheorem*{corollary*}{Corollary}

\theoremstyle{litedefinition}
\newtheorem*{definition*}{Definition}
\newtheorem*{example*}{Example}
\newtheorem*{remark*}{Remark}

\crefname{theorem}{theorem}{theorems}
\Crefname{theorem}{Theorem}{Theorems}

\crefname{proposition}{proposition}{propositions}
\Crefname{proposition}{Proposition}{Propositions}

\crefname{lemma}{lemma}{lemmas}
\Crefname{lemma}{Lemma}{Lemmas}

\crefname{corollary}{corollary}{corollaries}
\Crefname{corollary}{Corollary}{Corollaries}

\crefname{definition}{definition}{definitions}
\Crefname{definition}{Definition}{Definitions}

\crefname{example}{example}{examples}
\Crefname{example}{Example}{Examples}

\crefname{remark}{remark}{remarks}
\Crefname{remark}{Remark}{Remarks}

\newcommand{\papertitle}{Inference and learning in sparse autoencoders as natural gradient flow\xspace}

\title{\papertitle}

\author{%
    \textbf{
        Hadi Vafaii$^{1,2,*}$ \quad
        Tejas Rao$^{2,*}$ \quad
        David Chanin$^{3,4}$ \quad
        Thomas Fel$^{4}$ \quad
        Jacob L.~Yates$^{2}$
    }\\[0.3em]
    \textbf{
        Bruno Olshausen$^{2}$ \quad
        David Klindt$^{5}$ \quad
        Dileep George$^{1}$ \quad
        Miguel L\'{a}zaro-Gredilla$^{1}$
    }\\[0.7em]
    $^{1}$Astera Institute \qquad
    $^{2}$UC Berkeley \qquad
    $^{3}$UCL \qquad
    $^{4}$Goodfire AI \qquad
    $^{5}$CSHL\\[0.5em]
    $^{*}${\small \textit{Co-first authors, correspondence to}: \quad
    \href{mailto:hadi@astera.org}{\texttt{hadi@astera.org}} \quad
    \href{mailto:tejasrao@berkeley.edu}{\texttt{tejasrao@berkeley.edu}}}
}

\newcommand{\befond}{\ensuremath{\mathrm{Be} }\text{FOND}\xspace}
\newcommand{\synthsaebench}{\texttt{SynthSAEBench}\xspace}
\newcommand{\saebench}{\texttt{SAEBench}\xspace}

\begin{document}

\maketitle
\lhead{}

\addtocontents{toc}{\protect\setcounter{tocdepth}{0}}

\begin{abstract}
Sparse autoencoders are widely used to uncover interpretable features in neural networks, yet reliable recovery remains difficult when features overlap or activate infrequently. These challenges involve both inferring which features explain an input and learning the dictionary that represents them. Here, we unify inference and dictionary learning as natural-gradient flows on a shared variational free energy. We instantiate this framework as \befond, an encoder-free sparse coding model with closed-form inference and learning dynamics. We show how recurrent explaining away reduces interference between overlapping features, while Fisher preconditioning can compensate for the slow learning of rare features. On synthetic data, \befond improves dictionary recovery and rare-feature detection, with a growing advantage over amortized baselines as superposition increases. On language-model activations, it improves single-feature concept detection and selective intervention, outperforming pretrained reference SAEs with substantially less training data. Its feature quality continues to improve with dictionary width, whereas the evaluated baselines largely plateau. Together, these results show how improving inference and learning within a unified probabilistic framework can make better use of data and dictionary capacity to interpret and intervene on neural representations.
\end{abstract}

\section{Introduction}
\label{sec:introduction}
%
Sparse autoencoders (SAEs) infer sparse representations of data in terms of a learned, overcomplete dictionary of features. Their roots lie in sparse coding \citep{olshausen1996emergence,lewicki2000learning}, motivated by the organization of sensory representations in biological systems \citep{olshausen1997sparse,olshausen2004sparse}. SAEs have since become central in interpreting neural network representations \citep[reviewed in][]{klindt2026unifying}, especially those of language models \citep{yun2021transformer,bricken2023monosemanticity,cunningham2024sparse,templeton2024scaling,gao2025scaling}.

Despite these successes, SAEs face challenges in both inference and dictionary learning. Standard SAEs use a trained neural network to perform one-pass inference---known as \textit{amortized} inference---which can struggle to disentangle overlapping features in superposition \citep{oneill2025compute}. But improving inference alone is insufficient. In controlled experiments, \citet{pacela2026stop} identified dictionary learning as the bottleneck: replacing an SAE's encoder with iterative inference on the same learned dictionary failed to close the feature-recovery gap. These problems are coupled during training: the dictionary determines which explanations of the data are available, while inferred feature activations shape the learning signal used to update the dictionary. Therefore, inference errors can compromise dictionary learning, and a poor dictionary can limit what inference can recover.

Unlike standard SAEs, classical sparse coding methods typically perform inference through iterative, per-sample optimization \citep{rozell2008sparse,gregor2010learning}. More recently in this lineage, \citet{vafaii2025brain} introduced the \textit{\underline{F}ree-energy \underline{O}nline \underline{N}atural-gradient \underline{D}ynamics} (FOND) framework, which formulates inference as natural-gradient flow on variational free energy. FOND's guiding principle is to respect the intrinsic geometry of the statistical manifold: updates are measured by their effect on the represented probability distribution through the Fisher information metric \citep{amari1998natural,martens2020new,khan2023bayesian}. However, FOND did not address learning through the same geometric principle. Its generative model parameters were learned by backpropagating through the inference trajectory, with the associated computational and memory costs.

Here, we extend this principle to learning. Just as \enc{inference} follows natural-gradient flow on free energy with respect to \enc{variational} parameters, we define \dec{learning} as natural-gradient flow on the same objective with respect to \dec{generative} parameters. Both processes thus respect the statistical geometry of the distributions they update. We instantiate this extended FOND framework as \befond: an encoder-free iterative SAE with \underline{Be}rnoulli latents, a factorized prior, and a linear Gaussian decoder. We derive closed-form expressions for its inference and learning dynamics, allowing the dictionary to be learned without backpropagating through the inference trajectory.

\begin{figure}[t]
    \centering
    \includegraphics[width=\textwidth]{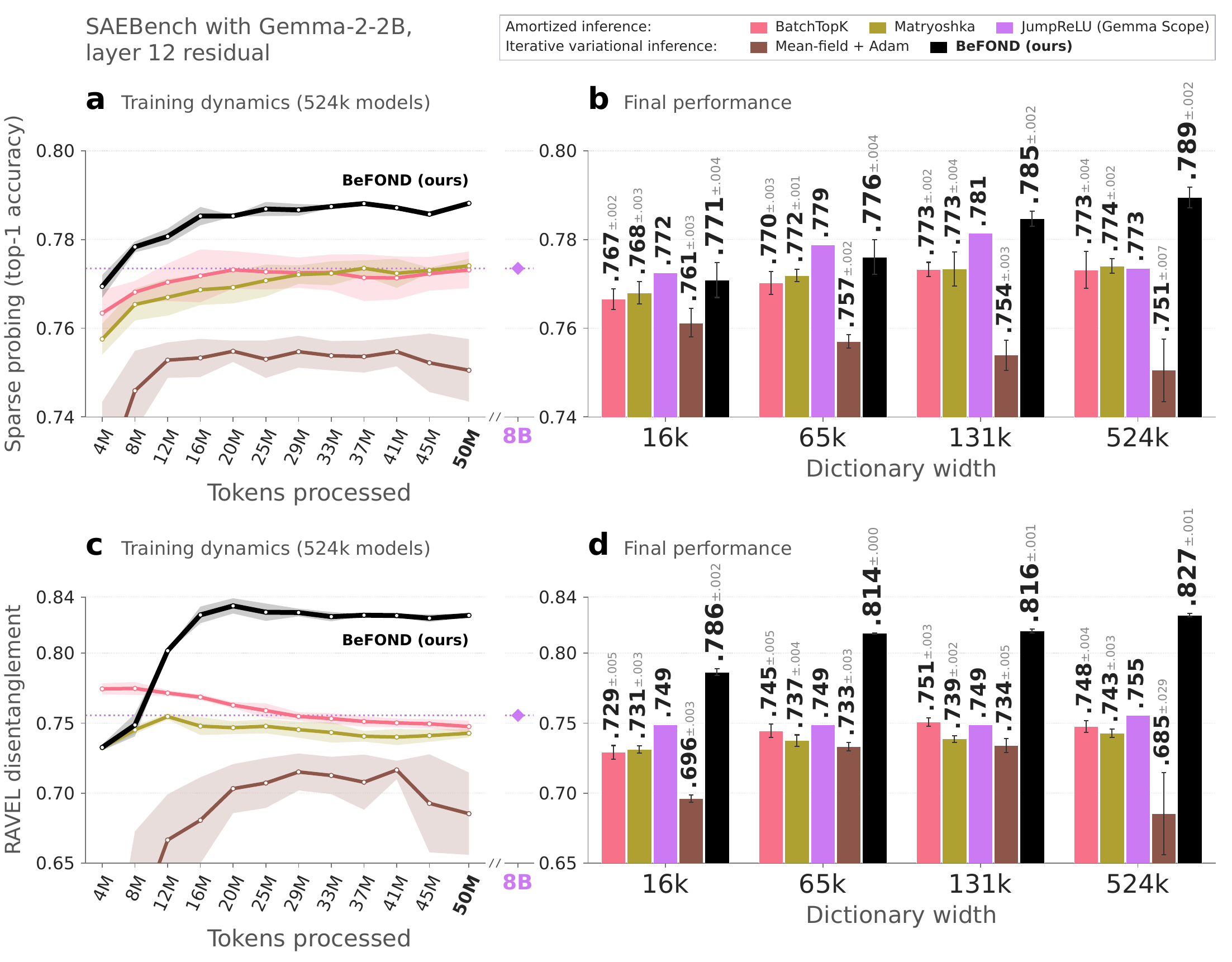}
    \caption{\textbf{Feature quality and sample efficiency on Gemma-2-2B activations.} \textbf{a,c}: Training dynamics at $524$k width. \textbf{b,d}: Final performance across dictionary widths. Details in \cref{sec:experiments}.}
    \label{fig:gemma}
\end{figure}

On the \enc{inference} side, \befond implements \textit{explaining away} through recurrent interactions between overlapping features, enabling lateral competition leading to sparse, disentangled representations. We isolate the contribution of these interactions and show that they become increasingly important as feature superposition increases. On the \dec{learning} side, we show how infrequent activation slows Euclidean dictionary learning. We prove that Fisher preconditioning removes this frequency dependence, giving rare and common features the same local learning rate. Motivated by this result, we hypothesize that Fisher preconditioning would improve rare-feature recovery in \befond. Empirically, \befond learns more accurate dictionaries and detects rare features with few misses or false positives, while using substantially fewer training samples than SAE baselines and Euclidean ablations.

We evaluate \befond on synthetic data with known ground-truth features using \synthsaebench \citep{chanin2026synthsaebench}, and on language-model representations using \saebench \citep{karvonen2025saebench}. Baselines include amortized SAEs---BatchTopK \citep{bussmann2024batchtopk}, Matryoshka BatchTopK \citep{bussmann2025learning}, and JumpReLU \citep{rajamanoharan2024jumping}---and an iterative mean-field variational baseline with Adam decoder learning \citep{kingma2015adam}.

On Gemma-2-2B layer-12 activations, \befond outperforms all evaluated baselines on both sparse probing \citep{kantamneni2025sparse} and RAVEL disentanglement \citep{huang2024ravel}. \befond's performance continues to improve as we increase dictionary width from $16$k to $524$k latents, whereas the baselines plateau (\cref{fig:gemma}b,d). These gains also emerge with substantially less training data: at a dictionary width of $524$k, \befond matches the evaluated Gemma Scope JumpReLU checkpoints \citep{lieberum2024gemma} in sparse probing after just $\sim$6M training tokens, compared with the 8B tokens used to train those checkpoints, achieving over 1{,}000$\times$ reduction in training tokens (\cref{fig:gemma}a).

Summary of our contributions and paper outline:

\textbf{(1)} We review necessary background (\cref{sec:background}). We then extend FOND \citep{vafaii2025brain} to unify \enc{inference} and \dec{learning} as natural-gradient flows on variational free energy, instantiate it as Bernoulli-distributed \befond, and derive its dynamics in closed-form (\cref{sec:theory} and \cref{app:full-model}).

\textbf{(2)} We analyze \befond's recurrent ``explaining away'' (\cref{sec:theory_inference}) and use ablations to establish its contribution to feature recovery under increasing superposition (\cref{app:superposition-results}).

\textbf{(3)} We characterize how Fisher preconditioning in \befond's learning rule compensates for feature rarity (\cref{sec:theory_learning}), and test its consequences for dictionary learning through theoretical analysis (\cref{thm:exact-rarity-compensation}), synthetic experiments, and optimizer ablations (\cref{app:learning-geometry,app:geometry-results,app:full-width-results}).

\textbf{(4)} We evaluate \befond on language-model representations, examining feature quality, sample efficiency, and scaling with dictionary width (\cref{sec:experiments}).

\section{Background and Related Work}
\label{sec:background}
\paragraph{Notation.}
Bold lowercase and uppercase symbols denote vectors and matrices, such as $\ba$ and $\bA$. \enc{Inference} and \dec{learning} quantities appear in \enc{red} and \dec{blue}, respectively.

\paragraph{Sparse coding.}
Classical sparse coding \citep{olshausen1996emergence,olshausen1997sparse,
lewicki2000learning,aharon2006rm,rozell2008sparse,
gregor2010learning,mairal2010online} models each observation $\bx \in \RR^M$ as a linear combination of dictionary atoms with sparse coefficients $\enc{\bz}$: 
\begin{equation}\label{eq:linear-generative-model}
    \bx \;=\;
    \dec{\bPhi} \enc{\bz} + \bepsilon~,
    \qquad \bepsilon \sim \cN(\bm{0}, \bI_M)~,
\end{equation}
The \textit{dictionary} $\dec{\bPhi}\in\RR^{M\times K}$ has \textit{atoms} $\dec{\bPhi}_{\cdot i}\equiv\dec{\bphi}^{(i)}$ as its columns.

We consider \textit{overcomplete} dictionaries, whose atoms outnumber the input dimensions ($K>M$), and therefore cannot all be mutually orthogonal, making some overlap between atoms unavoidable.

\dec{Learning} estimates the dictionary from data, while \enc{inference} finds sparse codes that explain each input. These processes are coupled because inference uses the current dictionary, and dictionary updates depend on the inferred codes. In the overcomplete setting, \textit{superposition}---overlap between atoms---makes sparse code inference particularly challenging (see \cref{app:inference-challenge-superposition}).

\paragraph{Amortized versus iterative inference.} 
Standard SAEs learn an encoder
$\enc{\bz} = f_{\enc{\bphi}}(\bx)$ that predicts a sparse code
in one forward pass. These include ReLU
\citep{bricken2023monosemanticity,cunningham2024sparse,templeton2024scaling},
JumpReLU \citep{rajamanoharan2024jumping},
gated \citep{rajamanoharan2024improving},
TopK \citep{makhzani2014k,gao2025scaling},
BatchTopK \citep{bussmann2024batchtopk},
and Matryoshka BatchTopK \citep{bussmann2025learning} SAEs.
Training this mapping across samples \textit{amortizes} inference, replacing a separate test-time optimization for each input.

Iterative sparse coding instead refines the code for each observation \citep{rozell2008sparse,beck2009fast,fang2022learning}. In continuous time, this takes the schematic form, $\dot{\enc{\bz}} = g(\enc{\bz},\dec{\bPhi},\bx)$. Iterative inference has yielded mixed results on language-model representations \citep{chanin2025training,sun2026price}. See \cref{app:iterative sparse} for further discussion. 

%

\paragraph{Variational inference and free energy.}
A generative model combines a prior with a likelihood:
\begin{equation}\label{eq:background-probabilistic-model}
    \underbracegray{
        p_{\dec{\btheta}}(\bx,\bz)
        }{\text{joint distribution}}
    \;=
    \underbracegray{
        p_{\dec{\btheta}}(\bx\mid\bz)
        }{\text{conditional likelihood}}
    \;
    \underbracegray{
        p_{\dec{\btheta}}(\bz)
        }{\text{prior}}~,
    \qquad\quad
    \text{e.g., }\;\;
    p_{\dec{\btheta}}(\bx\mid\bz)
    \;\;=
    \underbracegray{
        \cN(\bx;\dec{\bPhi}\bz,\bI_M)
        }{\text{probabilistic form of \cref{eq:linear-generative-model}}
    }.
\end{equation}
Bayes' rule gives $p_{\dec{\btheta}}(\bz\mid\bx)=p_{\dec{\btheta}}(\bx,\bz)/p_{\dec{\btheta}}(\bx)$. Computing the evidence $p_{\dec{\btheta}}(\bx)$ requires summing or integrating over latent configurations, which is generally intractable for large overcomplete models. \textit{Variational inference} \citep[\cref{app:free-energy-derivation};][]{jordan1999introduction,blei2017variational} turns this problem into optimization: choose an \textit{approximate posterior} $q_{\enc{\bu}}(\bz\mid\bx)$ and optimize its parameters $\enc{\bu}$. For example, $\enc{\bu}$ can be Gaussian means and variances, or Bernoulli logits. Define the \textit{variational free energy}:
\begin{equation}\label{eq:background-free-energy}
    \cF(\bx; \enc{\bu}, \dec{\btheta})
    \;\coloneqq\;
    \expect{q_{\enc{\bu}}}{-\log\frac{p_{\dec{\btheta}}(\bx,\bz)}{q_{\enc{\bu}}(\bz\mid\bx)}}
    \;=\;
    -\expect[\Big]{q_{\enc{\bu}}}{\log p_{\dec{\btheta}}(\bx\mid\bz)}
    +
    \kl[\Big]{q_{\enc{\bu}}(\bz\mid\bx)}{p_{\dec{\btheta}}(\bz)}~.
\end{equation}
Free energy is the same quantity as the negative evidence lower bound, $\cF=-\mathrm{ELBO}$. Its terms balance explaining the observation against departing from the prior. The connection between free-energy minimization and posterior inference follows from the folloing identity:
\begin{equation}\label{eq:background-free-energy-posterior}
    \cF(\bx; \enc{\bu},\dec{\btheta})
    \;=\;
    \kl{q_{\enc{\bu}}(\bz\mid\bx)}{p_{\dec{\btheta}}(\bz\mid\bx)}
    \;-\;
    \log p_{\dec{\btheta}}(\bx)~.
\end{equation}
This shows that, with $\bx$ and $\dec{\btheta}$ fixed, reducing $\cF$ reduces the KL divergence to the true posterior because the evidence is independent of $\enc{\bu}$ (\cref{app:free-energy-derivation}). An amortized encoder can predict $\enc{\bu} = f_\enc{\phi}(\bx)$, as in VAEs \citep{kingma2014auto}, or inference can optimize it separately for each input:
\begin{equation}\label{eq:inference-as-optimization}
    \enc{\bu}^*(\bx)
    \;=\;
    \argmin{\enc{\bu}}
    \cF(\bx; \enc{\bu}, \dec{\btheta})~.
\end{equation}
Holding $\bx$ and $\dec{\btheta}$ fixed, write the objective as $\cF(\enc{\bu})$. Starting from some initialization $\enc{\bu}_0$, gradient descent with step size $\eta>0$ gives:
\begin{equation}
    \enc{\bu}_{t+1}
    \;=\;
    \enc{\bu}_{t}
    \;-\;
    \eta\,\nabla_{\enc{\bu}}\cF(\enc{\bu}_t)~.
\end{equation}
This update measures movement using Euclidean distance in parameter space (\cref{app:trust-region}). However, equal-sized parameter changes can produce different changes in beliefs $q_\enc{\bu}$, and reparameterizing the same distributions can change the descent trajectory. Measuring movement between distributions instead leads to the geometry of their \textit{statistical manifold} \citep{amari2000methods}.

\paragraph{From gradient descent to natural gradients.}
Swapping the Euclidean geometry of gradient descent with the Fisher metric gives the \textit{natural-gradient} update:
\begin{equation}\label{eq:fond-ng-inference}
\boxed{
    \enc{\bu}_{t+1}
    \;=\;
    \enc{\bu}_{t}
    \;-\;
    \eta\,\bG_u^{-1}(\enc{\bu}_t)
    \nabla_{\enc{\bu}}\cF(\enc{\bu}_t)~.
}
\end{equation} 
where $\bG_u$ is the Fisher information matrix of the variational family:
\begin{equation}
    \bG_u(\enc{\bu})
    \;=\;
    \EE_{\bz\sim q_{\enc{\bu}}(\cdot\mid\bx)}
    \left[
        \big(\nabla_{\enc{\bu}}\log q_{\enc{\bu}}(\bz\mid\bx)\big)
        \big(\nabla_{\enc{\bu}}\log q_{\enc{\bu}}(\bz\mid\bx)\big)^\top
    \right]~.
\end{equation}
The Fisher matrix measures local statistical distinguishability, so the update accounts for how parameter changes affect the represented distribution \citep{amari1998natural,amari2000methods,amari2016information}. Natural gradients have a long history in variational inference, underpinning stochastic variational inference \citep{hoffman2013stochastic}, the Bayesian learning rule \citep{khan2023bayesian}, and Bayesian Online Natural Gradient \citep{jones2024bayesian}. Closed-form natural gradients need not be available, and computing the Fisher and applying its inverse can be expensive, motivating approximations \citep{martens2015optimizing,martens2020new}. Natural Reweighted Wake--Sleep uses natural gradients for generative and recognition networks \citep{varady2022natural}, and recent work preconditions TopK SAE decoder updates using a damped empirical code Gram matrix \citep{picozzi2026information}. Both approaches retain learned encoders, whereas \befond directly optimizes variational posterior parameters for each input through recurrent inference.

Building on this geometric perspective, \citet{vafaii2025brain} introduced FOND by prescribing natural-gradient flow on $\cF$ with respect to $\enc{\bu}$. A forward Euler discretization yields \cref{eq:fond-ng-inference}.

In the appendix, we motivate natural gradients using the standard \textit{trust region} argument \citep[\cref{app:trust-region};][]{martens2020new,conn2000trust,ollivier2017information}, as well as a broader ``physics of adaptation'' perspective (\cref{app:physics_of_adaptation}), inspired by the \textit{principle of general covariance} \citep{einstein1916grundlage}.

\section{FOND unifies inference and learning as natural gradient flow}
\label{sec:theory}
%
FOND \citep{vafaii2025brain} formulated inference as natural gradient descent on variational free energy (\cref{eq:fond-ng-inference}). But it stopped short of prescribing learning dynamics through the same principle.

The geometric argument applies equally to learning: just as $\enc{\bu}$ changes the approximate posterior $q_{\enc{\bu}}$, the generative parameters $\dec{\btheta}$ change the joint distribution $p_{\dec{\btheta}}(\bx,\bz)$. This motivates extending FOND by defining learning as natural-gradient flow on the same free energy with respect to $\dec{\btheta}$.

Thus, inference and learning are both treated as dynamical processes within a unified framework:

\namelabel{fond}
\boxit[title = {FOND: Free-energy Online Natural-gradient Dynamics}, halign = left, halign title = center]{
\begingroup
\renewcommand{\tabularxcolumn}[1]{m{#1}}

\noindent
\begin{tabularx}{\linewidth}{
    @{} >{\bfseries\raggedright\arraybackslash}m{0.25\linewidth}
    @{\hspace{5mm}} X @{}
}
\hspace{6mm}
Flexible choices:
&
Choose a generative model $p_{\dec{\btheta}}(\bx, \bz)$, an inference model $q_{\enc{\bu}}(\bz \mid \bx)$,
and calculate the variational free energy
$\cF(\enc{\bu}, \dec{\btheta})$.
\\[3.5mm]
\hspace{-2mm}
FOND's prescription:
&
Define inference and learning as natural gradient flows on $\cF$:
\end{tabularx}

\medskip
\noindent
\begin{tabularx}{\linewidth}{
    @{} >{\centering\arraybackslash}X
    >{\centering\arraybackslash}X @{}
}
\textbf{Inference:}\quad
$\displaystyle
    \dot{\enc{\bu}}
    \,\coloneqq\,
    -\bG_u^{-1}(\enc{\bu})\,
    \pderiv[\cF(\enc{\bu}, \dec{\btheta})]{\enc{\bu}}~,
$
&
\textbf{Learning:}\quad
$\displaystyle
    \dot{\dec{\btheta}}
    \,\coloneqq\,
    -\bG_\theta^{-1}(\dec{\btheta})\,
    \pderiv[\cF(\enc{\bu}, \dec{\btheta})]{\dec{\btheta}}~.
$
\end{tabularx}
\endgroup
}

\subsection{\befond: a Bernoulli-distributed iterative sparse coding model}
\label{sec:theory-befond}
We instantiate our \hyperref[fond]{extended FOND} by following its three steps: choose the distributions, compute the free energy, and derive the dynamics.

\paragraph{Distribution specifications.}
A generative model requires a prior and a conditional likelihood (\cref{eq:background-probabilistic-model}). We choose a factorized Bernoulli prior and a linear Gaussian decoder, together with a factorized Bernoulli variational posterior:
\begin{center}
\begingroup
\setlength{\tabcolsep}{9pt}
\renewcommand{\arraystretch}{1.35}
\newcommand{\befondmodeleqnumber}[1]{%
  \refstepcounter{equation}\label{#1}\textup{(\theequation)}%
}
\begin{tabular}{@{}c|c|c@{}}
\toprule
\textit{Likelihood}\enspace $p_{\dec{\btheta}}(\bx\mid\bz)$
& \textit{Prior}\enspace $p_{\dec{\btheta}}(\bz)$
& \textit{Posterior}\enspace $q_{\enc{\bu}}(\bz\mid\bx)$ \\
\midrule
$\displaystyle\cN(\bx;\dec{\bPhi}\bz,\dec{\bSigma})$
\enspace\befondmodeleqnumber{eq:befond-likelihood}
& $\displaystyle\prod_{i=1}^{K}\operatorname{Bernoulli}(z_i;\dec{\pi}_{0,i})$
\enspace\befondmodeleqnumber{eq:befond-prior}
& $\displaystyle\prod_{i=1}^{K}\operatorname{Bernoulli}(z_i;\enc{r}_i)$
\enspace\befondmodeleqnumber{eq:befond-posterior} \\ \addlinespace[-2mm]
$\dec{\bSigma}$ {\small diagonal and positive}
& $\dec{\bpi}_0 = \sigma(\dec{\bu}_0)$
& $\enc{\br} = \sigma(\enc{\bu})$ \\
\bottomrule
\end{tabular}
\endgroup
\end{center}
Here $\bz\in\{0,1\}^{K}$, and $\sigma(\cdot)$ is the logistic sigmoid. The generative parameters are $\dec{\btheta}\equiv(\dec{\bPhi},\dec{\bSigma},\dec{\bu}_0)$, and the variational parameters $\enc{\bu}$ are the posterior logits. 


\paragraph{Free energy.}
Define $\be \coloneqq \bx-\dec{\bPhi}\enc{\br}$ as the mean reconstruction residual, $\dec{\bW}  \coloneqq \dec{\bPhi}^{\top}\dec{\bSigma}^{-1}\dec{\bPhi}$ as the precision-weighted Gram matrix, and $\enc{\bC} \coloneqq \diag[\enc{\br}\odot(\bm{1}-\enc{\br})]$ as the posterior covariance. We weight the prior KL by $\beta>0$, with $\beta=1$ recovering the usual variational free energy. Because of the linear Gaussian assumption (\cref{eq:befond-likelihood}), the log-likelihood expectation can be evaluated exactly (\cref{app:befond-free-energy}):
\begin{align}
    \cF(\bx;\enc{\bu},\dec{\btheta})
    \;\;=\;\;{}& \frac12\be^{\top}\dec{\bSigma}^{-1}\be
    \;+\; \frac12\Tr(\dec{\bW}\enc{\bC})
    \;+\; \frac12\log\det\dec{\bSigma}\notag\\
    &\quad+\;\beta
    \sum_{i=1}^{K}\left[
        \enc{r}_i\log\frac{\enc{r}_i}{\dec{\pi}_{0,i}}
        +(1-\enc{r}_i)\log\frac{1-\enc{r}_i}{1-\dec{\pi}_{0,i}}
    \right].
    \label{eq:befond-free-energy}
\end{align}
\textbf{Dynamics.}  
Applying \hyperref[fond]{FOND's natural-gradient prescription}, $\dot{\enc{\bu}} \coloneqq -\bG_u^{-1}(\enc{\bu})\nabla_{\enc{\bu}}\cF(\enc{\bu}, \dec{\btheta})$ and $\dot{\dec{\btheta}} \coloneqq -\bG_\theta^{-1}(\dec{\btheta})\nabla_{\dec{\btheta}}\cF(\enc{\bu}, \dec{\btheta})$, with the Fisher metrics of the variational posterior and generative joint distribution, yields the following closed-form inference and learning dynamics (\cref{app:befond-flow-derivations}):
\begin{center}
\begingroup
\setlength{\tabcolsep}{4pt}
\renewcommand{\arraystretch}{1.4}
\newcommand{\befondfloweqnumber}[1]{%
  \refstepcounter{equation}\label{#1}\textup{(\theequation)}%
}
\begin{tabular}{@{}l@{\hspace{1em}}c|c|c@{}}
\toprule
\textit{Inference}
& \multicolumn{3}{c@{}}{%
  $\displaystyle
    \dot{\enc{\bu}}
    \;\;=\;
    \underbracegray{
        \dec{\bPhi}^{\top}\dec{\bSigma}^{-1}\bx
    }{\text{feedforward drive}}
    \,-\,
    \underbracegray{
        \dec{\bW} \enc{\br}
    }{\substack{\text{recurrent}\\\text{interactions}}}
    \,-\,
    \underbracegray{\tfrac{1}{2}\diag(\dec{\bW})\odot(\bm{1}-2\enc{\br})}{\text{variance penalty}}
    \,-\,
    \underbracegray{\beta(\enc{\bu}-\dec{\bu}_0)}{\text{homeostatic leak}}
  $\qquad\befondfloweqnumber{eq:befond-inference-flow}%
} \\[3pt]
\midrule
\textit{Learning}
& $\displaystyle
    \dot{\dec{\bPhi}}
    = \eta\ssub{\Phi}\bigl(\bx\enc{\br}^{\top}-\dec{\bPhi}\enc{\bE}\bigr)\dec{\bE}_p^{-1}
  $\enspace\befondfloweqnumber{eq:befond-dictionary-flow}
& $\displaystyle
    \dot{\dec{\bSigma}}
    = \eta\ssub{\Sigma}(\enc{\bV}-\dec{\bSigma})
  $\enspace\befondfloweqnumber{eq:befond-variance-flow}
& $\displaystyle
    \dot{\dec{\bpi}}_0
    = \eta_0\beta(\enc{\br}-\dec{\bpi}_0)
  $\enspace\befondfloweqnumber{eq:befond-prior-flow} \\[2pt]
\bottomrule
\end{tabular}
\endgroup
\end{center}
Here $\eta\ssub{\Phi},\eta\ssub{\Sigma},\eta_0>0$ set the learning timescales, and prior learning is expressed in mean coordinates $\dec{\bpi}_0=\sigma(\dec{\bu}_0)$. The posterior and prior second moments are:
\begin{equation}
\enc{\bE}
\;=\;
\enc{\bC} \,+\, \enc{\br}\enc{\br}^{\top}
\;\equiv\;
\diag[\enc{\br}\odot(\bm{1}-\enc{\br})] \,+\, \enc{\br}\enc{\br}^{\top}~,
\qquad
\dec{\bE}_p
\;=\;
\diag[\dec{\bpi}_0\odot(\bm{1}-\dec{\bpi}_0)] 
\,+\, \dec{\bpi}_0\dec{\bpi}_0^{\top}~,
\end{equation}
and $\enc{\bV}$ is diagonal, with $\enc{V}_{\! mm} = e_m^2+(\dec{\bPhi}\enc{\bC}\dec{\bPhi}^{\top})_{mm}$, the expected squared reconstruction error in each observed dimension. Learning holds the inferred posterior fixed. For minibatches, we average $\bx\enc{\br}^{\top}$, $\enc{\bE}$, $\enc{\bV}$, and $\enc{\br}$ before applying the learning rules. These equations keep the prior fixed during inference. Our implementation uses a rolling-reference variant (\cref{app:online-prior-working-memory}).

\paragraph{Time discretization.}
Turning continuous flows into an implementable algorithm requires time discretization, which can affect stability. We use exponential integration for both \enc{inference} \citep[ETD1;][]{cox2002exponential,hochbruck2010exponential} and \dec{learning} (including matrix-free decoder updates; \cite{almohy2011computing,trefethen2019approximation}) to preserve linear relaxation within each step. See \cref{app:numerical-implementation} for derivations and \cref{alg:fond} for the full discretized algorithm.

\section{Theoretical analyses, synthetic experiments, and ablations}
\label{sec:mechanistic-analysis}
We study \enc{inference} and \dec{learning} using theory and synthetic data with known features and activations. Ablations test lateral interactions and compare separately tuned natural and Euclidean recipes.

\subsection{\texorpdfstring{\enc{Inference}}{Inference}: \befond's recurrent term enables explaining away}
\label{sec:theory_inference}
Superposition mixes evidence from overlapping features \citep{elhage2022superposition,klindt2026unifying}, making inference of sparse codes difficult (\cref{app:inference-challenge-superposition}). Here, we show how \befond's iterative inference mitigates this interference through a mechanism known as \textit{explaining away}. As an example, wet grass could be explained by rain or a sprinkler. Observing a wet roof supports rain without changing the sprinkler's direct evidence from the grass. As rain explains
more of the wetness, the sprinkler becomes less likely.
This reduction in belief in one explanation through evidence
for another is known as \textit{explaining away} \citep{murphy2012machine}. The challenge is that the influence of one piece of evidence depends on the others. An encoder that compresses the input to one fixed projection cannot generally capture the full dependence, as the following proposition establishes.

\begin{proposition}[A limitation of single-layer inference]
\label{prop:one-layer-inference-K}
Suppose the data are generated by the Bernoulli–Gaussian model:
$$
\bx \;=\; \dec{\bPhi}\bz \,+\, \boldsymbol\epsilon~,
\qquad
z_j \overset{\mathrm{ind}}{\sim} \operatorname{Bernoulli}(\dec{\pi}_j)~,
\qquad
\boldsymbol\epsilon\sim\mathcal N(0,I)~,
$$
where $0<\dec{\pi}_j<1$ for every $j$, and the noise is independent of $\bz$.

Suppose a dictionary atom $\dec{\bphi}^{(i)}$ is neither orthogonal nor parallel to another atom $\dec{\bphi}^{(j)}$.
Then there exist no weights $\enc{\bw}_i$, bias $\enc{b}_i$,
and scalar function $f_i$ such that:
$$
f_i(\enc{\bw}_i^\top\bx+\enc{b}_i)
\,\;=\;\,
p(z_i=1\mid\bx)
\,\;=\;\,
\EE[z_i\mid\bx]~,
\qquad
\text{for every }\bx\in\mathbb R^d~.
$$
\end{proposition}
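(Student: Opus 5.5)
The plan is to argue by contradiction using level sets. Suppose $f_i(\enc{\bw}_i^\top\bx+\enc{b}_i)=p(z_i=1\mid\bx)$ for all $\bx$. Then the posterior $g(\bx)\coloneqq p(z_i=1\mid\bx)$ is a function of the single linear statistic $\enc{\bw}_i^\top\bx$. Consequently $g$ is constant along every direction $\bv\perp\enc{\bw}_i$. When $d\ge 2$ this orthogonal complement is nontrivial. When $d=1$, two nonzero atoms are automatically parallel, so the hypothesis already rules this case out. The strategy is to compute $g$ explicitly, show that it is real-analytic, and find a direction $\bv$ along which it cannot be constant.

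\textbf{Step 1: explicit posterior.} Write $\bz_{-i}$ for the remaining latents. By Bayes' rule with Gaussian noise,
\[
\frac{g(\bx)}{1-g(\bx)}=\frac{\dec{\pi}_i}{1-\dec{\pi}_i}\,
\frac{\sum_{\bz_{-i}}p(\bz_{-i})\exp\!\big(-\tfrac12\|\bx-\dec{\bphi}^{(i)}-\dec{\bPhi}_{-i}\bz_{-i}\|^2\big)}{\sum_{\bz_{-i}}p(\bz_{-i})\exp\!\big(-\tfrac12\|\bx-\dec{\bPhi}_{-i}\bz_{-i}\|^2\big)}.
\]
After cancelling $e^{-\|\bx\|^2/2}$, the log-odds become
\[
L(\bx)=c+\bx^\top\dec{\bphi}^{(i)}+\log\frac{\sum_{S}\alpha_S\, e^{\bx^\top \bs_S-\bs_S^\top\dec{\bphi}^{(i)}}}{\sum_S\alpha_S\, e^{\bx^\top\bs_S}},
\]
where $S$ ranges over subsets of the other atoms, $\bs_S$ is the corresponding sum of atoms, and $\alpha_S>0$. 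If $g$ is constant along $\bv$, then $L$ is constant along $\bv$.

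\textbf{Step 2: choose the direction and reduce to a one-dimensional identity.} Take $\bv\perp\enc{\bw}_i$. Evaluate the restriction $t\mapsto L(\bx_0+t\bv)$ and require it to be constant for every base point $\bx_0$. Both the numerator and the denominator are finite exponential sums in $t$ with frequencies $\bv^\top\bs_S$. Constancy of $L$ means the numerator equals $e^{\text{const}-t\,\bv^\top\dec{\bphi}^{(i)}}$ times the denominator. Exponentials with distinct real frequencies are linearly independent, so I would match coefficients after grouping terms with equal frequency, using the freedom in $\bx_0$ to keep the groups generic. The two-atom sub-case, $z_j$ alone, shows the mechanism. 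There $L=c'+\bx^\top\dec{\bphi}^{(i)}-\log\!\big(\tfrac{\alpha+e^{\bx^\top\dec{\bphi}^{(j)}-\rho}}{\alpha+e^{\bx^\top\dec{\bphi}^{(j)}}}\big)$ with $\rho=\dec{\bphi}^{(i)\top}\dec{\bphi}^{(j)}\neq0$. This is affine in $\bx^\top\dec{\bphi}^{(i)}$ plus a nonconstant (since $\rho\neq 0$) strictly monotone function of $\bx^\top\dec{\bphi}^{(j)}$. It depends on a single linear form only if $\dec{\bphi}^{(i)}\parallel\dec{\bphi}^{(j)}$, which is excluded.

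\textbf{Step 3: general case, the main obstacle.} With many atoms, cancellations among frequencies could in principle conspire. I would handle this by choosing $\bv$ cleverly. Since $\dec{\bphi}^{(i)}\nparallel\dec{\bphi}^{(j)}$, we can pick $\bv\perp\enc{\bw}_i$ such that, within the coefficient-matching argument, the extremal frequency $\max_S \bv^\top\bs_S$ is attained uniquely by a set containing $j$ but excluding some competitor. An alternative is to look at asymptotics as $t\to\pm\infty$: $L$ tends to affine functions in $t$ with slopes determined by the extremal sets. Constancy forces these slopes to vanish, and this gives $\bv^\top\dec{\bphi}^{(i)}=0$ for every admissible $\bv$, hence $\enc{\bw}_i\parallel\dec{\bphi}^{(i)}$. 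With $\bv\perp\dec{\bphi}^{(i)}$ fixed, we then have $\bv^\top\dec{\bphi}^{(j)}\neq0$ by non-parallelism. Next, move the base point along $\dec{\bphi}^{(i)}$ so that the $j$-term dominates the sums. The sub-leading correction then carries a factor $e^{-\rho}\neq1$ because $\rho\neq0$ by non-orthogonality, which makes $L$ vary in $t$, a contradiction. The delicate bookkeeping is to make sure genericity in $\bx_0$ rules out accidental ties among the other atoms. I expect this to require a careful but elementary Vandermonde-type independence argument.
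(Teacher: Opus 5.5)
Your Step 1, the two-atom case, and the first half of Step 3 are sound. In the two-atom formula the log-ratio enters with a plus sign, and what you actually need is that the $\bphi^{(j)}$-term is non-affine; this holds because it is bounded and nonconstant. Your asymptotic route to $\bw_i\parallel\bphi^{(i)}$ works and is simpler than the paper's limiting-gradient argument. The log-ratio in $L$ is a weighted average of $e^{-\bs_S^\top\bphi^{(i)}}$ inside a logarithm, so it is bounded by $\max_S|\bs_S^\top\bphi^{(i)}|$. Constancy along $\bv$ therefore forces $\bv^\top\bphi^{(i)}=0$.

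The gap is the second half of Step 3, which is where the contradiction must actually be produced. Shifting the base point by $\mu\bphi^{(i)}$ multiplies the $S$-term by $e^{\mu\,\bs_S^\top\bphi^{(i)}}$. That favors whichever sets overlap most with $\bphi^{(i)}$, not $\{j\}$. More fundamentally, the ties you hope genericity will break are structural, not accidental:
\begin{itemize}
\item an atom $\bphi^{(k)}=\lambda_k\bphi^{(j)}$ moves in lockstep with $\bphi^{(j)}$ along every line;
\item any set whose atom sum differs from $\bphi^{(j)}$ by a multiple of $\bphi^{(i)}$ has the same frequency $\bv^\top\bs_S$ for every $\bv\perp\bphi^{(i)}$.
\end{itemize}
No choice of $\bx_0$ or $\bv$ separates these terms. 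So the claim that a lone $e^{-\rho}$ correction survives is unjustified, and a Vandermonde-type independence argument cannot supply it. Several such terms could a priori offset one another.

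The missing idea, which is how the paper closes the argument, is to freeze everything except the exact parallel class $Q=\{k:\bphi^{(k)}=\lambda_k\bphi^{(j)}\}$ and then use a sign argument inside $Q$.
\begin{enumerate}
\item Take $\bd\perp\bphi^{(j)}$ orthogonal to no atom outside $Q$, and $\bv\perp\bphi^{(i)}$ with $\bv^\top\bphi^{(j)}\neq0$.
\item Along $\bx_0+\tau\bv+t\bd$ with $t\to\infty$, every bit outside $Q\cup\{i\}$ becomes certain independently of $z_i$. Its coupling to $z_i$ then contributes only an additive constant.
\item Your constancy condition therefore passes to the limit as $D(s+\tau\,\bv^\top\bphi^{(j)})=D(s)$ for all $\tau$. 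Here $s=\bphi^{(j)\top}\bx_0$, and $D=\log(Z_1/Z_0)$ is the log-ratio of the conditional partition functions of the reduced model on $Q$. The paper phrases this as $\nabla L^\top\bv\to(\bphi^{(j)\top}\bv)D'(s)$. So $D$ would have to be constant.
\item Letting $s\to\pm\infty$ switches each bit in $Q$ on or off according to the sign of $\lambda_k$. Because $W_{ik}=\lambda_kW_{ij}$, all contributions carry the same sign:
\[
D(+\infty)-D(-\infty)=-W_{ij}\sum_{k\in Q}|\lambda_k|\neq0,
\]
using non-orthogonality, $W_{ij}\neq0$.
\end{enumerate}
This no-cancellation identity within the parallel class is exactly what your plan lacks.
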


For almost
every dictionary with $K,M\ge2$, no latent falls under the exception, so for generic dictionaries a feature’s exact posterior probability cannot be computed from one fixed input projection followed by a scalar nonlinearity. The proof shows that overlapping, nonparallel atoms make the posterior log-odds gradient change direction with the input, which is impossible for a function of one fixed projection (\cref{app:proof-one-layer-inference-K}). This result generalizes \citet{oneill2025compute} and covers ReLU \citep{bricken2023monosemanticity,cunningham2024sparse}, JumpReLU (including Gemma Scope) \citep{rajamanoharan2024jumping,lieberum2024gemma}, weight-sharing gated \citep{rajamanoharan2024improving}, and fixed-threshold BatchTopK and Matryoshka BatchTopK SAEs \citep{bussmann2024batchtopk,bussmann2025learning}, all of which compute each latent from a single projection of the input (\cref{app:proof-one-layer-inference-K}). TopK SAEs fall outside this class, since whether a latent fires depends on every other pre-activation.

\befond's inference dynamics (\cref{eq:befond-inference-flow}) couple explanations through the off-diagonal entries of $-(\dec{\bPhi}^{\top}\dec{\bPhi})\enc{\br}$: positively overlapping features suppress one another's activation drive. A wet roof strengthens rain, which explains the lawn's wetness and suppresses the sprinkler. Repeated updates make this competition depend on the full observation (\cref{fig:explaining-away}).

\begin{figure}[!t]
    \centering
    \includegraphics[width=\linewidth]{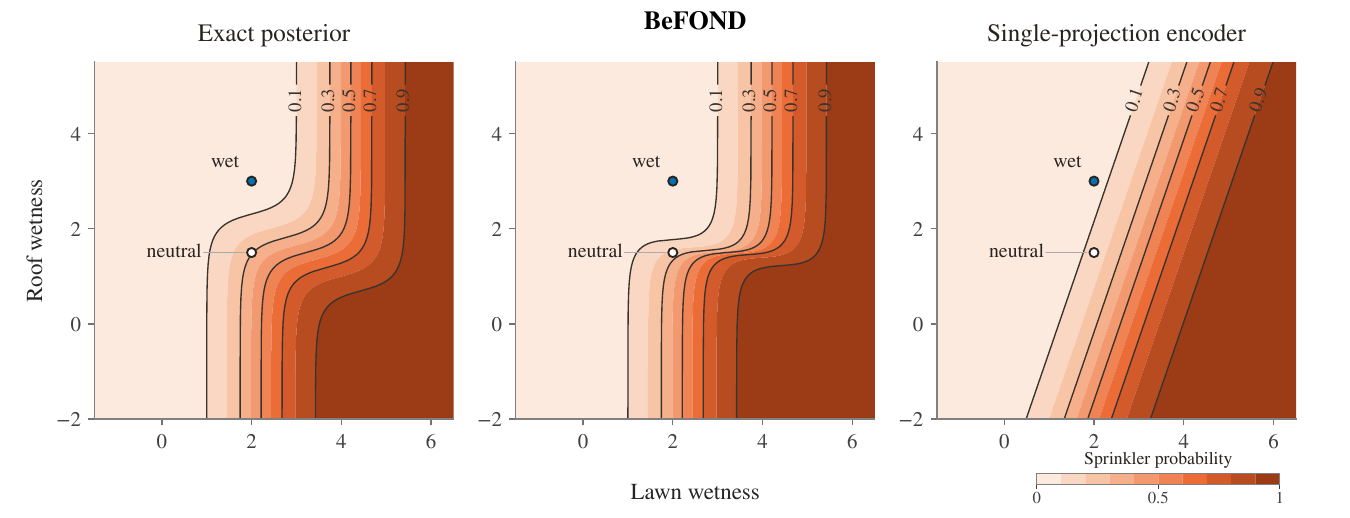}
    \caption{Heatmaps show sprinkler probabilities under exact inference, converged \befond with a fixed prior, and a numerically optimized single-projection encoder. The marked observations have identical lawn wetness but different roof evidence: neutral (white, halfway dry / wet) or wet (blue). Fixed-prior inference and encoder fitting are detailed in \cref{app:explaining-away-example}.}
\label{fig:explaining-away} 
\end{figure}

\Cref{app:proof-befond-kl} gives an approximate-inference
counterpart to \cref{prop:one-layer-inference-K}
and sufficient conditions under which \befond can achieve
strictly lower expected posterior KL than the best possible
single-projection amortized encoder. We now empirically quantify the effects of explaining away.

\paragraph{Explaining away under superposition.} 
\synthsaebench \citep{chanin2026synthsaebench} makes this problem measurable by providing synthetic data with known atoms and activations, together with feature correlations. Superposition is measured as the mean maximum (mm) absolute cosine similarity:
\begin{equation}
    \rho_\mathrm{mm}
    \;=\;
    \frac{1}{K_\star}\sum_{i=1}^{K_\star}
    \max_{j\ne i}
    \frac{
    \left|(\dec{\bphi}_{\star}^{(i)})^\top
    \dec{\bphi}_{\star}^{(j)}\right|
    }{
    \|\dec{\bphi}_{\star}^{(i)}\|\,
    \|\dec{\bphi}_{\star}^{(j)}\|
    }~,
\end{equation}
where $\dec{\bphi}_{\star}^{(i)}$ are the $K_\star$
ground-truth feature directions.

Larger $\rho_\mathrm{mm}$ means that each feature has, on average,
a more strongly overlapping alternative. Reducing the input dimension at fixed feature count increases this overlap, modeling the interference created when many features share a real network's activation space.

\begin{wrapfigure}[22]{R}{0.55\textwidth}
    \centering
    \includegraphics[width=\linewidth]{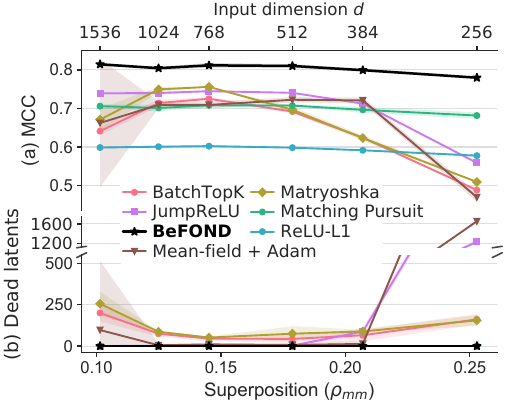}
    \caption{\befond maintains the highest recovery, and without dead latents ($K=4{,}096$). Means and 95\% confidence intervals over 3 seeds. Exact values are reported in \cref{app:superposition-results} and \cref{tab:superposition-baselines}.}
    \label{fig:superposition}
\end{wrapfigure}

As overlap increases, a strong detection score becomes less
diagnostic of a feature's presence: another feature may
explain the same evidence. Explaining away addresses this
ambiguity by discounting evidence already accounted for
by competing causes.

We measure dictionary recovery using the \textit{mean correlation coefficient} (MCC), which quantifies alignment between learned and ground-truth feature directions. \befond achieves the highest MCC throughout the sweep, declining only modestly as superposition increases, while maintaining zero dead latents (\cref{fig:superposition}).  

\paragraph{Ablating lateral interactions.}
In \cref{app:superposition-results}, we investigate the impact of ablating lateral interactions on dictionary recovery. \Cref{fig:superposition-offdiag} shows that removing lateral interactions reduces dictionary recovery as feature overlap increases. Restoring suppressive interactions substantially improves recovery, whereas excitatory-only inference activates nearly the entire dictionary without comparable recovery.

\subsection{\texorpdfstring{\dec{Learning}}{Learning}: \befond's natural gradient flow improves rare feature learning}
\label{sec:theory_learning}

Features extracted from language models vary widely in activation frequency \citep{gao2025scaling}, and larger SAE dictionaries recover increasingly rare concepts~\citep{bricken2023monosemanticity,
templeton2024scaling}. Scaling dictionary width therefore places greater demands on learning from rare examples: additional neurons must capture infrequent features to expand the model's coverage.

\paragraph{Learning an infrequently active feature.}
For an isolated feature active with probability $\rho$, averaging
over inputs scales the decoder gradient by $\rho$.
The same decoder error therefore produces a smaller update
for a rarer feature. Matched Fisher preconditioning divides
out this frequency factor.
\begin{theorem}[Exact rarity compensation]
\label{thm:exact-rarity-compensation}
For $p(\bx\mid\bz) = \cN(\bx; \dec{\bPhi}\bz,\bI)$,
fix a training minibatch and hold its inferred distributions
and prior references fixed. Let $\dec{\bPhi}^*$ minimize
$\cF$. If the averaged second moments satisfy
$\dec{\bE}_p = \enc{\bE} \succ 0$, then:
$$
\begin{aligned}
\textit{Euclidean gradient :}\;\quad
\dot{\dec{\bPhi}}
&\,\;=\;\,
-\eta\ssub{\Phi}\nabla_{\dec{\bPhi}}\cF
\,\;=\;\,
-\eta\ssub{\Phi}(\dec{\bPhi}-\dec{\bPhi}^*)\enc{\bE}~,
\\[3pt]
\textit{Natural gradient :}\;\quad
\dot{\dec{\bPhi}}
&\,\;=\;\,
-\eta\ssub{\Phi}\bG_\Phi^{-1}\nabla_{\dec{\bPhi}}\cF
\,\;=\;\,
-\eta\ssub{\Phi}(\dec{\bPhi}-\dec{\bPhi}^*)~,
\end{aligned}
$$
where $\bG_\Phi=\dec{\bE}_p\otimes\bI$ and $\eta\ssub{\Phi}>0$.

Natural flow reduces every decoder error by $e^{-\eta\ssub{\Phi}\Delta t}$ over an interval $\Delta t\ge0$. For the same fractional error reduction, Euclidean flow takes $\kappa(\enc{\bE}) \ge \rho_{\max}/\rho_{\min}$ times as long in its slowest eigendirection as in its fastest, where $\rho_i=\enc{\bE}_{ii}$ is the minibatch-averaged posterior activation probability.
\end{theorem}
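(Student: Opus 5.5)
We start from the free energy in \cref{eq:befond-free-energy} with $\dec{\bSigma}=\bI$, averaged over the fixed minibatch. Only the first two terms depend on $\dec{\bPhi}$: $\frac12\overline{\|\bx-\dec{\bPhi}\enc{\br}\|^2}+\frac12\Tr(\dec{\bPhi}^\top\dec{\bPhi}\enc{\bC})$. Expanding the square, this is a quadratic in $\dec{\bPhi}$:
\[
\cF(\dec{\bPhi})=\mathrm{const}-\Tr\bigl(\dec{\bPhi}^\top\,\overline{\bx\enc{\br}^\top}\bigr)+\tfrac12\Tr\bigl(\dec{\bPhi}\enc{\bE}\dec{\bPhi}^\top\bigr),
\]
where we use $\overline{\enc{\br}\enc{\br}^\top}+\overline{\enc{\bC}}=\enc{\bE}$. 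Differentiating gives $\nabla_{\dec{\bPhi}}\cF=\dec{\bPhi}\enc{\bE}-\overline{\bx\enc{\br}^\top}$.

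Since $\enc{\bE}\succ0$, the quadratic is strictly convex. Its unique minimizer is $\dec{\bPhi}^*=\overline{\bx\enc{\br}^\top}\enc{\bE}^{-1}$, so $\nabla_{\dec{\bPhi}}\cF=(\dec{\bPhi}-\dec{\bPhi}^*)\enc{\bE}$. This gives the Euclidean identity directly.

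For the natural gradient we take $\bG_\Phi=\dec{\bE}_p\otimes\bI$ as given; it is the Fisher of the joint $p(\bx,\bz)$ in $\mathrm{vec}(\dec{\bPhi})$, since the Gaussian score is $(\bx-\dec{\bPhi}\bz)\bz^\top$. With column-major vectorization, $\mathrm{vec}(\bA\bB)=(\bB^\top\otimes\bI)\mathrm{vec}(\bA)$, so applying $(\dec{\bE}_p\otimes\bI)^{-1}$ corresponds to right-multiplying by $\dec{\bE}_p^{-1}$. This matches \cref{eq:befond-dictionary-flow}. Hence $\bG_\Phi^{-1}\nabla\cF=(\dec{\bPhi}-\dec{\bPhi}^*)\enc{\bE}\dec{\bE}_p^{-1}=\dec{\bPhi}-\dec{\bPhi}^*$ under the matching hypothesis $\dec{\bE}_p=\enc{\bE}$.

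Because the posteriors and priors are held fixed, $\dec{\bPhi}^*$ is constant in time. The error $\bD=\dec{\bPhi}-\dec{\bPhi}^*$ therefore satisfies linear ODEs. Under the natural flow, $\dot{\bD}=-\eta\ssub{\Phi}\bD$, so $\bD(t+\Delta t)=e^{-\eta\ssub{\Phi}\Delta t}\bD(t)$ uniformly. Under the Euclidean flow, $\dot{\bD}=-\eta\ssub{\Phi}\bD\enc{\bE}$, so $\bD(t)=\bD(0)e^{-\eta\ssub{\Phi}\enc{\bE}t}$.

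Diagonalize $\enc{\bE}=\bQ\,\diag(\lambda_k)\bQ^\top$. Each component $\bD\bQ_{\cdot k}$ decays as $e^{-\eta\ssub{\Phi}\lambda_k t}$. Reaching a fixed fractional reduction $e^{-c}$ takes time $c/(\eta\ssub{\Phi}\lambda_k)$. The ratio of the slowest to the fastest time is therefore $\lambda_{\max}/\lambda_{\min}=\kappa(\enc{\bE})$.

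It remains to bound $\kappa$. For a Bernoulli posterior, $\enc{\bE}_{ii}=\overline{r_i(1-r_i)+r_i^2}=\overline{r_i}=\rho_i$. By the Rayleigh quotient with basis vectors, $\lambda_{\max}\ge\max_i\rho_i$ and $\lambda_{\min}\le\min_i\rho_i$, so $\kappa(\enc{\bE})\ge\rho_{\max}/\rho_{\min}$.

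The main point needing care is the natural-gradient step. We must confirm that the Kronecker ordering matches the vectorization convention, and that the Fisher is taken with respect to the prior second moment $\dec{\bE}_p$. This is why the matching hypothesis $\dec{\bE}_p=\enc{\bE}$ is needed for exact cancellation. Everything else is linear ODE solving plus an eigenvalue interlacing bound.
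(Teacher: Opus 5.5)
Your proposal is correct and follows essentially the same route as the paper's proof. Both expand the decoder-dependent quadratic to get $\nabla_{\dec{\bPhi}}\cF=(\dec{\bPhi}-\dec{\bPhi}^*)\enc{\bE}$, apply the complete-data Fisher $\dec{\bE}_p\otimes\bI$ as right multiplication by $\dec{\bE}_p^{-1}$, solve the linear error flows mode by mode in the eigenbasis of $\enc{\bE}$, and bound $\kappa(\enc{\bE})$ by Rayleigh quotients using $\enc{\bE}_{ii}=\rho_i$. The only cosmetic difference is that you start from the closed-form free energy with its explicit $\tfrac12\Tr(\dec{\bW}\enc{\bC})$ term, whereas the paper starts from $\tfrac12\EE_q\|\bx-\dec{\bPhi}\bz\|^2$, which is equivalent.
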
 

The rate difference becomes more pronounced as the frequency range
widens. If $\rho_i=c\,i^{-\alpha}$, with $\alpha>0$ and
$i=1,\ldots,K$, the slowest Euclidean direction takes at least
$K^\alpha$ times as long as the fastest to achieve the same fractional
error reduction. Under the theorem's matched-moment assumptions,
natural flow gives every direction the same rate.
For the proof and the extension to unequal prior and posterior second moments, see \cref{app:proof-exact-rarity-compensation}. \Cref{app:adam-conditioning} shows that, with known codes and frozen Adam normalization, an optimer used by many common SAE baselines \citep{bussmann2024batchtopk,rajamanoharan2024jumping}, Zipfian feature frequencies leave a local condition number of at least $K^{\alpha/2}$, whereas matched Fisher preconditioning reduces it to one.

\paragraph{Recovering rare features.}
The \textit{full-width} SynthSAEBench setting~\citep{chanin2026synthsaebench}
tests whether decoder capacity recovers the rare tail, with 16,384
learned latents for 16,384 generating features. This supplies one
decoder slot per true feature, so complete recovery requires representing
rare features alongside common ones. MCC measures recovery of the
feature directions, while Rare F1 measures detection of the $25\%$
least frequent generating features, averaging their activation F1 scores. \befond leads all metrics, exceeding the evaluated baselines
(\cref{fig:synthsae}; \Cref{tab:synthsae-final}).

\begin{figure}[t]
    \centering
    \includegraphics[width=\linewidth]{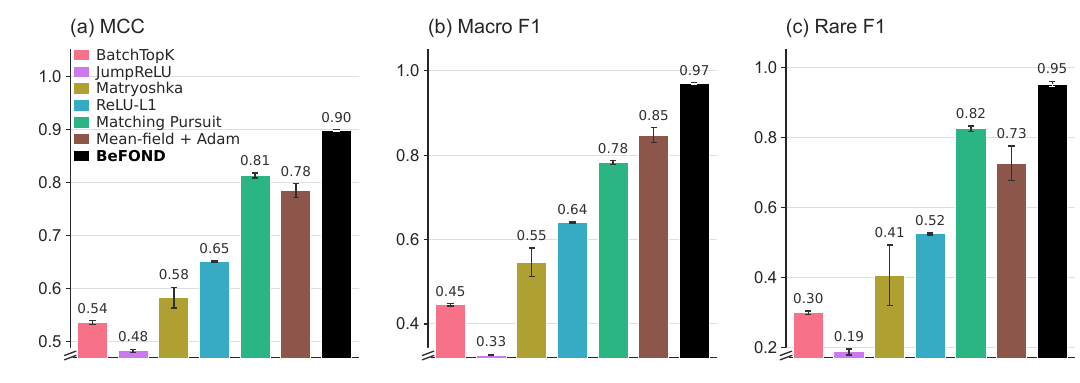} 
    \caption{\befond outperforms all evaluated baselines on dictionary recover (MCC) and feature detection (Macro F1, Rare F1) in \synthsaebench ($d=768$, $K=K_\star=16{,}384$). Rare F1 covers the least frequent quarter of features. Bars show means over 3 initialization seeds with 95\% confidence intervals, evaluated on $10^6$ held-out samples. \Cref{tab:synthsae-final} gives individual-seed results.}
    \label{fig:synthsae} 
\end{figure}

\paragraph{Ablating inference and learning geometry.}
The natural-inference/natural-learning recipe reaches stronger recovery
with fewer decoder updates than the separately tuned Euclidean variants
(\cref{fig:euclidean}; \cref{app:geometry-results}).
\Cref{app:learning-supplemental-results} reports learning curves
against the other baselines, and \Cref{tab:synthsae-final} reports
dead-latent counts alongside recovery.



\section{Evaluating \befond on language-model activations}
\label{sec:experiments}
We evaluate whether \befond's inference and learning benefits studied in \cref{sec:mechanistic-analysis} translate to improved feature quality in language-model representations. We focus on three questions: (1) whether \befond outperforms widely used SAE baselines on Gemma 2B scale activations, (2) whether these benefits increase with dictionary width, and (3) how early they emerge during training.

\paragraph{Experimental setup.}
We evaluate feature quality under a $50$M-token training budget,
alongside released Gemma Scope JumpReLU checkpoints trained on
$4$B tokens at $16$k width and $8$B tokens at widths $32$k--$524$k
\citep{lieberum2024gemma,rajamanoharan2024jumping}.
Our primary comparison trains \befond, BatchTopK
\citep{bussmann2024batchtopk}, Matryoshka BatchTopK
\citep{bussmann2025learning}, and a mean-field variational baseline
with Adam decoder learning \citep{kingma2015adam}.
These models use whitened Gemma-2-2B \citep{gemmateam2024gemma2} layer-12 residual-stream activations ($d=2{,}304$) extracted from Pile Uncopyrighted \citep{monology2023pile}, with dictionary widths from $16{,}384$ to $524{,}288$ latents. We report means and sample standard deviations over three \befond training seeds and five baseline training seeds, using $T=10$ inference steps for \befond and $T=20$ for mean-field at evaluation. Amortized baselines use the selected target sparsities in \cref{tab:gemma}. Gemma Scope checkpoints retain their original preprocessing. \Cref{app:gemma-full-results} provides further details.

\paragraph{Evaluation metrics.}
Reconstruction quality alone does not establish whether an SAE has learned useful features \citep{gao2025scaling,karvonen2025saebench}. We therefore evaluate two complementary capabilities using \saebench \citep{karvonen2025saebench}.

\textit{Sparse probing} \citep{kantamneni2025sparse} tests concept detection using a logistic probe restricted to $k=1$ or $k=5$ selected SAE features. We average task-level accuracy, ROC-AUC, and F1 across $113$ classification tasks. Accuracy measures correct predictions, ROC-AUC measures the ranking of positive versus negative examples, and F1 summarizes the balance between precision and recall.

\textit{RAVEL} \citep[Resolving Attribute–Value Entanglements in Language Models;][]{huang2024ravel} tests selective intervention on represented attributes. Its \textit{cause} score measures successful changes to a target attribute, while \textit{isolation} measures preservation of other attributes, and \textit{disentanglement} is the mean of these two scores.

Higher scores are better for all reported metrics. We report full numerical results in \cref{tab:gemma}.

\paragraph{Scaling with dictionary width.}
\befond achieves the highest mean RAVEL disentanglement at every
tested width, including comparisons with the evaluated Gemma Scope
checkpoints (\cref{fig:gemma}d).
At $524$k latents, its score is $0.827\pm0.001$, compared with
$0.748\pm0.004$ for BatchTopK, the strongest amortized baseline
trained on $50$M tokens (\cref{tab:gemma}).
This $7.9$ percentage-point advantage reflects improvements in
both cause and isolation: \befond scores $0.800$ and $0.854$,
respectively, versus $0.719$ and $0.776$ for BatchTopK.

Single-feature probing also favors \befond at large widths.
At $524$k, its accuracy is $0.789\pm0.002$, versus
$0.774\pm0.002$ for Matryoshka BatchTopK, an improvement of
$1.5$ percentage points.
Across widths from $16$k to $524$k, \befond's single-feature
accuracy increases from $0.771$ to $0.789$, and its RAVEL score
increases from $0.786$ to $0.827$ (\cref{fig:gemma}b,d).
Over the same range, BatchTopK and Matryoshka each improve by
approximately $0.6$ percentage points in single-feature accuracy.
Thus, increasing dictionary width produces larger gains for \befond than for either amortized baseline.

Differences between methods are smaller in five-feature probing accuracy, and the AUC and F1 rankings vary across widths (\cref{tab:gemma-scope-l12}).

\paragraph{High feature quality emerges early in training.}
At $524$k latents, \befond reaches the evaluated Gemma Scope
single-feature probing score by $\sim$6M training tokens (\cref{fig:gemma}a). The corresponding Gemma Scope checkpoints were trained on 8B tokens, a training-budget ratio of over 1{,}000$\times$. By 50M tokens, \befond exceeds their single-feature accuracy ($0.789$ versus $0.7735$) and RAVEL disentanglement ($0.827$ versus $0.7555$), using 160$\times$ fewer training tokens.

\paragraph{Comparison with iterative mean-field inference.}
The mean-field baseline performs iterative variational inference
and learns its decoder using Adam.
At $524$k, it reaches single-feature accuracy of
$0.751\pm0.007$ and RAVEL disentanglement of $0.685\pm0.029$,
significantly underperforming \befond on both metrics (\cref{tab:gemma}).

\section{Conclusion and Discussion}
\label{sec:discussion}
%

We formulate inference and dictionary learning as natural-gradient flows on a shared variational free energy (the FOND framework). Our theory establishes limitations of single-projection encoders and shows how Fisher preconditioning compensates for infrequent activation under frozen, matched moments. We introduce \befond, which improves synthetic feature recovery and supports increasing feature quality at larger dictionary widths on Gemma.

\paragraph{Limitations.}
Repeated inference steps and numerical decoder updates increase \befond's computational cost (\cref{app:complexity,app:training-runtime}). Future work will focus on improving computational efficiency, speeding up inference convergence by adding momentum, and extending the factorized variational family with pairwise interactions to represent dependencies between latent features.

On Gemma-2-2B layer-12 activations, \befond improves concept detection and selective intervention. Its gains increase with dictionary width while the evaluated baselines largely plateau. Together with recurrent explaining away and improved rare-feature recovery, these results show the practical value of taking the statistical geometry of inference and learning seriously.

The broader lesson is how adaptation dynamics should be derived:

\boxit[title = {\textit{Physics of Adaptation}: deriving covariant adaptation dynamics}, halign = center, halign title = center]{
\enc{Inference} and \dec{learning} dynamics should respect the intrinsic geometry\\and symmetries of the underlying model
(\cref{app:physics_of_adaptation}).
}

\section*{Code and data}
Our code, data, and model checkpoints are available here:  

\href{https://github.com/hadivafaii/BeFOND}{\color{purple}https://github.com/hadivafaii/BeFOND}

\subsection*{AI use statement}

We used generative AI tools to assist with mathematical derivations, software implementation and debugging, experimental analysis and visualization, literature searches, and manuscript drafting and editing. The authors are responsible for the final text, results, and claims.

\ificlrfinal

%

%

\fi


\bibliography{refs}
\bibliographystyle{iclr2027_conference}


\clearpage
\appendix



\addtocontents{toc}{\protect\setcounter{tocdepth}{2}}

\renewcommand{\contentsname}{%
    \texorpdfstring{%
        {\fontsize{17pt}{20pt}\selectfont
        Appendix: \papertitle}%
    }{Appendix: \papertitle}%
    \bigskip
}

{
\hypersetup{hidelinks}
\tableofcontents
}

\clearpage
\section{Extended background}
\label{app:extended-background} 

\subsection{Superposition and inference}
\label{app:inference-challenge-superposition}
Inferring a sparse code can be difficult even with a perfectly
learned dictionary. To isolate the inference problem, consider
the noiseless generative process $\bx = \bPhi^*\bz$, where the ground truth dictionary $\bPhi^* \in \RR^{M \times K}$ has unit-norm columns, and suppose we have learned the true dictionary $\dec{\bPhi} = \bPhi^*$. Projecting the observation onto each dictionary atom gives:
\begin{equation}
    \bigl(\dec{\bPhi}^{\top}\bx\bigr)_i
    \;=\;
    z_i
    \,+\,
    \underbrace{
        \sum_{j\neq i}
        \bigl(\dec{\bphi}^{(i)}\bigr)^{\top}
        \dec{\bphi}^{(j)} z_j
    }_{\text{interference from other features}}~.
\end{equation}
The projection combines the feature's own activity with
contributions from other active features, weighted by their
overlap with its dictionary atom. Even when only one feature
$j$ is active, an inactive feature $i$ can receive a nonzero
projection. When several features are active, their contributions
can reinforce or cancel one another, making projection scores
less reliable indicators of individual feature activity \citep{elhage2022superposition}.

For an orthonormal dictionary, all interference terms vanish,
and inference reduces to a single projection:
$\enc{\bz} = \dec{\bPhi}^{\top}\bx$.
However, an overcomplete dictionary ($K > M$) cannot have
mutually orthogonal columns, so this interference cannot be
eliminated for all feature pairs.

Therefore, inference should distinguish a feature's own contribution from evidence attributable to other active features. This motivates \textit{explaining away}: support for a candidate feature decreases when other features already explain the evidence that initially supported it \citep{murphy2012machine}. Such interactions allow inference to account for overlap between dictionary atoms
rather than interpreting each projection independently \citep{rozell2008sparse}.

\subsection{Iterative sparse encoders.}\label{app:iterative sparse} 
Iterative encoders refine sparse representations through successive updates rather than a single feed-forward prediction. ISTA, FISTA, and soft-thresholding LCA optimize $\ell_1$-regularized reconstruction \citep{daubechies2004iterative,beck2009fast,rozell2008sparse}, with LCA implementing lateral competition. LISTA learns a finite-step approximation \citep{gregor2010learning}, while matching pursuit (MP) greedily selects atoms to explain the remaining residual~\citep{mallat1993matching}.

With a fixed pretrained dictionary, better inference remains limited by the learned feature directions. Additional ISTA refinement improves reconstruction with little change in BatchTopK probe scores; non-amortized ISTA worsens splitting for Gemma TopK and increases dead latents for Pythia Standard SAEs~\citep{sun2026price}. FISTA applied to fixed SAE dictionaries also fails to close the feature-recovery gap in controlled experiments, implicating the learned dictionary as a bottleneck~\citep{pacela2026stop}. On Pythia-160M, however, Sun et al. report improved targeted-probe selectivity and generally higher generative intervention scores with semi- and non-amortized inference \citep[Appendix~C]{sun2026price}. 

Approaches that also learn the dictionary have shown promising results: MP-SAE captures hierarchical features through residual-guided selection and supports different inference-time sparsity levels~\citep{costa2025flat}, while LocA-SAE seeks similar feature-quality benefits through grouped amortized encoders rather than explicit iterative refinement~\citep{sun2026price}. But better reconstruction has not consistently translated into better interpretability tools. On Gemma-2-2B, MP-SAEs improve reconstruction but underperform the tested feed-forward baselines on sparse probing, with substantially longer training. The evaluator therefore recommends against them as a first choice for practical interpretability~\citep{chanin2025training}. LocA-SAE eliminates dead latents on Gemma but scores below the Standard SAE on the same proxy probes, so improved utilization need not yield more predictive features~\citep{sun2026price}. Across these evaluations, reconstruction, probing, and intervention metrics need not improve together, and the practical benefit of iterative refinement in these prior work remains task-dependent. 

\subsection{Variational inference and the free energy objective}
\label{app:free-energy-derivation}

We provide a pedagogical introduction to variational inference and derive the free energy objective. In this derivation, we distinguish two KL divergences: one measures how well we approximate the true posterior, while the other measures departure from the prior. They play different roles in the same exact identity.

Finally, we explain why minimizing free energy improves inference using a few lines of algebra.

\paragraph{Variational posterior inference.}
Consider an observation $\bx\in\RR^M$ and $K$-dimensional latent variables $\bz$. For Bernoulli latents, $\bz \in \{0, 1\}^K$. A generative model specifies a prior $p_{\dec{\btheta}}(\bz)$ and a likelihood $p_{\dec{\btheta}}(\bx\mid\bz)$, whose product is the joint distribution. Bayes' rule gives:
\begin{equation}\label{eq:fe-bayes}
    p_{\dec{\btheta}}(\bz\mid\bx)
    \;=\;
    \frac{p_{\dec{\btheta}}(\bx,\bz)}{p_{\dec{\btheta}}(\bx)}~,
    \qquad
    p_{\dec{\btheta}}(\bx)
    \;=\;
    \sum_{\bz\in\{0,1\}^K}
    p_{\dec{\btheta}}(\bx\mid\bz)\,p_{\dec{\btheta}}(\bz)~.
\end{equation}
The denominator is the \textit{model evidence}: the probability density of the observation after summing over all latent explanations. This sum contains $2^K$ terms, making direct evaluation generally infeasible for large $K$. The \say{true posterior} here means the exact posterior under the chosen generative model.

Variational inference is a powerful approximation method that replaces this computation with an optimization problem \citep{jordan1999introduction, blei2017variational}. In variational inference, we introduce a tractable approximate posterior $q_{\enc{\bu}}(\bz\mid\bx)$ and adjust its variational parameters $\enc{\bu}$ to minimize
\begin{equation}\label{eq:fe-inference-kl}
    \kl[\Big]{q_{\enc{\bu}}(\bz\mid\bx)}{p_{\dec{\btheta}}(\bz\mid\bx)}
    \;\coloneqq\;
    \sum_{\bz}q_{\enc{\bu}}(\bz\mid\bx)
    \log\frac{q_{\enc{\bu}}(\bz\mid\bx)}{p_{\dec{\btheta}}(\bz\mid\bx)}~.
\end{equation}
This KL divergence is nonnegative and vanishes exactly when the two distributions agree \citep{blei2017variational}. But its value still depends on the intractable posterior, so we can't directly compute and optimize it. The next step isolates that dependence, leaving an objective we can optimize without evaluating the evidence.

\paragraph{Free-energy derivation.}
Fix $\bx$ and $\dec{\btheta}$ throughout the derivation. To keep the algebra readable, write $\enc{q}(\bz)\equiv q_{\enc{\bu}}(\bz\mid\bx)$ and suppress the parameter subscript on $p$. All expectations below are over $\bz\sim \enc{q}$. For finite Bernoulli logits and a positive Gaussian covariance, the distributions are strictly positive, so the following ratios are well defined.

Since $\enc{q}$ is normalized, $\sum_{\bz}\enc{q}(\bz)=1$. Multiply the log evidence by this sum, apply Bayes' rule, and insert $\enc{q}(\bz)/\enc{q}(\bz)=1$:
\begin{equation}\label{eq:fe-evidence-derivation}
\begin{aligned}
    \log p(\bx)
    &\;=\;
    \log p(\bx)\underbracegray{\sum_{\bz}\enc{q}(\bz)}{=\,1}
    \;=\;\expect{q}{\log p(\bx)}
    \\[2mm]
    &\;=\;
    \expect{q}{\log\frac{p(\bx,\bz)}{p(\bz\mid\bx)}}
    \\[2mm]
    &\;=\;
    \expect{q}{\log\left[
        \frac{p(\bx,\bz)}{p(\bz\mid\bx)}
        \frac{\enc{q}(\bz)}{\enc{q}(\bz)}\right]}
    \\[2mm]
    &\;=\;
    \expect{q}{\log\frac{p(\bx,\bz)}{\enc{q}(\bz)}}
    \;+\;
    \expect{q}{\log\frac{\enc{q}(\bz)}{p(\bz\mid\bx)}}
    \\[2mm]
    &\;=\;
    \underbracegray{\expect{q}{\log\frac{p(\bx,\bz)}{\enc{q}(\bz)}}}
        {\mathrm{ELBO}\;=\;-\cF}
    \;+\;\kl{\enc{q}(\bz)}{p(\bz\mid\bx)}~.
\end{aligned}
\end{equation}
All we did was multiply by one twice, then rearrange logarithms. We didn't have to invoke Jensen's inequality as it is typically done. Instead, we have an exact decomposition of the log evidence. Nonnegativity of the final KL gives the \textit{evidence lower bound}, $\mathrm{ELBO}\leq\log p(\bx)$. Its negative is the variational free energy, $\cF\equiv-\mathrm{ELBO}$ \citep{neal1998view,kingma2014auto}.

\paragraph{Posterior and prior KL divergences.}
Expanding $p_\dec{\btheta}(\bx,\bz) = p_\dec{\btheta}(\bx\mid\bz)p_\dec{\btheta}(\bz)$ inside the ELBO reveals a second KL \citep{kingma2014auto}. With $\cF\equiv\cF(\bx;\enc{\bu},\dec{\btheta})$, we have:
\begin{equation}\label{eq:fe-two-kls}
\boxed[7pt][0.6pt]{
\begin{aligned}
    \underbracegray{\log p_{\dec{\btheta}}(\bx)}{\text{log evidence}}
    &\;\;=\;\;
    \underbracegray{-\cF}{\mathrm{ELBO}}
    \;+\;
    \underbracegray{
        \kl{q_{\enc{\bu}}(\bz\mid\bx)}{p_{\dec{\btheta}}(\bz\mid\bx)}
    }{\substack{\text{inference KL: original objective}\\\text{(gap to the true posterior)}}}
    \\[4mm]
    \cF
    &\;\;=\;\;
    \underbracegray{
        \expect{q_{\enc{\bu}}}{-\log p_{\dec{\btheta}}(\bx\mid\bz)}
    }{\text{reconstruction term (accuracy)}}
    \;+\;
    \underbracegray{
        \kl{q_{\enc{\bu}}(\bz\mid\bx)}{p_{\dec{\btheta}}(\bz)}
    }{\text{prior KL (coding rate)}}~.
\end{aligned}
}
\end{equation}
The \textit{inference KL} compares the approximate posterior with the true posterior, $p_{\dec{\btheta}}(\bz\mid\bx)$. This is the original objective in \cref{eq:fe-inference-kl}. The \textit{prior KL}, by contrast, compares the approximate posterior with $p_{\dec{\btheta}}(\bz)$, before conditioning on the observation. It is a component of free energy, and can be interpreted as a \textit{coding rate}: the cost of representing the input-dependent distribution relative to the prior \citep{alemi2018fixing}. We use natural logarithms, so this cost is measured in nats.

These quantities need not decrease together. If inference starts at the prior, the prior KL is initially zero. Explaining an informative observation generally requires moving away from that prior, increasing the coding rate while reducing the inference KL. Free energy balances this cost against the improvement in expected log likelihood. Minimizing the prior KL alone would simply return the prior and ignore the observation.

\paragraph{Free energy and posterior approximation.}
Rearranging the first line of \cref{eq:fe-two-kls} yields
\begin{equation}\label{eq:fe-posterior-identity}
    \cF(\bx;\enc{\bu},\dec{\btheta})
    \;=\;
    \kl{q_{\enc{\bu}}(\bz\mid\bx)}{p_{\dec{\btheta}}(\bz\mid\bx)}
    \;-\;\log p_{\dec{\btheta}}(\bx)~.
\end{equation}
The evidence depends on the generative model and observation, but not on the variational parameters. Therefore, with $\bx$ and $\dec{\btheta}$ fixed, every decrease in $\cF$ is an equal decrease in the inference KL \citep{blei2017variational}. Equivalently, we have:
\begin{equation}\label{eq:fe-gradient-equivalence}
    \nabla_{\enc{\bu}}\cF
    \;=\;
    \nabla_{\enc{\bu}}
    \kl{q_{\enc{\bu}}(\bz\mid\bx)}{p_{\dec{\btheta}}(\bz\mid\bx)}~.
\end{equation}
This is the sense in which free energy minimization improves inference. It holds regardless of whether the variational parameters are produced by an amortized encoder or refined iteratively. The bound becomes tight, $\cF=-\log p_{\dec{\btheta}}(\bx)$, only when the approximate posterior equals the true posterior. A restricted variational family may not contain that posterior \citep{wainwright2008graphical,blei2017variational}; for example, a factorized Bernoulli distribution cannot represent all the dependencies induced by overlapping dictionary atoms. Optimization can reduce the KL without eliminating this approximation gap.

\subsection{Trust regions and natural-gradient inference}
\label{app:trust-region}

Inference moves the approximate posterior $q_{\enc{\bu}}$ to reduce free energy. But specifying the objective is only part of specifying the dynamics. We also need to decide how far beliefs should move at each step, and what counts as a small change. This choice of distance is what takes us from ordinary gradient descent to the Fisher geometry used by FOND \citep{vafaii2025brain}.

Throughout this derivation, we hold the observation $\bx$, generative parameters $\dec{\btheta}$, and prior reference fixed. We write the objective as $\cF(\enc{\bu})$ and suppress conditioning on $\bx$ in $q_{\enc{\bu}}$.

\paragraph{From a gradient to a step.}
At the current iterate $\enc{\bu}_t$, define $\bg_t\coloneqq\nabla_{\enc{\bu}}\cF(\enc{\bu}_t)$. For a small displacement $\Delta\enc{\bu}$, the gradient predicts the change in free energy:
\begin{equation}\label{eq:tr-linearization}
    \cF(\enc{\bu}_t+\Delta\enc{\bu})
    \;=\;
    \cF(\enc{\bu}_t)
    \;+\;\bg_t^{\top}\Delta\enc{\bu}
    \;+\;o\!\left(\norm{\Delta\enc{\bu}}_2\right)~.
\end{equation}
There is a catch: if $\bg_t\neq\bm{0}$, the linear term keeps decreasing as we move farther along $-\bg_t$. Taken literally, it would tell us to take an infinitely large step. Yet, the prediction in \cref{eq:tr-linearization} is only useful nearby. We can keep the step finite by attaching a quadratic cost to movement:
\begin{equation}\label{eq:tr-euclidean-penalty}
    \Delta\enc{\bu}
    \;=\;
    \argmin{\bdelta}
    \left\{
        \underbracegray{\bg_t^{\top}\bdelta}{\substack{\text{predicted}\\\text{change in }\cF}}
        \;+\;
        \underbracegray{\frac{1}{2\eta}\norm{\bdelta}_2^2}{\text{cost of the step}}
    \right\}
    \;=\;
    -\eta\,\bg_t~,
    \qquad \eta\;>\;0~.
\end{equation}
The last equality follows by differentiating with respect to $\bdelta$ and setting $\bg_t+\bdelta/\eta=\bm{0}$. This is ordinary gradient descent: a balance between the predicted improvement and the cost of moving away from the current iterate \citep{martens2020new}. Increasing $\eta$ makes movement cheaper, allowing a longer step. The quadratic term belongs to the update rule; the approximation to $\cF$ itself remains first order.

\paragraph{Trust-region formulation.}
We can express the same tradeoff as a movement budget \citep[Sec.~9.4]{boyd2004convex}. Allow a step only within a ball around the current iterate, and choose the one that most reduces the linearized objective:
\begin{equation}\label{eq:tr-euclidean-constraint}
    \minimize{\bdelta}\;\bg_t^{\top}\bdelta
    \qquad\text{subject to}\qquad
    \norm{\bdelta}_2^2\;\leq\;\epsilon~.
\end{equation}
The radius $\sqrt{\epsilon}$ sets how far we are willing to trust the local prediction. For $\epsilon>0$ and $\bg_t\neq\bm{0}$, the optimum uses the full budget, giving $\Delta\enc{\bu}=-\sqrt{\epsilon}\,\bg_t/\norm{\bg_t}_2$. It agrees with \cref{eq:tr-euclidean-penalty} when $\epsilon=\eta^2\norm{\bg_t}_2^2$. Thus, a fixed step size $\eta$ lets a large gradient spend more of the movement budget, while a fixed $\epsilon$ enforces the same radius at each step.

\paragraph{Changing the geometry of the step.}
Euclidean distance measures every direction with the same ruler. More generally, a positive-definite matrix $\bM_t$ assigns the squared length $\bdelta^{\top}\bM_t\bdelta$ to a displacement. We evaluate this metric at the current iterate and hold it fixed while choosing the step. Replacing the Euclidean penalty gives:
\begin{equation}\label{eq:tr-metric-penalty}
    \Delta\enc{\bu}
    \;=\;
    \argmin{\bdelta}
    \left\{
        \bg_t^{\top}\bdelta
        \;+\;\frac{1}{2\eta}\bdelta^{\top}\bM_t\bdelta
    \right\}
    \;=\;-\eta\,\bM_t^{-1}\bg_t~.
\end{equation}
The corresponding constraint, $\bdelta^{\top}\bM_t\bdelta\leq\epsilon$, has solution
\begin{equation}\label{eq:tr-metric-constraint}
    \Delta\enc{\bu}
    \;=\;
    -\sqrt{\frac{\epsilon}{\bg_t^{\top}\bM_t^{-1}\bg_t}}\,
    \bM_t^{-1}\bg_t~,
    \qquad \bg_t\;\neq\;\bm{0}~.
\end{equation}
The ball has become an ellipsoid. Along an eigenvector with a large metric eigenvalue, movement costs more, so the same gradient component produces a smaller update. This is why ``steepest descent'' needs a geometry: the best direction depends on how we measure the cost of getting there \citep{amari1998natural,boyd2004convex}.

\paragraph{Measuring changes in beliefs.}
So far, we have measured movement in parameter space. But $\enc{\bu}$ is a description of a belief, and equal parameter changes can mean very different belief changes. For example, a small change in a Bernoulli logit shifts the activation probability most strongly near $1/2$, and much less near $0$ or $1$. A unit of logit displacement therefore has no fixed meaning as a change in belief. We should price the step by how much it changes the distribution it represents.

KL divergence gives us such a measure:
\begin{equation}\label{eq:tr-belief-kl}
    \kl{q_{\enc{\bu}+\Delta\enc{\bu}}}{q_{\enc{\bu}}}
    \;=\;
    \expect{q_{\enc{\bu}+\Delta\enc{\bu}}}{
        \log\frac{q_{\enc{\bu}+\Delta\enc{\bu}}(\bz)}{q_{\enc{\bu}}(\bz)}
    }~.
\end{equation}
\citet{hobson1969new} characterized KL as an information measure for finite-outcome belief revision; \cref{app:physics_of_adaptation} develops this motivation further. Here, it measures the change between successive approximate posteriors. The prior KL in free energy has a different job: it helps define the objective by comparing the posterior with its prior reference (\cref{app:free-energy-derivation}). The stepwise KL tells us how much we change our mind in one update. Many small updates can still carry beliefs far from the prior.

\paragraph{From local KL divergence to the Fisher metric.}
For a smooth family with common positive support and sufficient regularity to differentiate under the expectation, define the score $\bs_{\enc{\bu}}(\bz)\coloneqq\nabla_{\enc{\bu}}\log q_{\enc{\bu}}(\bz)$. Normalization gives $\expect{q_{\enc{\bu}}}{\bs_{\enc{\bu}}}=\bm{0}$. When the two distributions coincide, the KL divergence and its first derivative vanish. Its leading contribution is therefore quadratic, and its Hessian is the Fisher information:
\begin{align}
    \bG_u(\enc{\bu})
    \;&=\;
    \expect{q_{\enc{\bu}}}{
        \bs_{\enc{\bu}}(\bz)\bs_{\enc{\bu}}(\bz)^{\top}
    }~,
    \label{eq:tr-fisher}
    \\[4pt]
    \kl{q_{\enc{\bu}+\Delta\enc{\bu}}}{q_{\enc{\bu}}}
    \;&=\;
    \frac{1}{2}\,\Delta\enc{\bu}^{\top}\bG_u(\enc{\bu})\Delta\enc{\bu}
    \;+\;o\!\left(\norm{\Delta\enc{\bu}}_2^2\right)~.
    \label{eq:tr-kl-expansion}
\end{align}
The Fisher metric measures how sensitive the distribution is to movement in each direction. A large Fisher length means the step makes the old and new beliefs easier to distinguish, spending more of the same KL budget. Reversing the KL arguments gives the same quadratic term \citep{martens2020new}: KL is asymmetric beyond this local approximation, but both orientations supply the same local ruler.

There is also a statistical reason to favor this ruler. A sufficient statistic retains all the information about the parameter; replacing the full observation by that statistic should preserve statistical lengths. For regular minimal exponential families, Dowty's Chentsov-type theorem makes this requirement precise. Together with the appropriate scaling for independent replication and additional compatibility and continuity assumptions, preservation by canonical sufficient statistics characterizes the Fisher metric up to a positive constant \citep{dowty2018chentsov}.

\paragraph{The natural-gradient update.}
Set $\bM_t=\bG_u(\enc{\bu}_t)$ in \cref{eq:tr-metric-penalty}. To leading order, the movement cost is now the KL divergence divided by $\eta$. Balancing this cost against the predicted decrease in free energy yields:
\begin{equation}\label{eq:tr-natural-step}
\boxed{
    \enc{\bu}_{t+1}
    \;=\;
    \enc{\bu}_t
    \;-\;\eta\,\bG_u(\enc{\bu}_t)^{-1}
    \nabla_{\enc{\bu}}\cF(\enc{\bu}_t)~.
}
\end{equation}
This is natural-gradient descent wherever the Fisher is nonsingular \citep{amari1998natural,martens2020new}. Constraining $\kl{q_{\enc{\bu}+\Delta\enc{\bu}}}{q_{\enc{\bu}}}\leq\epsilon/2$ and using its quadratic approximation gives the same direction, with the budget-dependent normalization in \cref{eq:tr-metric-constraint}.

For the continuous flow $\dot{\enc{\bu}}=-\bG_u^{-1}\nabla_{\enc{\bu}}\cF$, the rate of change of free energy is:
\begin{equation}\label{eq:tr-flow-descent}
    \deriv[\cF]{t}
    \;=\;
    -\big(\nabla_{\enc{\bu}}\cF\big)^{\top}
    \bG_u^{-1}\nabla_{\enc{\bu}}\cF
    \;\leq\;0~.
\end{equation}
Wherever the gradient is nonzero, free energy strictly decreases along the flow. After discretization, the step size still matters: a large jump can leave the region where the local prediction is accurate, so an arbitrary finite update need not decrease $\cF$.

\paragraph{A Gaussian inference example.}
To see what changing the ruler buys us, consider a Gaussian target posterior $p(\bz\mid\bx)=\cN(\bz;\bmu,\bS)$ and a variational family $q_{\enc{\bu}}(\bz)=\cN(\bz;\enc{\bu},\bS)$, with fixed $\bS\succ0$. In this illustration, $\enc{\bu}$ denotes the posterior mean, rather than the Bernoulli logits used in \befond. For the ordinary variational free energy ($\beta=1$), \cref{eq:fe-posterior-identity} gives:
\begin{equation}\label{eq:tr-gaussian-objective}
    \cF(\enc{\bu})-\cF(\bmu)
    \;=\;
    \kl{q_{\enc{\bu}}}{p(\cdot\mid\bx)}
    \;=\;
    \frac{1}{2}\,(\enc{\bu}-\bmu)^{\top}\bS^{-1}(\enc{\bu}-\bmu)~.
\end{equation}
The score is $\bS^{-1}(\bz-\enc{\bu})$, giving $\bG_u=\bS^{-1}$. Hence:
\begin{equation}\label{eq:tr-gaussian-gradients}
\begin{aligned}
    \nabla_{\enc{\bu}}\cF
    \;&=\;\bS^{-1}(\enc{\bu}-\bmu)~,
    \\[3pt]
    \bG_u^{-1}\nabla_{\enc{\bu}}\cF
    \;&=\;\enc{\bu}-\bmu~.
\end{aligned}
\end{equation}
Suppose the target posterior is very certain along one direction, but uncertain along another. Along an eigenvector of $\bS$ with eigenvalue $\lambda_j$, a Euclidean step multiplies the mean error by $1-\eta/\lambda_j$. The narrow, certain direction forces a small $\eta$ to avoid overshooting; with that same step size, the broad direction barely moves. Convergence in all directions requires $0<\eta<2\lambda_{\min}(\bS)$.

The natural-gradient step instead removes the precision factor $\bS^{-1}$: every component of the mean error contracts by $1-\eta$. It corrects the same fraction of the remaining error in each direction, with convergence for $0<\eta<2$. For this mean-only quadratic problem, $\eta=1$ reaches the target mean in one step.

\paragraph{Reparameterization invariance.}
The belief itself should not care whether we describe it by a probability, a logit, or some other smooth coordinate. Let $\enc{\bv}=f(\enc{\bu})$ be a smooth invertible reparameterization, with Jacobian $\bJ=\pderiv[\enc{\bv}]{\enc{\bu}}$. The chain rule and the score definition give:
\begin{equation}\label{eq:tr-coordinate-transform}
\begin{aligned}
    \nabla_{\enc{\bv}}\cF
    \;&=\;\bJ^{-\top}\nabla_{\enc{\bu}}\cF~,
    \\[3pt]
    \bG_v
    \;&=\;\bJ^{-\top}\bG_u\bJ^{-1}~,
    \\[3pt]
    \bG_v^{-1}\nabla_{\enc{\bv}}\cF
    \;&=\;\bJ\bG_u^{-1}\nabla_{\enc{\bu}}\cF~.
\end{aligned}
\end{equation}
The gradient and metric change together, so the natural-gradient velocity obeys $\dot{\enc{\bv}}=\bJ\dot{\enc{\bu}}$, exactly as a velocity should. Wherever the flow is well defined, both coordinate systems describe the same evolution of distributions \citep{amari1998natural,martens2020new}. This is the optimization counterpart of the covariance principle developed in \cref{app:physics_of_adaptation}.

This invariance is a property of the continuous dynamics. A finite Euler step preserves it exactly under affine coordinate changes, but generally only to first order under nonlinear ones \citep{martens2020new}. The geometry fixes the direction of motion, and discretization determines how faithfully our finite steps follow it.

\subsection{``Physics of Adaptation'': deriving covariant inference and learning rules}
\label{app:physics_of_adaptation}

In this section, we take a physics-inspired approach to dynamics and motivate the use of natural gradients through this angle. Specifically, we argue that the dynamics of inference and learning can be derived from \textit{stationary-action principles} \citep{landau1975classical}, supplemented by dissipative forces. We emphasize how the underlying geometry of the system must constrain our choice of the kinetic Lagrangian. For beliefs existing on statistical manifolds, we will introduce the \textit{kinetic energy of beliefs} as a natural choice, and use it to derive natural gradient descent in the overdamped (massless) limit of belief dynamics.

\paragraph{The principle of stationary action.}
Physics is fundamentally about dynamics. The most successful theories in physics---including Newtonian mechanics, special and general relativity, electromagnetism, and quantum field theory---share two core attributes. They all: (1) identify some dynamical objects of interest, and (2) derive their dynamics via the \textit{principle of stationary action}:

\boxit[title = {\textit{Principle of stationary action}: The closest physics has come to a theory of everything}]{
Physical systems evolve to extremize \textit{action} $S$ (time integral of the \textit{Lagrangian} $L$):
\begin{equation}
S \;= \int \! L \, dt, \quad \xrightarrow{\,\substack{\text{real physical systems follow}\\\text{trajectories that extremize action}}\;} \quad \boxed{\delta S = 0}
\end{equation}
\vspace{-3mm}
}

\paragraph{Euler--Lagrange equations.}
Extremizing the action leads directly to the Euler--Lagrange equations:
\begin{equation}\label{eq:euler-lagrange}
    \delta S = 0 \quad \Longrightarrow \quad
    \frac{d}{dt} \left( \frac{\partial L}{\partial \dot{\bx}} \right) - \frac{\partial L}{\partial \bx} \;=\; \bm{0}~.
\end{equation}

Here, $\bx$ is position, $\dot{\bx} = d\bx/dt$ is velocity, and $L$ is the \textit{Lagrangian}. In classical mechanics, the Lagrangian is simply kinetic minus potential energy, $L = T - V$, and the kinetic energy is given by $T_\text{classical} = \tfrac{1}{2}m \norm{\dot{\bx}}_2^2$, where m is the rest mass.

As an illustrative example, apply \cref{eq:euler-lagrange} to $L_\text{classical}(\bx, \dot{\bx}) = T_\text{classical}(\dot{\bx}) - V(\bx)$:
\begin{equation}
\begin{aligned}
    \frac{d}{dt} \left( \frac{\partial L_\text{classical}}{\partial \dot{\bx}} \right) - \frac{\partial L_\text{classical}}{\partial \bx}
    \;\;&=\;\;
    \frac{d}{dt} \left( \frac{\partial T_\text{classical}}{\partial \dot{\bx}} \right) + \frac{\partial V(\bx)}{\partial \bx}
    \\[3pt]
    \;\;&=\;\;
    \frac{d}{dt} \left( m \dot{\bx} \right) + \nabla V
    \;\,=\,\;
    m \ddot{\bx} + \nabla V
    \;\,=\;\,
    \bm{0}~.
\end{aligned}
\end{equation}

Defining force $\bF \coloneqq -\nabla V$ and acceleration $\ba \coloneqq \ddot{\bx}$, we recover Newton's second law, $\bF = m \ba$.

\paragraph{Generalized Euler--Lagrange equations.}
\Cref{eq:euler-lagrange} describes the dynamics when all forces are represented in the Lagrangian. To include dissipation and other non-conservative forces not represented in $L$, we add a generalized force vector $\bQ$. This gives:
\begin{equation}
    \frac{d}{dt} \left( \frac{\partial L}{\partial \dot{\bx}} \right) - \frac{\partial L}{\partial \bx} \;=\; \bQ~.
\end{equation}

Defining the canonical momentum $\bp \coloneqq \partial L / \partial \dot{\bx}$, the generalized Euler-Lagrange equation is:
\begin{equation}\label{eq:generalized-euler-lagrange}
\boxed{
    \frac{d}{dt}\, \bp \;-\; \frac{\partial L}{\partial \bx}
    \;=\; \bQ~.
}
\end{equation}

\paragraph{Adaptation dynamics.}
The dynamics of various physical systems can be derived from \cref{eq:generalized-euler-lagrange}, by making different choices for the Lagrangian $L$ and the non-conservative forces $\bQ$. We argue that the \textit{dynamics of adaptation} is no different. We treat the belief state $\enc{\bu}(t)$ as the position of a fictitious particle moving in a potential $V(\enc{\bu})$, which we identify as the loss function. The same picture applies to the generative parameters $\dec{\btheta}(t)$ during learning with potential $V(\dec{\btheta})$.

FOND unifies both processes as variational free energy minimization, giving $\cF(\enc{\bu}, \dec{\btheta}) \equiv V(\enc{\bu}, \dec{\btheta})$ as one shared potential energy: inference changes $\enc{\bu}$ with $\dec{\btheta}$ held fixed, while learning changes $\dec{\btheta}$ with $\enc{\bu}$ held fixed. This leaves two modeling choices open: the kinetic part of the Lagrangian $T$, and the dissipation force $\bQ$. Different choices lead to different inference and learning dynamics and, after discretization, different optimization algorithms, as we demonstrate below.

\paragraph{The principle of general covariance.}
With the potential energy fixed by the loss, let us turn to the kinetic part of the Lagrangian. How should we choose it? Should we just do a bunch of trial and error? Or is there a principled way to approach this? Physics gives us a basic requirement:

\boxit[title = {The principle of general covariance}, halign = {center}, halign title = {center}]{\sayit{The general laws of nature are to be expressed by equations which hold good
\\[2pt]
for all systems of co-ordinates, that is, are co-variant with respect to any
\\[2pt]
substitutions whatever (generally co-variant).} --- \citet{einstein1916grundlage}}

Imagine tossing a ball across a room. If two people track its motion---one standing upright and one tilted sideways---they will record different numbers for its horizontal and vertical motion. Yet, both of them watch the ball land in the exact same spot. The ball has no idea about the observers watching it; it follows a physical path that does not change simply because different observers choose to describe it in different ways.

More concretely, consider Newton's second law, $\bF = m\ba$. When you change your coordinate system, the actual numerical values of force and acceleration change, but the mathematical form of the law does not: $\bF = m\ba$ becomes $\bF' = m\ba'$ because both sides ``co-vary'' in lockstep under the transformation. In this sense, Newton's law describes a \sayit{co-variant} dynamical rule under Galilean transformations. If rotating or shifting our reference frame altered where the ball landed, the equation wouldn't be describing the ball at all---it would merely be describing our choice of coordinates.

\paragraph{Descartes, geometry, and algebra.}
This idea is not new. It goes back to Descartes' link between geometry and algebra: choose a coordinate system, and a point becomes a tuple of numbers. We have done this so often that it is easy to forget what comes with it: redundancy in the description. The origin, the orientation of the axes, and the units are all choices of the modeler. The same physical trajectory admits many different numerical descriptions. Covariance requires the equations to transform consistently between them, so that they all predict the same physical motion.

\paragraph{The principle of general covariance for inference and learning.}
Once again, we argue that the dynamics of adaptation are no different. A Bernoulli belief can be described by its activation probability or by its logit. These describe exactly the same distribution. Why should the learner's response to evidence change just because we switched between these two descriptions? We require the \textit{continuous dynamics} of inference and learning to describe the same evolution of distributions under smooth, invertible reparameterizations \citep{amari1998natural,martens2020new}.

There is also a concrete computational reason for this requirement. Take a scalar belief parameter $\enc{u}$ and write it instead as $\enc{v} = 10\enc{u}$. If we apply ordinary Euclidean gradient descent in $\enc{v}$, the gradient becomes ten times smaller, and converting the step back to $\enc{u}$ introduces another factor of ten. Therefore, with the same learning rate, the update in $\enc{u}$ becomes 100$\times$ smaller. Same belief, same evidence, same objective, but a different pace of adaptation, just because we changed the units.

We can fix this scalar example by re-tuning the learning rate. But when different coordinates are rescaled differently, a single learning rate cannot generally restore all their relative update speeds. Such artificial conditioning can force small steps for stability in some directions while leaving others barely moving \citep{martens2020new}. With a limited inference or training budget, this can determine which beliefs or features we have time to learn. We may then spend extra computation, and extra stochastic training samples, compensating for arbitrary scales chosen by the engineer.

A coordinate choice should not get to decide how well we can adapt. Therefore, we draw inspiration from \citet{einstein1916grundlage} and postulate the same thing for physics of adaptation:

\boxit[title = {The principle of general covariance, applied to inference and learning}, halign = {center}, halign title = {center}]{The physics of adaptation must not depend on your choice of coordinate system.}

To implement this principle, we need a measure of motion that remain invariant under coordinate transformations. This is what motivates the kinetic energy of beliefs below.

\paragraph{The kinetic energy of beliefs.}
We can arrive at a suitable kinetic term from two directions. The first is geometric. Suppose some parameters $\blambda$ live on a manifold with Riemannian metric $\bG(\blambda)$. What is the simplest scalar, quadratic in velocity, that measures how fast they move? The metric gives us the squared speed, so for the kinetic energy, the natural choice is:
\begin{equation}\label{eq:kinetic-geometric}
\boxed{
    T(\blambda, \dot{\blambda})
    \;=\;
    \frac{1}{2}\,m\,\dot{\blambda}^{\top}\bG(\blambda)\dot{\blambda}~.
}
\end{equation}
Here, $\dot{\blambda} = d\blambda/dt$, and $m>0$ is a scalar that sets the relative scale of kinetic and potential energy. This expression is nonnegative, vanishes at rest, and treats motion in either direction equally. Crucially, it is invariant under a change of coordinates: the velocity and metric transform together, leaving their contraction unchanged. In Euclidean space, $\bG=\bI$, and we recover the familiar $\tfrac{1}{2}m\norm*{\dot{\blambda}}_2^2$.

This gives us the correct form of a kinetic energy on Riemannian manifolds, but we still need to choose the geometry of belief space.

The second route starts with a more physical question: \textit{what does it cost to change one's mind}?

\citet{zellner1988optimal} derived Bayes' rule as an optimal information processor, with no information lost between its inputs and outputs. Later, \citet{zellner2002information} showed that introducing costs of changing beliefs can alter the optimal update, producing dynamic partial adjustment rather than an immediate revision. This is the distinction we need: specifying the posterior we ought to reach does not tell us how a physical learner gets there, or what the journey costs. We take the next step by giving beliefs a kinetic term, representing their resistance to change as inertia.

For a brain, this cost has a concrete physical basis. Brains are made of matter, and changing their state takes energy. Fast changes in neural activity involve ion flows across cell membranes, and restoring the ion gradients costs ATP \citep{attwell2001energy}. Slower changes in synaptic strength can involve protein synthesis and physical remodeling of dendritic spines, supported by local metabolic resources \citep{rangaraju2019spatially}. Inference and learning take place through these physical processes. Why should the beliefs they implement be free to move?

To quantify this motion, let $q_{\blambda(t)}$ represent the learner's current belief. How much information separates it from the belief a short time $h$ later? \citet{hobson1969new} asked how to measure the information in a message that changes probabilities from $P$ to $Q$. He made a list of intuitive requirements: the measure should be continuous, independent of how outcomes are labeled, zero when nothing changes, and larger when a message rules out more equally likely possibilities. It should also obey a composition rule: describing an outcome in stages should give the same information as describing it all at once. For finite outcome spaces, these requirements uniquely select $\kl{Q}{P}$, up to a positive constant. Its continuous counterpart retains coordinate invariance because the density Jacobians cancel in the ratio.

Following \citep{hobson1969new}, we measure the information in a local belief change by:
\begin{equation}\label{eq:belief-information-gain}
    \mathcal{I}_t(h)
    \;\coloneqq\;
    \kl{q_{\blambda(t+h)}}{q_{\blambda(t)}}~.
\end{equation}

For a regular, smooth family with common support, the KL divergence and its first derivative vanish when the two distributions coincide. Its leading contribution is therefore quadratic, with the Fisher information as its Hessian \citep{amari2000methods,amari2016information}:
\begin{equation}\label{eq:belief-kl-fisher}
\begin{aligned}
    \kl{q_{\blambda+\Delta\blambda}}{q_{\blambda}}
    \;&=\;
    \frac{1}{2}\,\Delta\blambda^{\top}\bG(\blambda)\Delta\blambda
    \;+\;o\!\left(\norm{\Delta\blambda}^2\right),
    \\[3pt]
    \bG(\blambda)
    \;&=\;
    \expect{q_{\blambda}}{\big(\nabla_{\!\blambda}\log q_{\blambda}\big)
    \big(\nabla_{\!\blambda}\log q_{\blambda}\big)^{\top}}~.
\end{aligned}
\end{equation}

Along a smooth trajectory, $\Delta\blambda = h\dot{\blambda}+O(h^2)$. Dividing the local information gain by $h^2$ and taking the limit gives what we call the \textit{kinetic energy of beliefs}:
\begin{equation}\label{eq:kinetic-energy-beliefs}
\boxed{
\begin{aligned}
    T_\text{belief}(\blambda,\dot{\blambda})
    \;&\coloneqq\;
    \lim_{h\to0}\frac{\mathcal{I}_t(h)}{h^2}
    \\[5pt]
    \;&=\;
    \frac{1}{2}\left.\frac{d^2\mathcal{I}_t(h)}{dh^2}\right|_{h=0}
    \;=\;
    \frac{1}{2}\,\dot{\blambda}^{\top}\bG(\blambda)\dot{\blambda}~.
\end{aligned}
}
\end{equation}

The two arguments yield the same quadratic form (\cref{eq:kinetic-energy-beliefs,eq:kinetic-geometric}). From the geometry argument, we derived a squared speed, and information processing supplied the Fisher metric that measures it. This speed tracks changes in the distribution itself, so it is invariant to rescaling the parameters. Doubling it quadruples $T_\text{belief}$, just as doubling a classical particle's speed quadruples $T_\text{classical}$.

At this stage, $T_\text{belief}$ has units of information per time squared. Multiplying by $m$ gives the kinetic term $T=mT_\text{belief}$ in \cref{eq:kinetic-geometric}, with $m\bG(\blambda)$ acting as a position-dependent mass matrix. We interpret $m$ as setting the price of belief motion relative to reducing the loss. For a biological learner, this suggests a metabolic interpretation: moving beliefs faster requires more physical resources. Whether the brain's ATP expenditure follows this precise quadratic form, and whether it shares the Fisher metric's directional dependence, are open questions. We adopt this as a speculative interpretation of the kinetic term. The geometric derivation (\cref{eq:kinetic-geometric}) stands on its own.

Before we move on, we briefly mention a direct connection to recent stochastic thermodynamics. \citet{auconi2026information} study the \textit{intrinsic speed} of an evolving distribution, whose square is the temporal Fisher information, $v_\text{info}^2=\expect{q_t}{(\partial_t\log q_t)^2}=2T_\text{belief}$. For Markov jump and Fokker--Planck dynamics near a steady state, they relate $2v_\text{info}^2$ to the second time derivative of the KL divergence toward that state, and use their difference to bound steady-state entropy production. Thus, the same geometric measure of motion appears in physical constraints on dissipation.

Finally, this construction (\cref{eq:kinetic-energy-beliefs,eq:kinetic-geometric}) applies to both sides of FOND: inference uses the Fisher metric of the variational posterior, $\bG_u$, while learning uses that of the generative joint distribution, $\bG_\theta$. With the kinetic term chosen, we can now turn to the remaining ingredient: dissipation.

\paragraph{Dissipative equations of motion for beliefs.}
We now have both parts of the Lagrangian, $L=T-V$. Below, we will address the last missing piece: dissipation force $\bQ$.

But first, we make the Lagrangian and the canonical momentum explicit. Let $g_{ij}$ denote the entries of $\bG(\blambda)$ and $g^{ij}$ those of its inverse, with repeated indices summed. During each inference or learning phase, we hold the other parameter block fixed and take $V$ and $\bG$ to have no explicit time dependence. The Lagrangian and canonical momentum are then:
\begin{equation}\label{eq:belief-lagrangian-momentum}
\begin{aligned}
    L(\blambda,\dot{\blambda})
    \;&=\;
    \frac{m}{2}\,g_{ij}(\blambda)\dot{\lambda}^i\dot{\lambda}^j
    - V(\blambda),
    \\[3pt]
    p_i
    \;&\coloneqq\;
    \frac{\partial L}{\partial\dot{\lambda}^i}
    \;=\;m\,g_{ij}(\blambda)\dot{\lambda}^j,
    \qquad
    \dot{\lambda}^i=\frac{1}{m}\,g^{ij}(\blambda)p_j~.
\end{aligned}
\end{equation}

For dissipation, a reasonable standard choice is linear damping proportional to canonical momentum, $Q_i \coloneqq -p_i/\tau_\text{damp}$, where $\tau_\text{damp}>0$ is a constant damping time. This is \textit{isotropic with respect to the belief-space metric}: every direction has the same damping rate when measured in that geometry. It introduces no preferred direction beyond the metric we already chose. Other dissipation laws are possible, but we use this one as a simple, covariant starting point.

This force follows from the quadratic Rayleigh dissipation function:
\begin{equation}\label{eq:rayleigh}
\begin{aligned}
    \cR(\blambda,\dot{\blambda})
    \;&=\;
    \frac{T}{\tau_\text{damp}}
    \;=\;
    \frac{m}{2\tau_\text{damp}}\,
    \dot{\blambda}^{\top}\bG(\blambda)\dot{\blambda},
    \\[3pt]
    Q_i\;&=-\frac{\partial\cR}{\partial\dot{\lambda}^i}
    =-\frac{p_i}{\tau_\text{damp}}~.
\end{aligned}
\end{equation}
It removes energy at a rate $d(T+V)/dt=-2T/\tau_\text{damp}\leq0$. Thus, the learner can lose momentum and settle, rather than oscillating or orbiting indefinitely under conservative forces.

Substituting $L$ and $Q_i$ into the generalized Euler--Lagrange equation (\cref{eq:generalized-euler-lagrange}) gives:
\begin{equation}\label{eq:belief-euler-lagrange}
    \frac{dp_i}{dt}
    \,-\,
    \frac{m}{2}\,\partial_i g_{jk}\,\dot{\lambda}^j\dot{\lambda}^k
    \,+\,\partial_i V
    \;=\;
    -\frac{p_i}{\tau_\text{damp}}~,
\end{equation}
where $\partial_i\coloneqq\partial/\partial\lambda^i$. Using $\dot{\lambda}^i=g^{ij}p_j/m$ and the inverse-metric identity $\partial_i g^{jk}=-g^{ja}(\partial_i g_{ab})g^{bk}$, we obtain the full, dissipative equations of motion:
\begin{equation}\label{eq:dissipative-eom}
\boxed{
    \underbracegray{\frac{dp_i}{dt}}{\text{inertia}}
    \;+\;
    \underbracegray{\frac{1}{2m}\,\partial_i g^{jk}\,p_jp_k}{\text{geometric corrections}}
    \;+\;
    \underbracegray{\frac{p_i}{\tau_\text{damp}}}{\text{dissipation}}
    \;\;=\;\;
    \underbracegray{-\partial_i V}{\text{task force}}~.
}
\end{equation}

The geometric corrections arise because the metric changes as the learner moves. They are needed for the inertial dynamics to transform consistently between coordinate systems (i.e., to remain covariant). Equivalently, writing the equations in terms of velocity gives:
\begin{equation}\label{eq:belief-velocity-eom}
    m\left(\ddot{\lambda}^i+\Gamma^i_{jk}\dot{\lambda}^j\dot{\lambda}^k\right)
    \,+\,
    \frac{m}{\tau_\text{damp}}\,\dot{\lambda}^i
    \;=\;
    -g^{ij}\partial_j V~,
\end{equation}
where $\Gamma^i_{jk}=\tfrac{1}{2}g^{i\ell}(\partial_jg_{\ell k}+\partial_kg_{\ell j}-\partial_\ell g_{jk})$ are the Christoffel symbols of the metric. The acceleration and geometric correction together describe acceleration on the statistical manifold.

\paragraph{The overdamped (massless) limit.}
To recover first-order adaptation dynamics, we assume that momentum relaxes much faster than beliefs change. We make this precise by taking:
\begin{equation}\label{eq:def-overdamped}
\boxed{
    m\,\to\,0,
    \qquad
    \tau_\text{damp}\,\to\,0,
    \qquad
    \frac{m}{\tau_\text{damp}}\,\to\,\zeta\in(0,\infty)~.
}
\end{equation}
The last condition matters: inertia vanishes while friction remains finite. Heuristically, we can think of this as a \sayit{massless} learner. Imagine pushing a feather through thick honey. Its motion responds almost immediately to the balance between your push and the viscous drag, with very little coasting from its own momentum. The same intuition applies here: beliefs keep moving under the task force, but retain negligible momentum on the timescale of adaptation.

Consider the slower dynamics after the initial rapid relaxation of momentum, in a smooth region where the metric is nonsingular and the velocity, acceleration, and metric derivatives remain bounded as $m$ decreases. Then $p_i=O(m)$, so both $dp_i/dt$ and the geometric correction in \cref{eq:dissipative-eom} are $O(m)$. The damping term, however, approaches $\zeta g_{ij}\dot{\lambda}^j$ and remains finite. The surviving force balance is:
\begin{equation}\label{eq:belief-overdamped-balance}
    \zeta\,\bG(\blambda)\dot{\blambda}
    \;=\;-\nabla_{\!\blambda}V(\blambda)~.
\end{equation}

\paragraph{Natural-gradient dynamics and FOND's dual flows.}
Solving this balance for velocity gives:
\begin{equation}\label{eq:belief-natural-flow}
\boxed{
    \dot{\blambda}
    \;=\;
    -\frac{1}{\zeta}\,
    \bG(\blambda)^{-1}\nabla_{\!\blambda}V(\blambda)~.
}
\end{equation}
This is natural-gradient flow. The inverse metric appears because friction converts force into velocity through the geometry of belief space. The geometric corrections have vanished with inertia, but the geometry itself remains. Setting $V=\cF$ and applying this result separately to inference and learning gives \hyperref[fond]{the two FOND natural-gradient flows}:
\begin{equation}\label{eq:physics-fond-flows}
\boxed{
\begin{aligned}
    \dot{\enc{\bu}}
    \;&=\;
    -\frac{1}{\zeta_u}\,
    \bG_u(\enc{\bu})^{-1}
    \nabla_{\!\enc{\bu}}\cF(\enc{\bu},\dec{\btheta})~,
    \\[4pt]
    \dot{\dec{\btheta}}
    \;&=\;
    -\frac{1}{\zeta_\theta}\,
    \bG_\theta(\dec{\btheta})^{-1}
    \nabla_{\!\dec{\btheta}}\cF(\enc{\bu},\dec{\btheta})~.
\end{aligned}
}
\end{equation}
Here, $\zeta_u,\zeta_\theta>0$ set the pace of the respective processes and can be absorbed into their inference and learning timescales. Each flow uses its own Fisher metric, with the other parameter block held fixed.

\paragraph{Deriving the Bayesian learning rule \citep[BLR,][]{khan2023bayesian}.}
An Euler discretization of \cref{eq:belief-natural-flow}, with $\eta\coloneqq\Delta t/\zeta$, gives the familiar update:
\begin{equation}\label{eq:ngd}
\boxed{
    \blambda_{k+1}
    \;=\;
    \blambda_k
    \;-\;
    \eta\,\bG(\blambda_k)^{-1}\nabla_{\!\blambda}V(\blambda_k)~.
}
\end{equation}
This is just natural gradient descent, and replacing the gradient by a minibatch estimate gives its stochastic version.

For exponential-family variational beliefs, writing \cref{eq:ngd} in natural parameters and setting $V=\cF$ recovers the \textit{Bayesian learning rule} (BLR) of \citet{khan2023bayesian}. By varying the candidate distributions and the approximations to their natural gradients, BLR unified learning rules ranging from SGD and Newton's method to Kalman filtering. Here, we have derived the BLR from the physics of adaptation: it emerges as the overdamped limit of covariant belief dynamics.

\Cref{tab:ngd-special-cases} summarizes how the same update gives familiar learning rules under different choices of metric and loss.

\begin{table}[ht]
\centering
\caption{Special cases of \cref{eq:ngd} under different choices of metric $\bG$ and loss $V$. The BLR row uses exponential-family variational beliefs in natural parameters.}
\label{tab:ngd-special-cases}
\begin{tabular}{@{}lll@{}}
\toprule
\textbf{Algorithm} & \textbf{Metric} $\bG$ & \textbf{Loss} $V$ \\
\midrule
Gradient Descent \citep[GD,][]{cauchy1847methode} & $\bI$ (Euclidean) & Generic task loss \\[4pt]
Natural Gradient Descent \citep[NGD,][]{amari1998natural} & Fisher--Rao & Generic task loss \\[4pt]
Bayesian Learning Rule \citep[BLR,][]{khan2023bayesian} & Fisher--Rao & $\cF$ (free energy) \\
\bottomrule
\end{tabular}
\end{table}

\paragraph{Summary.}
Let us take stock of what we have done. We started from a physics-centered view of ``dynamics is everything,'' and demonstrated how dynamics can be derived from applying the principle of stationary action plus dissipation. We then identified the loss as a potential, required adaptation to be independent of arbitrary coordinates (principle of general covariance), and used local information gain to construct the kinetic energy of beliefs. A standard choice of friction then gave us the full dissipative equations of motion, including inertia and geometric corrections. Natural-gradient inference and learning followed in the overdamped (massless) limit, where that inertia is negligible.

\paragraph{Physics of adaptation as a broad framework for deriving inference and learning dynamics.}
We could have motivated the same practical update through a KL trust region (\cref{app:trust-region}), or appealed directly to Amari's natural gradient \citep{amari1998natural}. Those arguments are enough to justify the rule. We hope this route also reveals the richer framework underneath it: beliefs moving through a statistical manifold, driven by a task and constrained by the physical cost of adaptation. Future work should consider deriving other dynamics within that same framework by keeping inertia, changing the dissipation law, or considering more generalized belief states with different, perhaps adaptive, geometries that evolve in time in response to experience.

\section{Extended FOND model and dynamics}
\label{app:full-model}

\subsection{Model and objective}
\paragraph{Bernoulli instantiation of FOND.}
Recovering a feature dictionary requires two coupled operations: infer which features explain each observation, and use those explanations to improve the dictionary. FOND derives the first operation as recurrent natural-gradient descent on variational free energy \citep{vafaii2025brain}. Its original Poisson instantiation learns the decoder by backpropagating through the inference trajectory. Here we extend the natural-gradient prescription to learning itself. Inference changes the approximate posterior $q_{\enc{\bu}}$, and learning changes the generative distribution $p_{\dec{\btheta}}$. The two natural-gradient flows use their respective Fisher metrics to measure local changes in these distributions \citep{amari1998natural,martens2020new}. \Cref{app:trust-region} develops this geometric motivation.

\paragraph{Bernoulli features and a Gaussian decoder.}
In \befond, each latent indicates whether a feature is present, and its decoder atom specifies that feature's contribution to the observation. The posterior mean $\enc r_i$ therefore expresses confidence in feature presence. For $K$ binary latents and an $M$-dimensional observation, the local inference model is
\begin{align}
 p(\bz)&=\prod_{i=1}^K\operatorname{Bernoulli}(z_i;\dec\pi_{\mathrm{ref},i}),
 &q_{\enc\bu}(\bz\mid\bx)&=\prod_{i=1}^K\operatorname{Bernoulli}(z_i;\enc r_i),\notag\\
 \enc\br&=\sigma(\enc\bu),
 &p(\bx\mid\bz)&=\mathcal N(\bx;\dec\bPhi\bz,\dec\bSigma),
 \label{eq:app-model}
\end{align}
where $\dec\bSigma$ is diagonal and positive. Equivalently, $\bx=\dec\bPhi\bz+\boldsymbol\epsilon$ with $\boldsymbol\epsilon\sim\mathcal N(0,\dec\bSigma)$: the noise describes variation not explained by the selected atoms, while the atom itself sets the amplitude and direction of a present feature. The learned initial prior $\dec{\boldsymbol\pi}_0$ supplies the first reference, and the rolling rule in \cref{app:online-prior-working-memory} updates that reference as inference proceeds. A learned decoder bias accounts for a nonzero observation mean by replacing $\bx$ with $\bx-\dec\bb$. \Cref{app:bias} gives its update and inference-response correction.

Conditioning on the observation couples otherwise independent latent features: the likelihood contains cross-terms $\dec W_{ij}z_i z_j$, where $\dec W_{ij}=\dec{\boldsymbol\phi}_i^\top\dec\bSigma^{-1}\dec{\boldsymbol\phi}_j$. When atoms overlap, the evidence for one feature depends on which others explain the input. Factorizing $q$ restricts the represented posterior dependencies, but each optimized mean can still depend on the other means through the shared reconstruction (\cref{app:four-flows}).

The Poisson and Bernoulli choices both admit analytic moments and Fisher metrics \citep{wainwright2008graphical}. In canonical coordinates,
\begin{equation}
\begin{array}{c|c|c|c}
\text{latent family} & \text{support} & \enc r_i & G_{u,ii}=\operatorname{Var}_q(z_i) \\
\hline
\text{Poisson} & \{0,1,2,\ldots\} & e^{\enc u_i} & \enc r_i \\
\text{Bernoulli} & \{0,1\} & \sigma(\enc u_i) & \enc r_i(1-\enc r_i)
\end{array}
\label{eq:app-latent-families}
\end{equation}
The variance in the last column enters both the expected reconstruction loss and the geometry of inference. This is the connection that allows the distributional assumptions to determine the dynamics.

\paragraph{Free-energy decomposition.}
\label{app:befond-free-energy}
\Cref{app:free-energy-derivation} derives the fixed-prior ELBO identity. Here we evaluate its two terms for the Bernoulli--Gaussian model, holding the current reference fixed.
A distribution over codes assigns probability to several possible reconstructions. Write $\bdelta=\bz-\enc\br$ and $\be=\bx-\dec\bPhi\enc\br$. Since $\mathbb E_q[\bdelta]=0$ and $\mathbb E_q[\bdelta\bdelta^\top]=\enc\bC$, expanding the square gives:
\begin{align}
\EE_q\!\Big[\|\bx-\dec\bPhi\bz\|_{\dec\bSigma^{-1}}^2\Big]
&\;=\;
\be^\top\dec\bSigma^{-1}\be-2\be^\top\dec\bSigma^{-1}\dec\bPhi\mathbb E_q[\bdelta]
+\mathbb E_q[\bdelta^\top\dec\bW\bdelta]\notag\\
&=\|\be\|_{\dec\bSigma^{-1}}^2+\Tr(\dec\bW\enc\bC),
\end{align}
where $\dec\bW=\dec\bPhi^\top\dec\bSigma^{-1}\dec\bPhi$ and $\enc\bC=\operatorname{diag}[\enc\br\odot(\bm1-\enc\br)]$. The cross-term vanishes because the centered code has zero mean.
For the KL term, factorization turns the log ratio of products into a sum:
\begin{align}
D_{\mathrm{KL}}(q\|p)
&=\sum_i\mathbb E_q\!\left[z_i\log\frac{\enc r_i}{\dec\pi_{\mathrm{ref},i}}+(1-z_i)\log\frac{1-\enc r_i}{1-\dec\pi_{\mathrm{ref},i}}\right]\notag\\
&=\sum_i\left[\enc r_i\log\frac{\enc r_i}{\dec\pi_{\mathrm{ref},i}}+(1-\enc r_i)\log\frac{1-\enc r_i}{1-\dec\pi_{\mathrm{ref},i}}\right].
\end{align}
A decoder that reconstructs $\bx$ from $\enc\br$ can therefore still have a large expected error if the individual latent configurations produce different observations. Including this uncertainty cost and the KL penalty gives
\begin{align}
 \cF={}&\tfrac12(\bx-\dec\bPhi\enc\br)^\top\dec\bSigma^{-1}(\bx-\dec\bPhi\enc\br)
 +\tfrac12\sum_i\dec W_{ii}\enc r_i(1-\enc r_i)
 +\tfrac12\log\det\dec\bSigma\notag\\
 &+\beta\sum_i\left[\enc r_i\log\frac{\enc r_i}{\dec\pi_{\mathrm{ref},i}}
 +(1-\enc r_i)\log\frac{1-\enc r_i}{1-\dec\pi_{\mathrm{ref},i}}\right]+\mathrm{const},
 \label{eq:app-free-energy}
\end{align}
with $\dec\bW=\dec\bPhi^\top\dec\bSigma^{-1}\dec\bPhi$. The log determinant normalizes the Gaussian likelihood, and $\beta$ weights departure from the current prior reference. At $\beta=1$ and a fixed generative prior, this is the negative evidence lower bound. The Bernoulli uncertainty term also cancels the quadratic self-interaction in each coordinate, leaving reconstruction affine in $\enc r_i$ when the other means are fixed (\cref{app:bernoulli-sharpness}). Overlapping atoms still couple different coordinates, so a factorized posterior does not imply independent inference updates.

The moments needed for inference and learning are available without sampling latent codes:
\begin{align}
 \enc\bC&=\operatorname{diag}[\enc\br\odot(\bm1-\enc\br)],
 &\enc\bE&=\enc\bC+\enc\br\enc\br^\top,\notag\\
 \dec\bE_p&=\operatorname{diag}[\dec{\boldsymbol\pi}_{\mathrm{ref}}\odot(\bm1-\dec{\boldsymbol\pi}_{\mathrm{ref}})]
 +\dec{\boldsymbol\pi}_{\mathrm{ref}}\dec{\boldsymbol\pi}_{\mathrm{ref}}^\top,
 &\enc V_{mm}&=(\bx-\dec\bPhi\enc\br)_m^2+(\dec\bPhi\enc\bC\dec\bPhi^\top)_{mm}.
 \label{eq:app-moments}
\end{align}

Here $\enc\bE$ and $\dec\bE_p$ are the posterior and prior second moments, and $\enc V_{mm}$ is the posterior-expected squared residual in observed dimension $m$.

\paragraph{Bernoulli inference in \befond.}
For our factorized Bernoulli posterior, $\enc{\bu}$ contains logits
and $\enc{r}_i=\sigma(\enc{u}_i)$.
The score is $\bz-\enc{\br}$, so
\begin{equation}
\mathbf G_u
=\operatorname{diag}\!\left(\enc{\br}\odot
(\bm1-\enc{\br})\right),
\qquad
\Delta\enc{\bu}^{\top}\mathbf G_u\Delta\enc{\bu}
=\sum_i\enc{r}_i(1-\enc{r}_i)(\Delta\enc{u}_i)^2.
\label{eq:tr-bernoulli-length}
\end{equation}
For a fixed small logit displacement, the distribution changes less
when a Bernoulli probability is near zero or one. In probability
coordinates, the same infinitesimal length is
$\sum_i(d\enc{r}_i)^2/[\enc{r}_i(1-\enc{r}_i)]$:
a fixed probability displacement is more costly near the boundary.
For finite logits, all these variances are positive.
Applying the inverse Fisher cancels the variance factor in the
Euclidean logit gradient, yielding the inference dynamics derived in
\cref{app:four-flows}. The update therefore accounts for how much a
parameter movement changes the inferred beliefs.

\paragraph{Natural-gradient inference and learning.}
For Bernoulli logits, the Fisher is $\bG_u=\enc\bC$. The same factor $\enc r_i(1-\enc r_i)$ multiplies the ordinary logit gradient, so applying $\bG_u^{-1}$ removes this attenuation from the inference drive. The remaining terms combine evidence from the input, recurrent interactions through the dictionary, and relaxation toward the reference. \Cref{app:four-flows} derives these terms and their cancellation explicitly.

Learning then holds the inferred moments fixed. The decoder gradient compares the observed input--latent moment $\langle\bx\enc\br^\top\rangle$ with the model prediction $\dec\bPhi\enc\bE$. Its Fisher uses the prior second moment, $\bG_\Phi=\dec\bE_p\otimes\dec\bSigma^{-1}$, giving a correction by $\dec\bE_p^{-1}$ rather than the posterior's diagonal Fisher. This correction is what can compensate for weak learning signals in rarely active directions. Exact compensation requires matching the minibatch-averaged second moments (\cref{app:learning-geometry}). The covariance update tracks expected squared residuals, while the learned initial prior tracks detached posterior means across inputs. These decoder, covariance, and initial-prior updates use the inferred statistics directly, without backpropagating through the inference trajectory. Their explicit flows and derivations appear in \cref{app:four-flows,app:flow-derivations}. The decoder evolves in learning time $\tau$ over an interval $\Delta t$, and $h$ denotes the separate inference integration interval. 

\subsubsection{Online prior and working memory}
\label{app:online-prior-working-memory}

Consider observing a scene whose underlying features persist. After the first observation, the inferred distribution contains information about that particular scene that the population prior did not contain. The next update should start from this informed belief. FOND's rolling rule makes it the new prior reference \citep{vafaii2025brain}: the KL penalty now measures departure from the current prediction, while the likelihood supplies evidence for revising it. The learned initial prior starts the sequence, and the rolling reference carries its evolving, input-specific belief forward. \Cref{fig:online-prior-working-memory} shows this predict--update sequence and its fixed-input specialization. \Cref{app:rolling-prior} derives its activation threshold, and \cref{app:sparsity-results} reports the rolling-prior experiments varying $\beta$ and the initial-prior cap.

\begin{figure}[htbp]
\centering
\resizebox{\linewidth}{!}{
\begingroup
\usetikzlibrary{arrows.meta,calc}
\definecolor{wmink}{HTML}{363C42}
\definecolor{wmquiet}{HTML}{687078}
\definecolor{wmrule}{HTML}{D5DADD}
\definecolor{wminfer}{HTML}{ED1C24}
\definecolor{wmgener}{HTML}{29ABE2}
\definecolor{wmpredict}{HTML}{2ea52a}
\begin{tikzpicture}[x=1cm,y=1cm,>=Stealth,
 every node/.style={text=wmink,inner sep=0pt},
 state/.style={circle,draw=wmink,line width=1pt,minimum size=21mm,font=\huge},
 obs/.style={circle,draw=wmink,line width=1pt,minimum size=20mm,font=\huge},
 lab/.style={font=\LARGE},
 small/.style={font=\sffamily\large,text=wmquiet},
 flow/.style={->,line width=1.25pt,wmpredict},
 inferflow/.style={->,line width=1.15pt,wminfer,dash pattern=on 3pt off 2.7pt}]
\path[use as bounding box] (0,0) rectangle (21.4,12.2);
\node[anchor=west,small] at (0.35,11.65) {GENERATIVE MODEL PARAMETERS};
\node[anchor=west,font=\huge] at (0.35,10.75)
 {$\color{wmgener}\boldsymbol\theta\equiv(\boldsymbol\Phi,\boldsymbol\Sigma,\boldsymbol\pi_0)$};
\node[anchor=west,small] at (13.1,11.65) {ROLLING UPDATE};
\node[anchor=west,font=\Huge] at (13.1,10.65)
 {$\textcolor{wmgener}{p_{\mathrm{ref},t}}=\textcolor{wminfer}{q_{t-1}}$};
\draw[wmquiet,line width=.65pt] (15.6,10.02)--(15.6,8.59);
\fill[wmpredict] (15.6,8.59) circle (1.4pt);

\node[state] (z0) at (2.0,7.85) {$\mathbf z_0$};
\node[state] (z1) at (6.2,7.85) {$\mathbf z_1$};
\node[state] (zm) at (13.3,7.85) {$\mathbf z_{t-1}$};
\node[state] (zt) at (17.9,7.85) {$\mathbf z_t$};
\node[obs] (x0) at (2.0,4.65) {$\mathbf x_0$};
\node[obs] (x1) at (6.2,4.65) {$\mathbf x_1$};
\node[obs] (xm) at (13.3,4.65) {$\mathbf x_{t-1}$};
\node[obs] (xt) at (17.9,4.65) {$\mathbf x_t$};

\node[draw=wmgener,rounded corners=2pt,line width=1pt,minimum width=17mm,minimum height=11mm,
 font=\LARGE,text=wmgener] (prior) at (.85,9.52) {$\boldsymbol\pi_0$};
\draw[wmgener,line width=1.1pt] (prior.south east)--(z0.135);
\draw[flow] (z0.25) to[out=25,in=155] (z1.155);
\node[lab] at (4.1,9.35) {$\textcolor{wmgener}{p_{\mathrm{ref},1}}=\textcolor{wminfer}{q_0}$};
\draw[flow] (z1.20) to[out=20,in=180] (8.35,8.25);
\node[font=\Huge,text=wmquiet] at (9.65,8.05) {$\cdots$};
\draw[flow] (10.9,8.25) to[out=0,in=160] (zm.160);
\draw[flow] (zm.25) to[out=25,in=155] (zt.155);
\node[font=\Huge,text=wmquiet] at (20.5,7.85) {$\cdots$};

\foreach \z/\x/\s in {z0/x0/0,z1/x1/1,zm/xm/{t-1},zt/xt/t}{
 \draw[->,wmink,line width=1.1pt] (\z.south)--(\x.north);
 \node[lab,text=wmgener,anchor=east] at ($(\z.south)!0.5!(\x.north)+(-.38,0)$) {$\boldsymbol\Phi,\boldsymbol\Sigma$};
 \draw[inferflow] (\x.38) .. controls ($(\x.38)+(1.0,.40)$) and ($(\z.-35)+(1.0,-.40)$) .. (\z.-35);
 \node[lab,text=wminfer,anchor=west] at ($(\z.south)!0.5!(\x.north)+(1.82,0)$) {$q_{\s}$};
}
\draw[wmrule,line width=.85pt,dash pattern=on 5pt off 4pt] (.25,3.16)--(21.05,3.16);
\draw[wmrule,line width=.85pt,dash pattern=on 2pt off 3pt] (8.75,.15)--(8.75,3.16);
\node[font=\huge] at (4.25,1.95) {$\mathbf x_0=\mathbf x_1=\cdots=\mathbf x_T=\mathbf x$};
\node[small] at (4.25,1.02) {Fixed input};
\node[anchor=west,font=\LARGE] at (9.35,1.63) {$\begin{aligned}
 \mathcal F_t\bigl(\mathbf x_t;\textcolor{wminfer}{\mathbf u_t},\textcolor{wmgener}{\mathbf u_{\mathrm{ref},t}}\bigr)
 &={}\mathcal L_{\mathrm{rec}}\bigl(\mathbf x_t;\textcolor{wminfer}{\mathbf r_t}\bigr)\\[8pt]
 &\quad+\beta D_{\mathrm{KL}}\bigl(\textcolor{wminfer}{q_t}\,\|\,\textcolor{wmgener}{p_{\mathrm{ref},t}}\bigr)
\end{aligned}$};
\end{tikzpicture}
\endgroup}
\caption{Online inference with a rolling prior. The learned initial prior initializes the first belief. Under identity state dynamics, prediction carries each posterior into the next reference (green arrows). The observation model is shown by black arrows and inference by red dashed arrows. The fixed-input case presents the same observation at each step.}
\label{fig:online-prior-working-memory}
\end{figure}

\paragraph{Prior prediction.}
Let $q_t$ be the inferred distribution after observation $\bx_t$. To form the next reference, a state-transition model predicts which latent configurations remain plausible. For a transition kernel $\mathcal K(\bz'\mid\bz)$, prediction averages over the preceding belief \citep[Chapter~6]{sarkka2023bayesian}:
\begin{equation}
p_{\mathrm{ref},t}(\bz')
=\sum_{\bz}\mathcal K(\bz'\mid\bz)q_{t-1}(\bz).
\label{eq:app-online-prediction}
\end{equation}
If the latent state persists without process noise, then
$\mathcal K(\bz'\mid\bz)=\mathbf 1[\bz'=\bz]$, so prediction simply
copies the belief:
\begin{equation}
p_{\mathrm{ref},0}=p_0,
\qquad p_{\mathrm{ref},t}=q_{t-1}\quad(t\geq1),
\qquad
p_0(\bz)=\prod_i\operatorname{Bernoulli}
(z_i;\dec{\pi}_{0,i}).
\label{eq:app-online-identity}
\end{equation}
The identity transition means that the causes persist, not that we know which causes are present. Copying $q_{t-1}$ therefore retains both its feature preferences and its uncertainty. With no new likelihood contribution, the belief is preserved unchanged. This is the working memory supplied by the rolling reference. A new independent input starts again from the learned initial prior.

\paragraph{Posterior update.}
The observation can support the prediction or require it to change. Expected reconstruction error rewards explanations of the current input, while the KL penalty controls departure from the prediction. For a Bernoulli reference with means $\dec{\boldsymbol\pi}_{\mathrm{ref},t}$ and logits $\dec{\bu}_{\mathrm{ref},t}=\operatorname{logit}(\dec{\boldsymbol\pi}_{\mathrm{ref},t})$, the local objective is
\begin{equation}
\cF_t(\enc{\bu})
=\underbrace{-\mathbb E_{q_{\enc{\bu}}}
\log p_{\dec{\btheta}}(\bx_t\mid\bz)}_{
\mathcal L_{\mathrm{rec}}(\bx_t;\enc{\br})}
+\beta D_{\mathrm{KL}}\!\left(q_{\enc{\bu}}
\,\middle\|\,p_{\mathrm{ref},t}\right),
\qquad \enc{\br}=\sigma(\enc{\bu}).
\label{eq:app-online-objective}
\end{equation}
The decoder and reference remain fixed during this local update. At $\beta=1$, minimizing over unrestricted distributions gives the Bayesian posterior for the reference and observation. For general $\beta>0$, it gives a distribution proportional to $p_{\mathrm{ref},t}(\bz)p_{\dec{\btheta}}(\bx_t\mid\bz)^{1/\beta}$ \citep{bissiri2016general}. The factorized Bernoulli family gives tractable expectations and local updates while retaining marginal uncertainty and approximating posterior dependencies.

\paragraph{Recurrent belief revision.}
Write the negative mean-coordinate reconstruction gradient as $\ba_t(\enc\br)$. Including the Bernoulli uncertainty term, its $i$th coordinate is
\begin{equation}
[\ba_t(\enc\br)]_i
=(\dec\bphi^{(i)})^\top\dec\bSigma^{-1}\bx_t
-\tfrac12\dec W_{ii}
-\sum_{j\ne i}\dec W_{ij}\enc r_j.
\label{eq:app-online-drive}
\end{equation}

The first term supplies evidence for feature $i$, while the last adjusts it for other active explanations. When $\dec W_{ij}>0$, increasing belief in feature $j$ reduces the drive for $i$. Repeated updates thus let explanations revise one another even though $q$ is factorized.

Applying the posterior Fisher inverse gives:
\begin{equation}
\dot{\enc{\bu}}
=\ba_t(\enc{\br})
-\beta\bigl(\enc{\bu}-\dec{\bu}_{\mathrm{ref},t}\bigr).
\label{eq:app-online-flow}
\end{equation}

Initialize at the predictive logits. The KL force is initially zero, so evidence can change the belief before the fixed reference exerts a restoring force. One ETD1 step with interval $h$ integrates this relaxation while freezing the reconstruction drive (\cref{app:numerical-implementation}):
\begin{equation}
\enc{\bu}_t
=\dec{\bu}_{\mathrm{ref},t}
+\frac{1-e^{-\beta h}}{\beta}
\ba_t\!\left(\dec{\boldsymbol\pi}_{\mathrm{ref},t}\right),
\qquad \beta>0.
\label{eq:app-online-etd1}
\end{equation}

The resulting posterior becomes the reference for the next outer step. The next update therefore retains the inferred changes instead of pulling them back toward the original population prior. If several inner steps are taken, their reference stays fixed and the restoring force generally becomes nonzero. The parameter $\beta$ controls resistance to local belief change, and $h$ controls how far the local flow is integrated.

\paragraph{Evidence accumulation and sparsity.}
Rolling also changes the influence of the initial prior as a sequence proceeds. To see this, suppose the initial prior and likelihood are positive and each local objective is minimized exactly over unrestricted distributions. Repeated substitution of the posterior for the next reference gives
\begin{equation}
q_t(\bz)\ \propto\ p_0(\bz)
\prod_{j=0}^{t}p_{\dec{\btheta}}(\bx_j\mid\bz)^{1/\beta}.
\label{eq:app-online-evidence}
\end{equation}
Each observation contributes a likelihood factor, while the initial prior enters once. For a fixed repeated input, the likelihood power is $(t+1)/\beta$: the initial prior becomes weaker relative to reconstruction evidence. Replaying one stored observation supplies no new measurement, so this computation is not ordinary conditioning on one observation. Finite mean-field ETD1 steps approximate the local updates and need not attain this unrestricted endpoint.

This explains why a sparse initial prior need not enforce the same sparsity throughout a rollout. Its low probabilities set a high initial activation threshold, but the rolling reference can retain increases supported by reconstruction evidence. A cap on mean initial-prior activity therefore constrains the starting belief, not final posterior activity.

\subsection{Natural-gradient dynamics}
\label{app:four-flows}
For the free energy in \Cref{app:befond-free-energy}, inference changes the posterior while holding the model fixed. Learning then changes the model with the inferred statistics fixed:
\begin{align}
 \dot{\enc\bu}&=\dec\bPhi^\top\dec\bSigma^{-1}\bx-\dec\bW\enc\br
 -\tfrac12\operatorname{diag}(\dec\bW)\odot(\bm1-2\enc\br)-\beta(\enc\bu-\dec\bu_{\mathrm{ref}}),\label{eq:app-u-flow}\\
 \dot{\dec\bPhi}&=\eta\ssub{\Phi}(\bx\enc\br^\top-\dec\bPhi\enc\bE)\dec\bE_p^{-1},\label{eq:app-phi-flow}\\
 \dot{\dec\bSigma}&=\eta\ssub{\Sigma}(\enc\bV-\dec\bSigma),\label{eq:app-sigma-flow}\\
 \dot{\dec{\boldsymbol\pi}}_0&=\eta_0\beta(\enc\br-\dec{\boldsymbol\pi}_0).\label{eq:app-prior-flow}
\end{align}
Here $\dec\bu_{\mathrm{ref}}=\operatorname{logit}(\dec{\boldsymbol\pi}_{\mathrm{ref}})$. The initial prior $\dec{\boldsymbol\pi}_0$ tracks detached posterior means, and the rolling reference is refreshed within inference (\cref{app:online-prior-working-memory}). For minibatches, average $\bx\enc\br^\top$, $\enc\bE$, $\dec\bE_p$, and $\enc\bV$ before applying the updates. Derivations appear in \Cref{app:flow-derivations}.

\paragraph{Fisher preconditioning for inference.}
For the Bernoulli posterior, $\partial\log q/\partial\enc u_i=z_i-\enc r_i$. Independence under $q$ gives
\begin{align}
\bG_u=\mathbb E_q[(\bz-\enc\br)(\bz-\enc\br)^\top]
&=\enc\bC,\label{eq:app-bernoulli-fisher}\\
\nabla_{\enc\bu}\cF
&=\enc\bC\Bigl[
\dec\bW\enc\br-\dec\bPhi^\top\dec\bSigma^{-1}\bx
+\tfrac12\operatorname{diag}(\dec\bW)\odot(\bm1-2\enc\br)\Bigr]
\notag\\
&\quad+\enc\bC\beta(\enc\bu-\dec\bu_{\mathrm{ref}}).
\label{eq:app-bernoulli-fisher-gradient}
\end{align}
Multiplication by $-\bG_u^{-1}$ cancels $\enc\bC$ and yields \cref{eq:app-u-flow} whenever $0<\enc r_i<1$. Ordinary logit-gradient descent instead multiplies the same drive by $\enc r_i(1-\enc r_i)$, making it small near either boundary. The natural flow removes this factor from the logit update, while the probability still changes according to $\dot{\enc\br}=\enc\bC\dot{\enc\bu}$. This distinction matters when interpreting the revival of a nearly inactive latent.

\paragraph{Decoder Fisher information.}
Let $\langle\cdot\rangle$ denote the minibatch average. The Gaussian decoder score is $\dec\bSigma^{-1}(\bx-\dec\bPhi\bz)\bz^\top$. Averaging its outer product over the complete-data model, with the chosen prior references held fixed, gives
\begin{equation}
\bG_\Phi=\dec\bE_p\otimes\dec\bSigma^{-1},
\qquad
\nabla_{\dec\bPhi}\cF
=\dec\bSigma^{-1}\left(\dec\bPhi\enc\bE-\langle\bx\enc\br^\top\rangle\right),
\label{eq:app-decoder-fisher-explained}
\end{equation}
where both second moments are averaged across the minibatch before the Fisher action. Vectorization of decoder entries is implicit. The resulting matrix-form flow is \cref{eq:app-phi-flow}, with $\bx\enc\br^\top$ replaced by its minibatch average. This is the complete-data model Fisher, rather than the observed-data Fisher or an outer product of empirical loss gradients \citep{amari1998natural,martens2020new}.

The distinction between $\dec\bE_p$ and $\enc\bE$ is essential: the former supplies the metric, and the latter the curvature of the frozen reconstruction objective. Averaging per-example inverses would give a different update. Likewise, matching average firing rates alone does not match second moments. For example, a shared factorized prior with mean $\langle\enc\br\rangle$ has off-diagonal moments $\langle\enc r_i\rangle\langle\enc r_j\rangle$, whereas the averaged posterior has $\langle\enc r_i\enc r_j\rangle$. Their difference is the across-input covariance of the inferred means. \Cref{app:learning-geometry} states the exact matching condition and the unequal-moment extension.

\subsubsection{Model updates}
\label{app:flow-derivations}
\label{app:befond-flow-derivations}
\begin{proof}[Proof of \Cref{eq:app-u-flow,eq:app-prior-flow}]
For Bernoulli inference, differentiating \cref{eq:app-free-energy} gives
\[
 \nabla_{\enc\br}\cF=-\dec\bPhi^\top\dec\bSigma^{-1}\bx+\dec\bW\enc\br
 +\tfrac12\operatorname{diag}(\dec\bW)\odot(\bm1-2\enc\br)+\beta(\enc\bu-\dec\bu_{\mathrm{ref}}).
\]
The posterior Fisher $\enc\bC$ cancels the logit Jacobian, yielding \cref{eq:app-u-flow}. For the decoder, the complete-data Fisher block is $\dec\bE_p\otimes\dec\bSigma^{-1}$, which cancels the precision in the gradient and gives \cref{eq:app-phi-flow}.

For each diagonal variance and each initial-prior probability,
\begin{align}
 \frac{\partial\cF}{\partial\dec\Sigma_{mm}}
 &=\frac{\dec\Sigma_{mm}-\enc V_{mm}}{2\dec\Sigma_{mm}^2},
 &G_{\dec\Sigma_{mm}}&=\frac1{2\dec\Sigma_{mm}^2},\notag\\
 \frac{\partial[\beta\kl{q}{p_0}]}{\partial\dec\pi_{0,i}}
 &=\beta\frac{\dec\pi_{0,i}-\enc r_i}{\dec\pi_{0,i}(1-\dec\pi_{0,i})},
 &G_{\dec\pi_{0,i}}&=\frac1{\dec\pi_{0,i}(1-\dec\pi_{0,i})}.
\end{align}

Preconditioning proves \cref{eq:app-sigma-flow,eq:app-prior-flow}. With frozen targets, integration yields:
\begin{align}
 \dec\bSigma(\Delta t)&=\enc\bV+[\dec\bSigma(0)-\enc\bV]e^{-\eta\ssub{\Sigma}\Delta t},\notag\\
 \dec{\boldsymbol\pi}_0(\Delta t)&=\enc\br+[\dec{\boldsymbol\pi}_0(0)-\enc\br]e^{-\eta_0\beta\Delta t}.
 \label{eq:app-ema}
\end{align}
\end{proof}

\paragraph{Decoder-bias correction.}
\label{app:bias-derivation}
\begin{proof}[Proof of \Cref{eq:app-bias-flow}]
Hold the incoming reference and covariance fixed during one unclamped inference step. Only the feedforward drive depends on the bias, so
\[
 \frac{\partial\enc\bu^{\mathrm{new}}}{\partial\dec\bb}
 =-\frac{1-e^{-\beta h}}{\beta}\dec\bPhi^\top\dec\bSigma^{-1},
 \qquad \nabla_{\enc\bu}\cF=-\enc\bC\dot{\enc\bu}.
\]
Applying the chain rule at the updated posterior gives
\[
 \nabla_{\dec\bb}\cF
 =-\dec\bSigma^{-1}(\bx-\dec\bb-\dec\bPhi\enc\br)
 +\frac{1-e^{-\beta h}}{\beta}\dec\bSigma^{-1}\dec\bPhi\enc\bC\dot{\enc\bu}.
\]
Multiplication by the inverse bias Fisher block $\dec\bSigma$ and minibatch averaging gives \cref{eq:app-bias-flow}.
\end{proof}

\subsection{Decoder bias}
\label{app:bias}
A learned bias gives $p(\bx\mid\bz)=\mathcal N(\bx;\dec\bb+\dec\bPhi\bz,\dec\bSigma)$, and inference uses $\bx-\dec\bb$. With a fixed posterior, the bias fits the mean residual. Differentiating through one unclamped ETD1 inference step adds a response term:
\begin{equation}
 \dot{\dec\bb}=\eta\ssub{\Phi}\EE_{\mathrm{batch}}\!\left[
 \underbrace{\bx-\dec\bb-\dec\bPhi\enc\br}_{\text{reconstruction residual}}
 -\underbrace{\frac{1-e^{-\beta h}}{\beta}\dec\bPhi\enc\bC\dot{\enc\bu}}_{\text{inference response}}\right].
 \label{eq:app-bias-flow}
\end{equation}
The incoming reference is fixed, and $\enc\bC$ and $\dot{\enc\bu}$ are evaluated after that step. This expression assumes one shared covariance across the batch. \Cref{app:bias-derivation} derives the correction. The full-width synthetic experiment freezes the bias.

\subsection{Bernoulli uncertainty and sparse coding}
\label{app:bernoulli-sharpness}
The Bernoulli posterior variance is $\operatorname{Var}_q(z_i)=\enc r_i(1-\enc r_i)$, which vanishes at 0 and 1 and is maximal at $1/2$. In isolation, its cost favors confident absence or presence. In the full expected reconstruction loss, it cancels the quadratic self-term:
\begin{equation}
 \underbrace{\tfrac12\dec W_{ii}\enc r_i^2}_{\text{mean reconstruction}}
 +\underbrace{\tfrac12\dec W_{ii}\enc r_i(1-\enc r_i)}_{\text{posterior uncertainty}}
 =\tfrac12\dec W_{ii}\enc r_i.
 \label{eq:app-self-cancellation}
\end{equation}
Consequently, with the other coordinates fixed, reconstruction is affine in $\enc r_i$ and the KL term is strictly convex for $\beta>0$. The full objective has an interior coordinate optimum at finite evidence.

\subsection{Activation threshold under a rolling prior}
\label{app:rolling-prior}
\Cref{app:online-prior-working-memory} motivates the rolling rule. Here we make its sparsity effect explicit. Starting one ETD1 step at the reference, the gain in \cref{eq:app-online-etd1} tends to $1/\beta$ as $h\to\infty$. Since $\sigma(u)>1/2$ exactly when $u>0$, for $\beta>0$ we obtain
\begin{equation}
 \enc r_i^{\mathrm{new}}>\tfrac12
 \quad\Longleftrightarrow\quad
 (\dec\bphi^{(i)})^\top\dec\bSigma^{-1}\bx-\tfrac12\dec W_{ii}
 -\sum_{j\ne i}\dec W_{ij}\dec\pi_{\mathrm{ref},j}
 >\beta\log\frac{1-\dec\pi_{\mathrm{ref},i}}{\dec\pi_{\mathrm{ref},i}}.
 \label{eq:app-barrier}
\end{equation}
The right side is positive for a reference probability below $1/2$ and grows as that probability decreases. It sets the reconstruction evidence needed to activate the feature, after accounting for the other features' contributions on the left. Updating the reference changes this threshold at the next outer step. The prior cap and inference depth can therefore affect posterior sparsity through different parts of the rollout (\cref{app:sparsity-results}).

\subsection{Numerical integration and matrix-free updates}
\label{app:numerical-implementation}
\paragraph{Stiffness and exponential integration.}
Natural gradients specify the continuous-time evolution of beliefs and model parameters, but implementing these dynamics requires a numerical update.
A difficulty with explicit Euler is that stable continuous dynamics can become unstable after discretization. For the simple relaxation $\dot v=-\beta v$, Euler multiplies $v$ by $1-\beta h$, whereas exact integration multiplies it by $e^{-\beta h}$. Euler therefore requires $0<\beta h<2$ for decay. When fast relaxation accompanies slower dynamics, this stability restriction can force small steps even when the solution changes slowly. Exponential integrators address this source of stiffness by integrating the linear component exactly \citep{cox2002exponential,hochbruck2010exponential}.

\paragraph{Inference integration.}
Write \cref{eq:app-u-flow} as $\dot{\enc\bu}=\ba(\enc\br)-\beta(\enc\bu-\dec\bu_{\mathrm{ref}})$, where $\ba$ contains the reconstruction and Bernoulli uncertainty terms. Holding the reference fixed and freezing $\ba$ during one interval gives the \textit{first-order exponential time-differencing} update \citep[ETD1;][]{cox2002exponential,hochbruck2010exponential}:
\begin{equation}
\boxed{
\enc\bu_{\mathrm{new}}
\;=\;
\dec\bu_{\mathrm{ref}}
\,+\,
e^{-\beta h}(\enc\bu-\dec\bu_{\mathrm{ref}})
\,+\,
\frac{1-e^{-\beta h}}{\beta}\ba(\enc\br)~,
}
\qquad\beta>0~.
\label{eq:app-etd1-explained}
\end{equation}

The reference relaxation is thus preserved exactly. Here ETD1 follows the same natural-gradient direction as Euler, with effective step $(1-e^{-\beta h})/\beta$, bounded by $1/\beta$. Starting from the reference removes the middle term. \Cref{app:online-prior-working-memory} describes how that reference is passed between inference steps. At $h\to\infty$, ETD1 solves the subproblem with the reconstruction drive held fixed. We apply a finite number of recurrent inference steps before each decoder update.

\paragraph{Frozen-statistics learning.}
The same principle applies to learning. With minibatch statistics, prior references, and covariance fixed, and $\enc\bE,\dec\bE_p\succ0$, the decoder flow has the exact solution (\cref{app:learning-geometry})
\begin{equation}
\dec\bPhi(\Delta t)-\dec\bPhi^*
=\bigl(\dec\bPhi(0)-\dec\bPhi^*\bigr)
\exp\!\left[-\eta\ssub{\Phi}\Delta t\,\enc\bE\dec\bE_p^{-1}\right].
\end{equation}

Euler must accommodate the fastest relaxation mode, but exponential integration instead applies each mode's exact decay over the chosen interval. When $\enc\bE=\dec\bE_p$, every mode contracts by the same factor, recovering exact rarity compensation.

The configured decoder solver, \texttt{matrix\_free\_phi1\_auto}, evaluates a matrix-function action on the decoder residual. Let $s=\eta\ssub{\Phi}\Delta t$, $\dec{\bE}_{p,\lambda}=\dec\bE_p+\lambda\bI$ with Fisher damping $\lambda\ge0$, and $\bA=\enc\bE\dec{\bE}_{p,\lambda}^{-1}$. The corresponding frozen update is:
\begin{equation}
\begin{aligned}
\dec\bPhi(\Delta t)
&=\dec\bPhi(0)\\
&\quad+s\bigl(\langle\bx\enc\br^\top\rangle-\dec\bPhi(0)\enc\bE\bigr)
\dec{\bE}_{p,\lambda}^{-1}\varphi_1(-s\bA),
\end{aligned}
\label{eq:app-decoder-phi1}
\end{equation}
where brackets denote minibatch averages and
$\varphi_1(\bZ)=\sum_{m=0}^{\infty}\bZ^m/(m+1)!$ is the entire matrix function used in exponential integration~\citep{hochbruck2010exponential}.
For $\lambda=0$, this recovers the exact frozen-statistics solution above. The residual form avoids explicitly computing $\dec\bPhi^*$ or subtracting a near-identity exponential from the identity.
The implementation applies this function without constructing a dense $K\times K$ matrix. For a matrix $\bR$ whose rows are minibatch posterior means,
\begin{equation}
\enc\bE\bv
=\operatorname{diag}(\langle\enc\br\odot(\bm1-\enc\br)\rangle)\bv
+\frac1B\bR^\top(\bR\bv),
\label{eq:app-matrix-free-moment}
\end{equation}
and the same representation applies to $\dec\bE_p$.

Low-rank solves use the Woodbury identity~\citep{hager1989updating} when the rank is at most 1,024. Larger ranks use Jacobi-preconditioned conjugate gradients \citep{saad2003iterative}. Thus the minibatch rank and conditioning also affect cost (\cref{app:complexity}).

The solver combines scaled Taylor evaluation~\citep{almohy2011computing} with Chebyshev polynomial approximation~\citep{trefethen2019approximation}. It selects Taylor evaluation when the predicted work is within 32 operator applications, and otherwise uses a Chebyshev recurrence with degree chosen from the coefficient tail. Each operator application requires a posterior-moment product and a solve with the damped prior moment, so the polynomial degree and linear-solve iterations contribute to decoder-update cost. The selected synthetic recipes and audited whitened Gemma configurations request Chebyshev tolerance $10^{-5}$ and maximum degree 2,048 (512 for the raw 16K Gemma run). These control the matrix-function approximation.

The variance and initial-prior flows likewise integrate to exponential averages of their frozen targets (\cref{app:flow-derivations}). These updates exploit solvable components of the derived dynamics while leaving approximation to the remaining nonlinear terms and numerical matrix-function evaluation. Parameter floors, a mean initial-prior cap, and decoder constraints are applied as specified by each recipe. All inferred statistics are detached during learning, so the decoder update does not backpropagate through the inference trajectory.

\subsection{Algorithm}
\Cref{alg:fond} gives the Bernoulli inference and moment updates. During inference, the generative parameters remain fixed. Learning uses the inferred minibatch moments.

\begin{algorithm}[H]
\caption{FOND with a rolling inference reference}
\label{alg:fond}
\label{app:algorithm}
\begin{algorithmic}[1]
\Require Decoder $\dec\bPhi$, diagonal covariance $\dec\bSigma\succ0$, initial prior $\dec{\boldsymbol\pi}_0\in(0,1)^K$, inference depths, $\beta>0$, integration parameter $h$, and learning schedules.
\For{each training minibatch}
\State Center observations by the decoder bias, if learned.
\State $\enc\bu\gets\operatorname{logit}(\dec{\boldsymbol\pi}_0)$, independently for each observation.
\For{each outer step up to the sampled training horizon}
\State $\dec\bu_{\mathrm{ref}}\gets\enc\bu$, $\dec{\boldsymbol\pi}_{\mathrm{ref}}\gets\sigma(\dec\bu_{\mathrm{ref}})$.
\State $\dec\bE_p\gets\EE_{\mathrm{batch}}[\operatorname{diag}(\dec{\boldsymbol\pi}_{\mathrm{ref}}\odot(\bm1-\dec{\boldsymbol\pi}_{\mathrm{ref}}))+\dec{\boldsymbol\pi}_{\mathrm{ref}}\dec{\boldsymbol\pi}_{\mathrm{ref}}^\top]$.
\For{each inner inference step}
\State $\enc\br\gets\sigma(\enc\bu)$, $\dec\bW\gets\dec\bPhi^\top\dec\bSigma^{-1}\dec\bPhi$.
\State $\dot{\enc\bu}\gets\dec\bPhi^\top\dec\bSigma^{-1}\bx-\dec\bW\enc\br-\tfrac12\operatorname{diag}(\dec\bW)\odot(\bm1-2\enc\br)-\beta(\enc\bu-\dec\bu_{\mathrm{ref}})$.
\State $\enc\bu\gets\enc\bu+\dfrac{1-e^{-\beta h}}{\beta}\dot{\enc\bu}$.
\EndFor
\If{this is the final outer step}
\State $\enc\br\gets\sigma(\enc\bu)$, $\enc\bC\gets\operatorname{diag}[\enc\br\odot(\bm1-\enc\br)]$, $\enc\bE\gets\EE_{\mathrm{batch}}[\enc\bC+\enc\br\enc\br^\top]$.
\State $\enc V_{mm}\gets\EE_{\mathrm{batch}}[(\bx-\dec\bPhi\enc\br)_m^2+(\dec\bPhi\enc\bC\dec\bPhi^\top)_{mm}]$.
\State $\dec\bPhi^*\gets\EE_{\mathrm{batch}}[\bx\enc\br^\top]\enc\bE^{-1}$.
\State $\dec\bPhi\gets\dec\bPhi^*+(\dec\bPhi-\dec\bPhi^*)\exp[-\eta\ssub{\Phi}\Delta t\,\enc\bE\dec\bE_p^{-1}]$.
\State Update enabled variance and bias parameters by \cref{eq:app-ema,eq:app-bias-flow}, then apply the stated decoder constraints.
\EndIf
\EndFor
\If{an initial-prior update is scheduled}
\State Infer $\enc\br$ on the prior-update batch, then detach it.
\State $\dec{\boldsymbol\pi}_0\gets\EE_{\mathrm{batch}}[\enc\br]+(\dec{\boldsymbol\pi}_0-\EE_{\mathrm{batch}}[\enc\br])e^{-\eta_0\beta\Delta t}$.
\State Apply the specified prior floor and mean-probability cap.
\EndIf
\EndFor
\end{algorithmic}
\end{algorithm}

The decoder line is the frozen-moment matrix-exponential update. Implementations can apply its matrix functions without forming dense inverses. ETD1 uses gain $1/\beta$ at $h=\infty$. Evaluation resets the initial prior for each input and performs inference without learning.

\subsection{Computational complexity}
\label{app:complexity}
Let $B$ be minibatch size, $d$ input dimension, $K$ dictionary width, and $T$ the total number of inner inference steps for an input. A dense SAE encoder costs $O(BdK)$; its training step has the same order, with a larger constant for decoding and backpropagation. \befond applies the lateral operator as $\dec\bPhi^\top[\dec\bPhi\enc\br]$ (with diagonal observation precision inserted when used), so it does not form a dense $K\times K$ Gram matrix. Its costs are:
\begin{equation}
\begin{aligned}
\text{\befond inference:}&\qquad O(TBdK)~,\\
\text{\befond training step:}&\qquad O(TBdK+C_{\mathrm{NGD}}+C_{\mathrm{prior}})~,
\end{aligned}
\end{equation}
where $C_{\mathrm{NGD}}$ includes the moment products and linear solves used by the decoder update, and $C_{\mathrm{prior}}$ is the amortized cost of the separate initial-prior fits.

Solver work depends on minibatch rank, conditioning, and numerical tolerance. \Cref{app:numerical-implementation} describes the matrix-free evaluation. No gradient is propagated through the $T$ inference steps, avoiding storage of that trajectory for backpropagation.

If a conventional SAE update uses $c$ units of dense matrix-multiplication work, its cost is $O(cBdK)$. With $T=O(c)$, \befond's inference contribution has that same order. Our current implementation takes approximately $12$--$14\times$ as long as BatchTopK for the measured training prefix (\cref{app:training-runtime}).

\clearpage
\section{Extended theoretical analyses, synthetic experiments, and ablations}
\label{app:results}
\raggedbottom
\subsection{Experimental setup}
\label{app:methods}

\subsubsection{Datasets}
\label{app:datasets}
\paragraph{SynthSAEBench.}
We measure dictionary and support recovery using the known generating features of \synthsaebench \citep{chanin2026synthsaebench}. The full-width experiment uses $d=768$ and $K=K_\star=16{,}384$. The superposition experiment fixes $K=4{,}096$, $K_\star=16{,}384$, and varies $d$ from 256 to 1,536. Inputs are unwhitened, with parent scaling disabled (\texttt{scale\_children\_by\_parent=false}). Feature probabilities follow a Zipf law with exponent 0.5 over $[0.0005,0.4]$. The generator uses 128 roots, branching factor 4, depth 3, and probability compensation. 

\subsubsection{Training}
\label{app:training}
\befond alternates inference and learning as in \cref{alg:fond}. \Cref{tab:app-training} groups the synthetic settings. Prior-update examples are counted separately from decoder-update examples. Geometry arms are separately tuned. Their early checkpoints include recipe-matched reproductions.
\begin{table}[htbp]
\centering\small
\caption{SynthSAEBench training settings. Decoder and prior examples are counted separately. Full-width runs use $d=768$, $K=K_\star=16{,}384$; superposition runs use $K=4{,}096$.}
\label{tab:app-training}
\label{tab:app-geometry-settings}
\label{tab:app-superposition-settings}
\begin{tabular}{@{}p{0.25\linewidth}p{0.70\linewidth}@{}}
\toprule
Experiment & Settings \\
\midrule
\befond best / continuous & Seed 0; batch 2,048; 88,779 updates; 181.82M decoder and 18.18M prior examples; linear LR $0.02\to10^{-5}$, no warmup; $\beta=1.6$ \\
Inference / decoder & Bernoulli; ETD1 $h=\infty$; one inner step; uniform training depth 1--10, test depth 200. Prior Fisher, current moments, no damping; $\Delta t=1$; bias and variance fixed \\
Initialization / constraints & Gaussian scale 0.1443375673; prior logits $-4.7538364662$; soft decoder norm 3.5 with coefficient 100; prior mean cap $3\times10^{-4}$ and floor $10^{-5}$ \\
Prior / matrix functions & Prior update every 10 batches on 2,048 separate examples; depth $\le200$, KL threshold 0.01, rejection threshold 1, window 10. Taylor limit 32 applications; Chebyshev tolerance $10^{-5}$, maximum degree 2,048 \\
Original baselines & BatchTopK/JumpReLU/Matryoshka: target $L_0=35/45/20$, batch 1,024, LR $3\times10^{-4}$, 200M decoder examples. MP/ReLU-L1: target $L_0=35$, 200,000,512 decoder plus 19M monitoring examples. MF: 20 inference steps in training and testing, batch 1,024, 195,313 updates (200,000,512 examples), cosine schedule with dictionary / other-parameter LR $0.001/0.003$, damping 1 on the first step then 0.2, decoder norm cap 4 \\
Modified baselines & Target $L_0=45/40/35$ for BatchTopK/JumpReLU/Matryoshka; MinFire or resampling and learning-rate decay \\
Training streams & Model seed 0; \befond data seed 1; plain amortized baseline data seed 100101; native MP/ReLU-L1 use the global PyTorch and NumPy streams initialized with seed 0, without a separate training-stream seed; MF uses data seed 0 \\
\midrule
Geometry comparison & Batch 2,048; 100,000 updates; 150,000-step cosine schedule stopped at 100,000; bias and variance fixed. Tuples below are $(\beta,\text{norm},\text{prior cap},\text{peak LR})$ \\
N/E; E/E; N/N & $(1.125,2.5,0.001,0.1)$; $(0.8,3.5,0.01,0.1)$; $(1.125,3.5,0.0002,0.006)$. Natural inference: ETD1 $h=\infty$; Euclidean inference: Euler $h=128$ \\
\midrule
Superposition / lateral & Baseline comparison: initialization seeds 0, 1, 2. Lateral ablations: seed 0. Batch 1,024. Recipe differences are specified in Section~\ref{app:superposition-results} \\
\bottomrule
\end{tabular}
\end{table}

\subsubsection{Evaluation}
\label{app:evaluation}
\paragraph{Synthetic recovery.}
MCC is dictionary mean correlation coefficient, feature-macro F1 uses positive-cosine one-to-one matching to the generating dictionary. and rare F1 averages over its least frequent quartile. A latent is dead when it is inactive on every test example. Activity is $\enc r_i>0.5$ for Bernoulli \befond/MF and positive native output for the SAE baselines. Mean $L_0$ counts active latents per example. Tests use $10^6$ examples with seed 49100, including the MF reevaluation reported in \Cref{tab:synthsae-final}. The independent frequency reference uses $5\times10^5$ examples with seed 315031. \befond/MF use 200/20 test inference steps. Geometry validation uses 32,768 examples, seed 813011, and 100 inference steps.

\subsubsection{Implementation}

\paragraph{Training horizon.}
Each optimizer step draws one inference horizon for the whole minibatch, resets the posterior, runs that many outer steps, and updates the decoder only at the final step. The exponential sampler assigns:
\begin{equation}
 \Pr(t=j)=\frac{e^{-\lambda(j-1)}}{\sum_{s=1}^{t_{\mathrm{outer}}}e^{-\lambda(s-1)}},
 \qquad j=1,\ldots,t_{\mathrm{outer}},
 \label{eq:app-horizon}
\end{equation}
where $\lambda$ is solved numerically so the \textit{truncated discrete} distribution has the configured mean.

Probabilities use FP64. Brent's root solver uses tolerance $10^{-14}$. At mean 1 the distribution is a point mass at 1, and at mean $(t_{\mathrm{outer}}+1)/2$, it is uniform. Sampling uses NumPy's \texttt{default\_rng} seeded by the model seed and zero-based optimizer step, identically on every device and after resumption. The draw does not advance the data or latent-sampling generators.

K1 uses a fixed horizon of one and one inner step. Its initial-prior fit and test inference can still use longer horizons. Uniform sampling on 1--10 has mean 5.5, $\Pr(t=1)=0.1$, and 95th percentile 10. Exponential sampling on 1--50 with the same mean has $\Pr(t=1) \approx 0.182$ and 95th percentile $15$.

Synthetic baseline implementations use SAE Lens 6.51.0 (BatchTopK/JumpReLU/Matryoshka) or 6.37.0 (native MP/ReLU-L1).

\subsection{Inference proofs}
\label{app:proofs}

\subsubsection{Single-layer inference}

\label{app:proof-one-layer-inference}
\begin{proposition}[Two-atom case]
\label{prop:one-layer-inference}
Let $z_i \in \{0,1\}$ with $z_i \sim \operatorname{Bernoulli}(\dec{\pi}_i)$ independently,
with $0<\dec{\pi}_i<1$, and let
$\bx = \dec{\bphi}^{(1)}z_1 + \dec{\bphi}^{(2)}z_2 + \bepsilon$,
where $\bepsilon\sim\cN(\bm{0},\bI_M)$ independently of $\bz$.
If the dictionary atoms are linearly independent and
$\dec{W}_{12}=(\dec{\bphi}^{(1)})^\top\dec{\bphi}^{(2)}\neq0$, then no encoder
$\enc{\hat z}_1(\bx)=f(\enc{\bw}^\top\bx+\enc{b})$, for any scalar function $f$,
can equal $p(z_1=1\mid\bx)$ for every input $\bx$.
\end{proposition}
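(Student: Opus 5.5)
We compute the exact posterior log-odds and show that its gradient in $\bx$ is not confined to a single direction, which any function $f(\enc{\bw}^\top\bx+\enc{b})$ would force. Write $\bphi_1\equiv\dec{\bphi}^{(1)}$ and $\bphi_2\equiv\dec{\bphi}^{(2)}$. Marginalizing $z_2$, Bayes' rule gives
\[
\frac{p(z_1=1\mid\bx)}{p(z_1=0\mid\bx)}
=\frac{\dec{\pi}_1}{1-\dec{\pi}_1}\,
\frac{(1-\dec{\pi}_2)e^{\bphi_1^\top\bx-\frac12\|\bphi_1\|^2}+\dec{\pi}_2 e^{(\bphi_1+\bphi_2)^\top\bx-\frac12\|\bphi_1+\bphi_2\|^2}}{(1-\dec{\pi}_2)+\dec{\pi}_2e^{\bphi_2^\top\bx-\frac12\|\bphi_2\|^2}}.
\]
Set $s=\bphi_2^\top\bx$. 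The log-odds then take the form
\[
L(\bx)=\bphi_1^\top\bx+g(s),\qquad
g(s)=\log\frac{a+c\,e^{s-\dec{W}_{12}}}{a+c\,e^{s}}+\text{const},
\]
with $a=1-\dec{\pi}_2>0$ and $c=\dec{\pi}_2e^{-\frac12\|\bphi_2\|^2}>0$. Since $p(z_1=1\mid\bx)=\sigma(L(\bx))$ and $\sigma$ is a bijection onto $(0,1)$, the hypothesized identity $f(\enc{\bw}^\top\bx+\enc{b})=\sigma(L(\bx))$ forces $f$ to take values in $(0,1)$ on the range of $\enc{\bw}^\top\bx+\enc{b}$. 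We would then have $L(\bx)=h(\enc{\bw}^\top\bx+\enc{b})$ with $h=\operatorname{logit}\circ f$, so $L$ is constant on every hyperplane $\{\enc{\bw}^\top\bx=\text{const}\}$; if $\enc{\bw}=\bm{0}$, $L$ is constant everywhere.

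The contradiction comes from the structure of $g$. First, $g$ is strictly monotone and non-constant whenever $\dec{W}_{12}\neq0$. Indeed, $g'(s)=\sigma(s+\kappa-\dec{W}_{12})-\sigma(s+\kappa)$ with $\kappa=\log(c/a)$, which has the constant sign of $-\dec{W}_{12}$ and never vanishes. Second, $g$ is bounded: $g(s)\to\text{const}$ as $s\to-\infty$ and $g(s)\to-\dec{W}_{12}+\text{const}$ as $s\to+\infty$.

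Because $\bphi_1$ and $\bphi_2$ are linearly independent, $\bx\mapsto(\bphi_1^\top\bx,\bphi_2^\top\bx)$ is onto $\RR^2$. We can therefore work in the coordinates $(t,s)=(\bphi_1^\top\bx,\bphi_2^\top\bx)$, where $L=t+g(s)$. If $\enc{\bw}$ is not in $\operatorname{span}(\bphi_1,\bphi_2)$, pick $\bv$ with $\enc{\bw}^\top\bv\neq0$ and $\bv\perp\bphi_1,\bphi_2$. Every level set of $\enc{\bw}^\top\bx$ then meets every $(t,s)$ fiber, so $L$ would have to be constant on all of $\RR^2$, which is false. If $\enc{\bw}=\alpha\bphi_1+\gamma\bphi_2$ is in the span, choose $\bu$ with $\bphi_1^\top\bu=0$, $\bphi_2^\top\bu=1$ and $\bu'$ with $\bphi_1^\top\bu'=1$, $\bphi_2^\top\bu'=0$. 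The direction $\bd=\bu-\tau\bu'$ leaves $\enc{\bw}^\top\bx$ fixed for a suitable $\tau$ (provided $\alpha\neq0$), and moving along it changes $(t,s)$ by $(-\tau,1)$ per unit. Invariance of $L$ then requires
\[
g(s+\lambda)-\tau\lambda=g(s)\qquad\text{for all }s,\lambda.
\]
This makes $g$ affine with slope $\tau$. Since $g$ is bounded, $\tau=0$, so $g$ is constant, contradicting $g'\neq0$. If $\alpha=0$, the direction $\bu'$ preserves $\enc{\bw}^\top\bx$ but increases $L$ at unit rate, again a contradiction.

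The main obstacle is the bookkeeping in this case split. It is cleanest to phrase everything as: "$L$ is invariant under translation by every $\bd\in\ker\enc{\bw}^\top$", and then exhibit some $\bd\in\ker\enc{\bw}^\top$ with $(\bphi_1^\top\bd,\bphi_2^\top\bd)\neq(0,0)$ along which $t+g(s)$ changes. Such a $\bd$ exists because $\ker\enc{\bw}^\top$ has dimension at least $M-1\ge1$ and its image under the surjective map onto $\RR^2$ has dimension at least $1$. Along the line $(t,s)+\lambda(p,q)$, the function $\lambda p+g(s+\lambda q)$ is non-constant. If $q=0$ it is linear with $p\neq0$. If $q\neq0$, the bounded non-constant $g$ cannot cancel a linear term for all $\lambda$, and when $p=0$ we use $g'\neq0$ directly. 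Since no measurability or regularity of $f$ is needed, this argument covers every scalar function $f$.
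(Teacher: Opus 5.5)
Your proof is correct and takes essentially the paper's route: marginalize $z_2$ to write the log-odds as $L(\bx)=\bphi_1^\top\bx+g(\bphi_2^\top\bx)$, where $g$ is nonlinear when $W_{12}\neq0$, and then contradict the invariance of $L$ along $\ker\bw^\top$ that any encoder $f(\bw^\top\bx+b)$ would impose. The finish differs slightly. The paper notes that $\nabla L=\bphi_1+g'(s)\bphi_2$ changes direction because $g'$ is nonconstant, whereas you exhibit one $\bd\in\ker\bw^\top$ with $(\bphi_1^\top\bd,\bphi_2^\top\bd)\neq(0,0)$ and use the monotonicity and boundedness of $g$ to show $L$ varies along it. Your version avoids differentiation and makes the ``no regularity on $f$'' claim explicit.
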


Assume independent Bernoulli latents with interior prior probabilities and unit observation covariance. Set:
$$
 a_i(\bx) \;=\; \operatorname{logit}(\dec\pi_i)+(\dec\bphi^{(i)})^\top\bx-\tfrac12\|\dec\bphi^{(i)}\|^2,
 \qquad \dec W_{ij} \;=\; (\dec\bphi^{(i)})^\top\dec\bphi^{(j)}.
$$
\begin{proof}[Proof of \Cref{prop:one-layer-inference}]
Marginalizing $z_2$ gives
\begin{align}
 \operatorname{logit}p(z_1=1\mid\bx)
 \;&=\; a_1(\bx)+\log\frac{1+e^{a_2(\bx)-\dec W_{12}}}{1+e^{a_2(\bx)}},\notag\\
 \nabla_{\bx}\operatorname{logit}p(z_1=1\mid\bx)
 \;&=\; \dec\bphi^{(1)}+[\sigma(a_2-\dec W_{12})-\sigma(a_2)]\dec\bphi^{(2)}.
\end{align}
When $\dec W_{12}\ne0$, the bracket is nonconstant. Linear independence of the atoms therefore makes the gradient direction vary with $\bx$. A posterior of the form $f(\enc\bw^\top\bx+\enc b)$ is constant along every direction orthogonal to $\enc\bw$, so its log-odds gradient must be parallel to $\enc\bw$ everywhere. This is a contradiction, even if $f$ is not assumed differentiable.
\end{proof}

\paragraph{Multiple atoms.}
\label{app:proof-one-layer-inference-K}
By assumption, some atom $\dec\bphi^{(i)}$, $i\ge2$, is neither orthogonal to nor a multiple of $\dec\bphi^{(1)}$.
\begin{proof}[Proof of \Cref{prop:one-layer-inference-K}]
Suppose $f(\enc\bw^\top\bx+\enc b)=p(z_1=1\mid\bx)$ for every $\bx$, and write $L(\bx)=\operatorname{logit}p(z_1=1\mid\bx)$. As in the proof of \cref{prop:one-layer-inference}, $\nabla L$ is then parallel to $\enc\bw$ everywhere. Differentiation of the finite partition functions gives:
\begin{equation}
 \nabla L \;=\; \dec\bphi^{(1)} \,+\, \sum_{k\ge2}\Delta_k\dec\bphi^{(k)}~,
 \qquad
 \Delta_k \;=\; \EE[z_k\mid z_1=1,\bx] \,-\, \EE[z_k\mid z_1=0,\bx]~.
 \label{eq:app-K-gradient}
\end{equation}
Along $\bx=t\bd$, with $\bd$ orthogonal to no atom, the fields of all other bits diverge while their couplings remain bounded. Each bit becomes certain independently of $z_1$. Hence, $\Delta_k\to0$ and $\nabla L\to\dec\bphi^{(1)}$. Therefore, $\enc\bw\parallel\dec\bphi^{(1)}$, and $\nabla L^\top\bv=0$ everywhere for every $\bv\perp\dec\bphi^{(1)}$.

Let $Q$ contain the atoms parallel to the chosen atom $i$, writing $\dec\bphi^{(j)}=\lambda_j\dec\bphi^{(i)}$. Choose $\bv\perp\dec\bphi^{(1)}$ with $(\dec\bphi^{(i)})^\top\bv\ne0$, and choose $\bd\perp\dec\bphi^{(i)}$ orthogonal to no atom outside $Q$. Such a $\bd$ exists because only finitely many proper hyperplanes are excluded. Along $\bx=\bx_0+t\bd$, all bits outside $Q\cup\{1\}$ freeze, leaving finite conditional partition functions $Z_0(s),Z_1(s)$ with $s=(\dec\bphi^{(i)})^\top\bx_0$. Thus
\[
 \nabla L^\top\bv\longrightarrow ((\dec\bphi^{(i)})^\top\bv)D'(s),\qquad D(s)=\log\frac{Z_1(s)}{Z_0(s)}.
\]
As $s$ moves from $-\infty$ to $+\infty$, every bit in $Q$ switches according to the sign of $\lambda_j$. Since $\dec W_{1j}=\lambda_j\dec W_{1i}$,
\[
 D(+\infty)-D(-\infty)=-\dec W_{1i}\sum_{j\in Q}|\lambda_j|\ne0.
\]
Therefore $D'$ is nonzero somewhere, contradicting $\nabla L^\top\bv=0$ for a single-projection encoder.
\end{proof}
Orthogonality and parallelism of two atoms are polynomial conditions on $\dec\bPhi$ that fail for almost every dictionary when $M\ge2$, so for almost every dictionary with $K\ge2$ no latent falls under the exception. If no two atoms are parallel, exceptions must be orthogonal to every other atom. For $K>M$, at most $M-1$ atoms can have that property. Whitening extends both results to any fixed positive-definite $\dec\bSigma$, with $\dec\bW=\dec\bPhi^\top\dec\bSigma^{-1}\dec\bPhi$.

\subsubsection{Iterative inference versus single projection encoders}
\label{app:proof-befond-kl}

\begin{proposition}[Fixed-prior inference versus single-projection encoders]
\label{prop:befond-kl}
Consider the Bernoulli--Gaussian model of
\cref{prop:one-layer-inference-K} with unit observation
covariance. Let
\[
\dec\bW=\dec\bPhi^\top\dec\bPhi,
\qquad
\dec{\bm J}=\dec\bW-\operatorname{diag}(\dec\bW),
\]
and suppose that
\[
\lambda_{\min}(\dec{\bm J})>-4
\]
and that some pair of atoms is neither orthogonal nor parallel.
Let $\mathcal A$ contain all encoders
\[
q_f(\bz\mid\bx)
=\prod_i\operatorname{Bernoulli}\bigl(z_i;\,f_i(\enc\bw_i^\top\bx+\enc b_i)\bigr)
\]
with arbitrary weights, biases, and measurable $f_i:\RR\to[0,1]$,
and let
\[
R(q)=\EE_{\bx\sim p(\bx)}\bigl[\KL{q(\cdot\mid\bx)}{p(\cdot\mid\bx)}\bigr].
\]
Let $q_t$ be the factorized posterior with means
\[
\enc\br(t)=\sigma(\enc\bu(t)),
\]
where $\enc\bu(t)$ follows \cref{eq:befond-inference-flow}
with $\beta=1$ and the prior fixed at the generating one,
\[
\dec\bu_0=\operatorname{logit}\dec{\boldsymbol\pi},
\qquad \enc\bu(0)=\dec\bu_0.
\]
Then the optimal factorized posterior
\[
q^*(\cdot\mid\bx)
=\operatorname*{arg\,min}_{q\ \text{factorized}}\KL{q}{p(\cdot\mid\bx)}
\]
is unique and
\[
\lim_{t\to\infty}R(q_t)=R(q^*)<\inf_{q_f\in\mathcal A}R(q_f).
\]
Consequently, for all sufficiently large finite $t$,
\[
R(q_t)<\inf_{q_f\in\mathcal A}R(q_f).
\]
\end{proposition}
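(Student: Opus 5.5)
The plan is to work pointwise in $\bx$ with the mean-field objective, then integrate. With $\beta=1$, unit covariance and the prior fixed at the generating $\dec{\boldsymbol\pi}$, the identity \cref{eq:fe-posterior-identity} gives $\KL{q_{\br}}{p(\cdot\mid\bx)}=\Psi_{\bx}(\br)+\log p(\bx)$. Here, using \cref{eq:app-self-cancellation},
\[
\Psi_{\bx}(\br)=-\br^\top\bigl(\dec\bPhi^\top\bx-\tfrac12\diag(\dec\bW)+\operatorname{logit}\dec{\boldsymbol\pi}\bigr)+\tfrac12\br^\top\dec{\bm J}\br+\sum_i\bigl[r_i\log r_i+(1-r_i)\log(1-r_i)\bigr]+\mathrm{const}.
\]
The Hessian in mean coordinates is $\dec{\bm J}+\diag\bigl(1/[r_i(1-r_i)]\bigr)\succeq\dec{\bm J}+4\bI\succeq c\,\bI$ with $c:=\lambda_{\min}(\dec{\bm J})+4>0$. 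So $\Psi_{\bx}$ is $c$-strongly convex on $(0,1)^K$. The entropy gradient diverges at the boundary, so the minimizer $\br^*(\bx)$ is unique and interior. This gives uniqueness of $q^*$, characterized by the fixed point $\bu^*=\dec\bPhi^\top\bx-\tfrac12\diag(\dec\bW)+\operatorname{logit}\dec{\boldsymbol\pi}-\dec{\bm J}\br^*$.

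\textbf{Convergence of the flow.} By \cref{eq:tr-flow-descent}, $\Psi_{\bx}(\br(t))$ is nonincreasing along \cref{eq:befond-inference-flow}. The sublevel set $\{\Psi_{\bx}\le\Psi_{\bx}(\dec{\boldsymbol\pi})\}$ is a compact subset of the open cube, so the trajectory exists globally and stays there. By LaSalle's principle together with uniqueness of the critical point, $\br(t)\to\br^*(\bx)$. For the limit of $R$, note that $0\le\KL{q_t}{p(\cdot\mid\bx)}\le\KL{p_0}{p(\cdot\mid\bx)}$. The right side is dominated by a quadratic in $\bx$, which is integrable under the Gaussian mixture $p(\bx)$. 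Continuity of $\Psi_{\bx}$ and dominated convergence then give $R(q_t)\to R(q^*)$. The final claim about large finite $t$ follows immediately once the strict inequality is established.

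\textbf{Strict gap.} Every $q_f\in\mathcal A$ is factorized, so strong convexity gives
\[
R(q_f)-R(q^*)\ge\tfrac c2\,\EE\|\br_f(\bx)-\br^*(\bx)\|^2\ge\tfrac c2\,\EE\bigl(f_i(\enc\bw_i^\top\bx+\enc b_i)-r^*_i(\bx)\bigr)^2 .
\]
This holds for any chosen coordinate $i$, namely an atom with a partner $j$ that is neither orthogonal nor parallel to it. It therefore suffices to show that
\[
\delta:=\inf_{\bw}\EE\bigl[\operatorname{Var}(r^*_i(\bx)\mid\bw^\top\bx)\bigr]>0,
\]
because the optimal $f_i$ for fixed $(\bw,b)$ is the conditional expectation, and $b$ is irrelevant. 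The argument has two parts.

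First, I show that $r^*_i$ is not a.e.\ equal to any function of a single projection. Since $r^*$ is real-analytic in $\bx$ (implicit function theorem applied to the nondegenerate fixed point), this reduces to showing that $\nabla_{\bx}u^*_i$ is not everywhere parallel to a fixed vector. Differentiating the fixed point gives
\[
\nabla u^*_i=\dec\bphi^{(i)}-\sum_k J_{ik}\,r^*_k(1-r^*_k)\,\nabla u^*_k .
\]
I would then repeat the asymptotic argument from the proof of \cref{prop:one-layer-inference-K}. Sending $\bx$ to infinity along generic rays saturates all other bits, which forces the direction to $\dec\bphi^{(i)}$. Then moving along $\bd\perp\dec\bphi^{(j)}$ while sweeping $s=(\dec\bphi^{(j)})^\top\bx$ makes bit $j$ switch while the others freeze. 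This produces a component $-J_{ij}\,r_j^*(1-r_j^*)\,\nabla u_j^*$ that is not parallel to $\dec\bphi^{(i)}$ and is nonzero at the switching point.

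Second, I need compactness. The map $\bw\mapsto\EE[\operatorname{Var}(r^*_i\mid\bw^\top\bx)]$ depends only on the direction $\bw/\|\bw\|$ when $\bw\neq0$. When $\bw=0$ it equals $\operatorname{Var}(r^*_i)>0$, and this also bounds the value from below. I would prove lower semicontinuity on the sphere using the smooth Gaussian-mixture density of $\bx$: conditional expectations given $\bw^\top\bx$ vary continuously in $L^2$ with $\bw$, since $r^*_i$ is bounded and continuous. Together with pointwise positivity from the first part, compactness of the sphere yields $\delta>0$. The main obstacle is this uniform positivity: the pointwise non-representability is essentially already in the paper, while the continuity of the conditional-variance functional in $\bw$ needs a careful (though standard) argument via disintegration of the mixture density.
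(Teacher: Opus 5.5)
Your skeleton is the paper's. It uses the same strongly convex mean-field objective (your $c=\lambda_{\min}(\bm J)+4$ is the paper's $4m$), the same strong-convexity bound $R(q_f)-R(q^*)\ge\tfrac c2\,\EE\|\br_f-\br^*\|^2$, and the same asymptotic-gradient argument: $\nabla_{\bx}u_i^*\to\bphi^{(i)}$ along generic rays, but it acquires a $\bphi^{(j)}$-component along rays orthogonal to $\bphi^{(j)}$. Two sub-arguments differ.

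\emph{Convergence.} The paper uses the Bregman divergence $V(t)=\KL{q_{\br^*}}{q_{\br(t)}}$ as a Lyapunov function. Convexity gives $\dot V\le-[F_{\bx}(\br(t))-F_{\bx}(\br^*)]$, and monotonicity of $F_{\bx}$ then yields $F_{\bx}(\br(t))-F_{\bx}(\br^*)\le V(0)/t$. Since $V(0)\le\sum_i-\log\min\{\pi_i,1-\pi_i\}$ uniformly in $\bx$, this gives $R(q_t)\to R(q^*)$ at rate $O(1/t)$, with no compactness or dominated-convergence step. Your LaSalle-plus-dominated-convergence route is only qualitative, but it is valid once compactness is fixed (below).

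\emph{Positivity of $\delta$.} The paper argues by contradiction. A minimizing sequence $g_n=f_n(\bw_n^\top\bx+b_n)\to r_i^*$ in $L^2(p)$ with unit $\bw_n\to\bw$ lets $\int g_n\,\partial_{\bv_n}\psi_0=0$ pass to the limit, forcing $r_i^*$ to be constant along $\bw^\perp$. That needs only a continuous positive density. You instead need $L^2$-continuity of $\bw\mapsto\EE[r_i^*\mid\bw^\top\bx]$ on the sphere. This does hold here: for a Gaussian mixture and bounded continuous $r_i^*$, the conditional mean $m_{\bw}(s)$ is jointly continuous in $(s,\bw)$, so dominated convergence applies. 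Your compactness argument therefore closes, at the cost of the disintegration bookkeeping you anticipated. Incidentally, $\operatorname{Var}(r_i^*)$ is an upper, not a lower, bound on $\EE[\operatorname{Var}(r_i^*\mid\bw^\top\bx)]$; that is exactly why the case $\bw=0$ is harmless.

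Two steps need repair.

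First, the sublevel set $\{\Psi_{\bx}\le\Psi_{\bx}(\boldsymbol\pi)\}$ is not in general contained in the open cube. $\Psi_{\bx}$ is finite on the boundary, and boundary points can lie below the starting value. For one decoupled coordinate with $\pi=1/2$ and field $a$, $\Psi(1)=-a\le-\log 2-a/2=\Psi(1/2)$ once $a\ge 2\log 2$. Since $\dot\Psi_{\bx}=-\sum_k r_k(1-r_k)(\partial_{r_k}\Psi_{\bx})^2$ degenerates at the boundary, LaSalle really does need the logits to stay bounded. The right reason is the leak term: $\dot\bu=(\ba-\bm J\br)-\bu$ with $\ba-\bm J\br$ bounded over $\br\in[0,1]^K$, so $|u_k(t)|\le\max\{|u_k(0)|,\sup_{\br}|(\ba-\bm J\br)_k|\}$.

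Second, ``bit $j$ switches while the others freeze'' overlooks other atoms parallel to $\bphi^{(j)}$. These also do not saturate along $\bd\perp\bphi^{(j)}$, and their contributions to the $\bphi^{(j)}$-component could a priori cancel. The paper treats the parallel class $Q$ explicitly: the limit is $\bphi^{(i)}-W_{ij}(\blambda^\top\bm H_Q^{-1}\blambda)\bphi^{(j)}$, which is nonzero because the reduced Hessian $\bm H_Q$ is positive definite. Your sweep also works if you sum over $Q$: in the frozen limit, $W_{ij}\sum_{k\in Q}\lambda_k r_k^*$ changes by $W_{ij}\sum_{k\in Q}|\lambda_k|\ne0$ as $s$ runs over $\mathbb R$. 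Zero atoms never saturate either, but they have $J_{ik}=0$ and can simply be dropped, as the paper does.
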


\begin{proof}
Zero atoms decouple, since their rows of $\dec{\bm J}$ vanish, so we
drop them. Write
$a_i(\bx)=\operatorname{logit}(\dec\pi_i)+(\dec\bphi^{(i)})^\top\bx-\tfrac12\dec W_{ii}$
and
$\psi(\enc\br)=\sum_i\bigl[\enc r_i\log\enc r_i+(1-\enc r_i)\log(1-\enc r_i)\bigr]$
for the negative Bernoulli entropy. For a factorized $q$ with means
$\enc\br\in[0,1]^K$, using $z_i^2=z_i$, the divergence
$\KL{q}{p(\cdot\mid\bx)}$ equals, up to a term independent of $\enc\br$,
\[
F_{\bx}(\enc\br)=\psi(\enc\br)-\ba(\bx)^\top\enc\br
+\tfrac12\enc\br^\top\dec{\bm J}\enc\br .
\]
Its Hessian
$\operatorname{diag}\bigl(1/(\enc r_i(1-\enc r_i))\bigr)+\dec{\bm J}$
is smallest at $\enc\br=\tfrac12\bm1$, where it equals
$4\bI+\dec{\bm J}$, so $\nabla^2F_{\bx}\succeq4m\,\bI$ with
$m=1+\lambda_{\min}(\dec{\bm J})/4>0$. Hence $F_{\bx}$ has a unique
minimizer $\enc\br^*(\bx)\in(0,1)^K$, which defines $q^*$ and satisfies
$\enc\bu^*:=\operatorname{logit}\enc\br^*=\ba-\dec{\bm J}\enc\br^*$.

\textit{Convergence.} With $\beta=1$ and
$\dec\bu_0=\operatorname{logit}\dec{\boldsymbol\pi}$,
\cref{eq:befond-inference-flow} reads
$\dot{\enc\bu}=\ba-\dec{\bm J}\enc\br-\enc\bu=-\nabla F_{\bx}(\enc\br)$,
a smooth, globally Lipschitz flow in $\enc\bu$. Let
$V(t)=\KL{q_{\enc\br^*}}{q_{\enc\br(t)}}$, the Bregman divergence of
$\psi$. Then
$\dot V=-\langle\nabla F_{\bx}(\enc\br),\enc\br-\enc\br^*\rangle
\le-\bigl[F_{\bx}(\enc\br)-F_{\bx}(\enc\br^*)\bigr]$ by convexity, while
$\tfrac{d}{dt}F_{\bx}(\enc\br)
=-\nabla F_{\bx}^\top\operatorname{diag}\bigl(\enc r_i(1-\enc r_i)\bigr)\nabla F_{\bx}\le0$.
Hence
\[
t\bigl[F_{\bx}(\enc\br(t))-F_{\bx}(\enc\br^*)\bigr]
\le\int_0^t\bigl[F_{\bx}(\enc\br(s))-F_{\bx}(\enc\br^*)\bigr]ds
\le V(0)\le\sum_i-\log\min\{\dec\pi_i,1-\dec\pi_i\}
\]
for every $\bx$. Since
$R(q_t)-R(q^*)=\EE\bigl[F_{\bx}(\enc\br(t))-F_{\bx}(\enc\br^*)\bigr]$,
this gives $R(q_t)\to R(q^*)$ at rate $O(1/t)$.

\textit{$\enc r_i^*$ is not a single-projection function.}

Choose $i,j$ with $\dec W_{ij}\neq0$ and $\dec\bphi^{(i)},\dec\bphi^{(j)}$ not parallel. Implicit differentiation of $\nabla F_{\bx}(\enc\br^*)=\bm0$
gives $\partial\enc\br^*/\partial\bx=\bm H_{\bx}^{-1}\dec\bPhi^\top$ with
$\bm H_{\bx}=\nabla^2F_{\bx}(\enc\br^*)\succeq4m\,\bI$, and
$\partial\enc\bu^*/\partial\bx
=\dec\bPhi^\top-\dec{\bm J}\bm H_{\bx}^{-1}\dec\bPhi^\top$. Both are
bounded. Along a ray $\bx=t\bd$ with $\bd$ orthogonal to no atom,
$\enc\bu^*=\ba-\dec{\bm J}\enc\br^*$ forces every coordinate to
saturate, so $\partial\enc\br^*/\partial\bx
=\operatorname{diag}\bigl(\enc r_k^*(1-\enc r_k^*)\bigr)\,\partial\enc\bu^*/\partial\bx\to\bm0$
and $\nabla_{\bx}\enc u_i^*\to\dec\bphi^{(i)}$. Now let
$Q=\{k:\dec\bphi^{(k)}=\lambda_k\dec\bphi^{(j)}\}$ and take a ray
$\bx=t\bv$ with $\bv\perp\dec\bphi^{(j)}$ orthogonal to no atom outside
$Q$. The coordinates outside $Q$ saturate, while $\enc\br^*_Q$ converges
to the minimizer $\bs\in(0,1)^{Q}$ of the reduced problem, which is
strongly convex because
$\lambda_{\min}(\dec{\bm J}_{QQ})\ge\lambda_{\min}(\dec{\bm J})$. With
$\bm H_Q=\operatorname{diag}\bigl(1/(s_k(1-s_k))\bigr)+\dec{\bm J}_{QQ}\succ0$,
the limiting equations give
$\partial\enc\br^*_Q/\partial\bx\to\bm H_Q^{-1}\blambda(\dec\bphi^{(j)})^\top$,
and since $\dec W_{ik}=\lambda_k\dec W_{ij}$ for $k\in Q$,
\[
\nabla_{\bx}\enc u_i^*(t\bv)\longrightarrow
\dec\bphi^{(i)}-\dec W_{ij}\,\bigl(\blambda^\top\bm H_Q^{-1}\blambda\bigr)\,\dec\bphi^{(j)} .
\]
The coefficient is nonzero, so this limit is not parallel to
$\dec\bphi^{(i)}$. The gradient of $\enc u_i^*$ therefore has no fixed
direction, and neither $\enc u_i^*$ nor $\enc r_i^*$ is a function of a
single projection.

\textit{Positive approximation gap.} Let
$\delta=\inf_{f,\bw,b}\EE\bigl[(f(\bw^\top\bx+b)-\enc r_i^*(\bx))^2\bigr]$
over measurable $f:\RR\to[0,1]$. If $\delta=0$, take
$g_n=f_n(\bw_n^\top\bx+b_n)\to\enc r_i^*$ in $L^2(p)$, with unit
$\bw_n$ (absorbing scale into $f_n$. Constant functions can take any
direction) and, along a subsequence, $\bw_n\to\bw$. Since $p(\bx)$ is
continuous and positive, the convergence also holds in
$L^2_{\mathrm{loc}}$. Each $g_n$ is invariant along $\bw_n^\perp$, so
$\int g_n\,\partial_{\bv_n}\psi_0=0$ for every test function $\psi_0$
and $\bv_n\perp\bw_n$. Taking $\bv_n=(\bI-\bw_n\bw_n^\top)\bv\to\bv$ for
$\bv\perp\bw$ and passing to the limit gives
$\int\enc r_i^*\,\partial_{\bv}\psi_0=0$, so the smooth function
$\enc r_i^*$ is constant along $\bw^\perp$, contradicting the previous
paragraph. Hence $\delta>0$.

\textit{Gap in expected divergence.} For $q_f\in\mathcal A$ with means
$\enc\br_f$, $4m$-strong convexity and the optimality of $\enc\br^*$
give
\[
R(q_f)-R(q^*)
=\EE\bigl[F_{\bx}(\enc\br_f(\bx))-F_{\bx}(\enc\br^*(\bx))\bigr]
\ge2m\,\EE\lVert\enc\br_f-\enc\br^*\rVert_2^2
\ge2m\,\delta>0 .
\]
Together with $R(q_t)\to R(q^*)$, this proves the claim.
\end{proof}

The fixed-reference specialization of \cref{alg:fond}
discretizes this flow. With $\beta=1$ and the
reference at the prior, its inner step is
$\operatorname{logit}\enc\br\gets\operatorname{logit}\enc\br-(1-e^{-h})\nabla F_{\bx}(\enc\br)$,
that is, mirror descent with mirror map $\psi$. Since
$\nabla^2F_{\bx}\preceq\bigl(1+\lambda_{\max}(\dec{\bm J})/4\bigr)\nabla^2\psi$,
the same conclusion holds after finitely many inner steps, at a
geometric rate, whenever $1-e^{-h}\le4/(4+\lambda_{\max}(\dec{\bm J}))$
\citep{lu2018relatively}. This discrete guarantee requires both $\beta=1$ and the stated step-size bound.

The condition $\lambda_{\min}(\dec{\bm J})>-4$ is exactly the condition
for the mean-field objective to be strongly convex, so it has a single
optimum that \befond approaches. A width-independent sufficient condition follows for atoms of equal norm $c$ under unit covariance: $\lambda_{\min}(\dec{\bm J})\ge-c^2$, with equality when $K>M$, so it reduces to $c<2$ in units of the noise standard deviation. For stronger coupling the objective can have several local minima, one per competing explanation, and the proposition does not apply. 

Rolling-reference inference (\cref{app:online-prior-working-memory})
allows the inference budget to be allocated across reference updates.
It includes fixed-reference inference by assigning the entire budget
to the first inner block and returning its output before refreshing
the reference. With a general positive-definite $\dec\bSigma$, whitening gives the same statement with $\dec\bW=\dec\bPhi^\top\dec\bSigma^{-1}\dec\bPhi$ and
$\dec{\bm J}=\dec\bW-\operatorname{diag}(\dec\bW)$.
The strong-convexity condition and discrete step-size bound use this
$\dec{\bm J}$, while the sufficient norm bound applies to whitened atoms
$\dec\bSigma^{-1/2}\dec\bphi^{(i)}$.

\subsubsection{Rain--sprinkler example}
\label{app:explaining-away-example}

\cref{fig:explaining-away} illustrates the two-atom case with independent rain and sprinkler latents, each with prior probability $0.1$. The coordinates of $\bx$ are lawn and roof wetness, with:
$$
\dec{\bPhi}=\begin{pmatrix}2&2\\3&0\end{pmatrix},
\qquad \dec{\bSigma}=(1.05)^2I_2.
$$
Rain contributes to both coordinates, while the sprinkler contributes only to lawn wetness. The atoms are neither orthogonal nor parallel, so \cref{prop:one-layer-inference-K} applies after whitening. Moreover, $\lambda_{\min}(\dec{\bm J})=-4/1.05^2>-4$, satisfying \cref{prop:befond-kl}.

The exact posterior is computed by enumerating the four latent configurations.

\paragraph{Fixed-prior inference.}
The \befond panel shows the unique optimal factorized Bernoulli posterior. We solve the stationary equations from \cref{prop:befond-kl} by fixed-point iteration, initialized at $\enc\br=(0.1,0.1)$, with $\beta=1$ and the generating prior held fixed. The update map has contraction factor $1/1.05^2<1$. Iteration continues until the fixed-point residual divided by $1-1/1.05^2$ is below $2\times10^{-13}$. The panel shows the converged solution of \cref{eq:befond-inference-flow} with the decoder and reference fixed.

\paragraph{Single-projection encoder.}
We numerically fit $\enc r_i(\bx)=f_i(\enc\bw_i^\top\bx)$ to minimize the same model-averaged reverse KL to the exact posterior. Each unit direction is parameterized by an angle. A $12\times12$ angular grid over $[0,\pi)^2$ supplies five starting points for local refinement. For each direction pair, the scalar response functions are solved from the conditional stationarity equations of this KL objective and represented by piecewise-linear probabilities. Final fits use $2{,}001$ grid points per function, extending eight noise standard deviations beyond the projected component means, with constant extrapolation. Conditional expectations during fitting use 160-point Gauss--Hermite quadrature. The fitted rain and sprinkler directions are $65.80^\circ$ and $-20.02^\circ$, respectively.

We evaluate both methods using 160-point Gauss--Hermite quadrature per input coordinate and mixture component. Averaging $D_{\mathrm{KL}}(q(\bz\mid\bx)\|p(\bz\mid\bx))$ over $\bx\sim p(\bx)$ gives $0.008508$ nats for converged fixed-prior \befond and $0.021554$ nats for the fitted single-projection encoder.

The white point has roof wetness $1.5$, halfway between the dry and wet conditional means, and therefore equal roof likelihoods with and without rain. Raising roof wetness to $3$ while holding lawn wetness fixed strengthens rain and suppresses the sprinkler. \Cref{tab:explaining-away} reports the exact and factorized probabilities at these points.

\begingroup
\setlength{\intextsep}{3pt}
\renewcommand{\arraystretch}{0.9}
\begin{table}[H]
    \centering
    \small
    \caption{Posterior activation probabilities at the marked observations in Figure~\ref{fig:explaining-away}, rounded to six decimal places. Neutral and wet correspond to $\bx=(2,1.5)^\top$ and $\bx=(2,3)^\top$, respectively.}
    \label{tab:explaining-away}
    \setlength{\tabcolsep}{6pt}
    \begin{tabular}{lrrrr}
        \toprule
        & \multicolumn{2}{c}{Neutral roof} & \multicolumn{2}{c}{Wet roof} \\
        \cmidrule(lr){2-3}\cmidrule(lr){4-5}
        Method & Rain & Sprinkler & Rain & Sprinkler \\
        \midrule
        Exact posterior & 0.292138 & 0.292138 & 0.960707 & 0.033017 \\
        \befond & 0.228996 & 0.228996 & 0.974111 & 0.019506 \\
        Single-projection encoder & 0.217514 & 0.139225 & 0.958493 & 0.064003 \\
        \bottomrule
    \end{tabular}
\end{table}
\endgroup

\subsection{Inference and learning ablations}
\label{app:inference-learning-ablations}
\subsubsection{Inference ablation: lateral interactions}
\label{app:superposition-results}
\Cref{fig:superposition,fig:superposition-offdiag} use $K=4{,}096$. The baseline comparison uses initialization seeds 0, 1, and 2, with individual results in \Cref{tab:superposition-baselines}. The lateral ablations use seed 0. Metric definitions are in \Cref{app:evaluation}. The lateral ablations use matched learning rates, $\beta$, and decoder norms. Their full \befond control differs from the selected \befond recipe in the baseline comparison at $d=512,384,256$: the ablations use $(\mathrm{LR},\beta)=(0.001,1),(0.001,1),(0.001,2)$, whereas the baseline comparison uses $(0.0003,0.5),(0.0003,1),(0.0003,1.5)$. Excitatory-only runs at $d=1{,}024$ and $1{,}536$ stopped early.
At $d=256$, suppressive interactions raise MCC from $0.2810$ to $0.6862$ within the matched ablation recipes. Excitatory-only inference activates almost all 4,096 latents but reaches MCC $0.2825$. Suppressive-only inference has one dead latent at $d=1{,}536$ and zero at the other dimensions. Both full and zero-off-diagonal inference have zero throughout.

\begin{table}[p]
\centering
\caption{Individual-seed results for Figure~\ref{fig:superposition} ($K=4{,}096$, $K_\star=16{,}384$). MCC values retain the recorded precision. Dead latents are counted on $10^6$ held-out examples using each model's native activity rule ($r_i>0.5$ for \befond and Mean-field + Adam).}
\label{tab:superposition-baselines}
\scriptsize
\setlength{\tabcolsep}{3pt}
\begin{tabular}{@{}lr rrr rrr@{}}
\toprule
& & \multicolumn{3}{c}{MCC} & \multicolumn{3}{c}{Dead latents} \\
\cmidrule(lr){3-5}\cmidrule(lr){6-8}
Method & $d$ & Seed 0 & Seed 1 & Seed 2 & 0 & 1 & 2 \\
\midrule
BatchTopK & 1536 & 0.6447949409484863 & 0.6418958902359009 & 0.6367465853691101 & 178 & 217 & 206 \\
 & 1024 & 0.7172077894210815 & 0.7101552486419678 & 0.715070366859436 & 78 & 80 & 66 \\
 & 768 & 0.7237797379493713 & 0.7247610688209534 & 0.7268125414848328 & 49 & 47 & 39 \\
 & 512 & 0.6935576796531677 & 0.6925675868988037 & 0.6897946000099182 & 37 & 38 & 55 \\
 & 384 & 0.6238665580749512 & 0.6256771087646484 & 0.6227874159812927 & 56 & 82 & 58 \\
 & 256 & 0.4887199401855469 & 0.48809993267059326 & 0.4880616068840027 & 152 & 176 & 157 \\
\midrule
JumpReLU & 1536 & 0.7402234673500061 & 0.7397291660308838 & 0.7378072738647461 & 0 & 0 & 0 \\
 & 1024 & 0.741813063621521 & 0.739402711391449 & 0.7388435006141663 & 3 & 2 & 2 \\
 & 768 & 0.7435916662216187 & 0.7454241514205933 & 0.7452358603477478 & 2 & 0 & 1 \\
 & 512 & 0.7406731843948364 & 0.7403227090835571 & 0.7406840324401855 & 3 & 2 & 1 \\
 & 384 & 0.7123968005180359 & 0.7126669883728027 & 0.712004542350769 & 78 & 90 & 96 \\
 & 256 & 0.5576815605163574 & 0.55924391746521 & 0.5596640110015869 & 1249 & 1232 & 1224 \\
\midrule
Matryoshka & 1536 & 0.6665480136871338 & 0.6840691566467285 & 0.6623464822769165 & 266 & 223 & 279 \\
 & 1024 & 0.7525583505630493 & 0.7481846213340759 & 0.7474654912948608 & 88 & 86 & 85 \\
 & 768 & 0.7555773258209229 & 0.7561160326004028 & 0.7561805248260498 & 54 & 51 & 49 \\
 & 512 & 0.6980507969856262 & 0.6989551186561584 & 0.6928752660751343 & 54 & 87 & 84 \\
 & 384 & 0.6256262063980103 & 0.6234501600265503 & 0.6201913356781006 & 93 & 87 & 85 \\
 & 256 & 0.5091139078140259 & 0.5098989605903625 & 0.5087519288063049 & 170 & 153 & 144 \\
\midrule
Matching Pursuit & 1536 & 0.7041427493095398 & 0.7060015201568604 & 0.7078589200973511 & 0 & 0 & 0 \\
 & 1024 & 0.7029024362564087 & 0.7033072113990784 & 0.6972198486328125 & 0 & 0 & 0 \\
 & 768 & 0.7092912197113037 & 0.7096635699272156 & 0.705768346786499 & 0 & 0 & 0 \\
 & 512 & 0.7090048789978027 & 0.7061803340911865 & 0.7071677446365356 & 0 & 0 & 0 \\
 & 384 & 0.6933020353317261 & 0.6994995474815369 & 0.6963708400726318 & 0 & 0 & 0 \\
 & 256 & 0.6802412867546082 & 0.679160475730896 & 0.6853885054588318 & 0 & 0 & 0 \\
\midrule
ReLU-L1 & 1536 & 0.597491443157196 & 0.5992552042007446 & 0.598943829536438 & 0 & 0 & 0 \\
 & 1024 & 0.59971684217453 & 0.6019572019577026 & 0.6006413698196411 & 0 & 0 & 0 \\
 & 768 & 0.6030566692352295 & 0.6011125445365906 & 0.6026996374130249 & 0 & 0 & 0 \\
 & 512 & 0.5990475416183472 & 0.5983516573905945 & 0.5974198579788208 & 0 & 0 & 0 \\
 & 384 & 0.5921348333358765 & 0.5912116765975952 & 0.590924084186554 & 0 & 0 & 0 \\
 & 256 & 0.5765519738197327 & 0.5788543224334717 & 0.5773016214370728 & 0 & 0 & 0 \\
\midrule
Mean-field + Adam & 1536 & 0.6974799036979675 & 0.7041385769844055 & 0.584987998008728 & 0 & 0 & 291 \\
 & 1024 & 0.7075880169868469 & 0.7071225643157959 & 0.7132526636123657 & 3 & 9 & 2 \\
 & 768 & 0.7050071358680725 & 0.7113231420516968 & 0.7108327746391296 & 8 & 12 & 11 \\
 & 512 & 0.719290018081665 & 0.7244423627853394 & 0.7237823009490967 & 10 & 3 & 7 \\
 & 384 & 0.7211331129074097 & 0.7238309979438782 & 0.7178806662559509 & 14 & 5 & 20 \\
 & 256 & 0.46739497780799866 & 0.4664973020553589 & 0.47486913204193115 & 1645 & 1655 & 1664 \\
\midrule
\befond & 1536 & 0.8128377795219421 & 0.8148884773254395 & 0.8158183097839355 & 0 & 0 & 0 \\
 & 1024 & 0.8047415018081665 & 0.8055942058563232 & 0.8029000759124756 & 0 & 0 & 0 \\
 & 768 & 0.8140117526054382 & 0.8091920614242554 & 0.8121296167373657 & 0 & 0 & 0 \\
 & 512 & 0.812289297580719 & 0.8088412880897522 & 0.8094125986099243 & 0 & 0 & 0 \\
 & 384 & 0.799635648727417 & 0.8007169365882874 & 0.7975541353225708 & 0 & 0 & 0 \\
 & 256 & 0.7783401012420654 & 0.7805794477462769 & 0.7804576754570007 & 0 & 0 & 0 \\
\bottomrule
\end{tabular}
\end{table}

\begin{figure}[H]    
\centering    
\includegraphics[width=\linewidth]{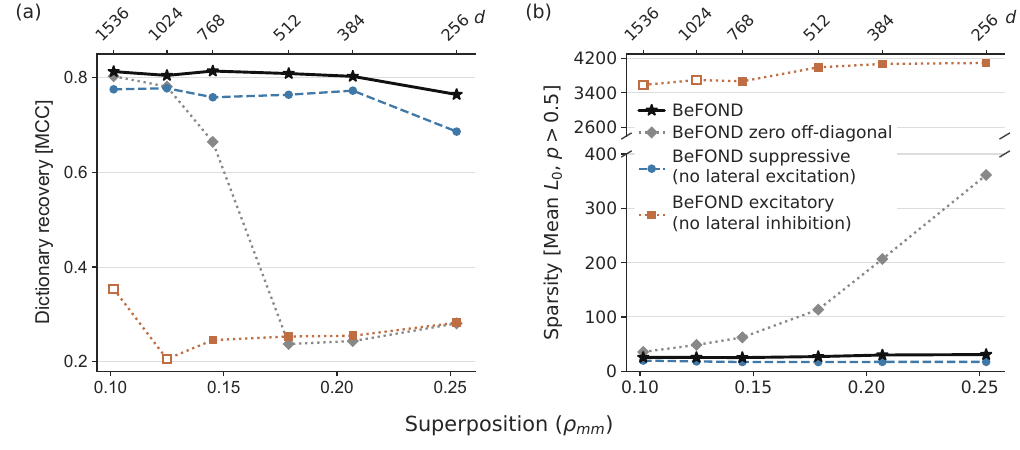}      
\caption{Lateral suppression supports feature recovery under superposition. Hollow markers denote early-stopped excitatory runs ($K=4{,}096$).}    
\label{fig:superposition-offdiag} 
\end{figure}

\subsubsection{Learning ablation: inference and learning geometry}
\label{app:geometry-results}
The Euclidean ablations also reveal a substantial gain in training
efficiency (\cref{fig:euclidean}). Natural inference and
learning (N/N) reach MCC $0.500$ and Rare F1 $0.280$ after just
1,000 updates, exceeding both Euclidean variants after 100,000 updates on their respectively tuned runs.

\begin{figure}[H]    
\centering    
\includegraphics[width=\linewidth]{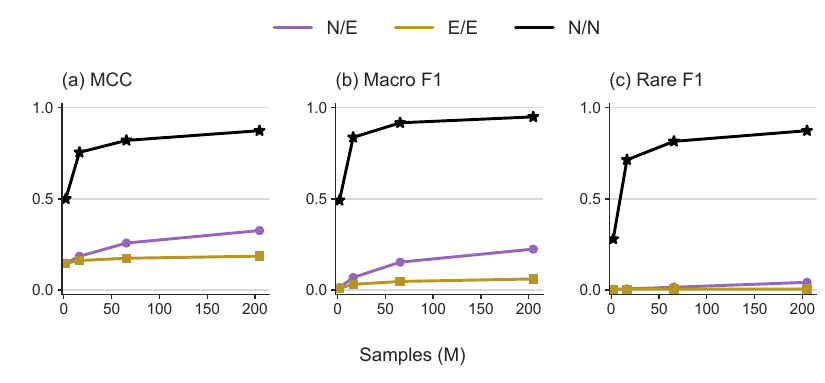}     
\caption{The natural-inference/natural-learning recipe reaches higher recovery with fewer samples. Panels show MCC, Macro F1, and Rare F1 on SynthSAEBench ($d=768$, $K=K_\star=16{,}384$). N/E denotes inference/learning geometry: natural (N) or Euclidean (E). Recipes are tuned separately (\cref{app:training}).}    
\label{fig:euclidean} 
\end{figure}

The natural-inference/natural-learning recipe reaches stronger recovery in fewer decoder updates. Each geometry uses a separately tuned recipe, with settings given in \Cref{tab:app-training}. \Cref{app:learning-geometry} derives equal contraction for frozen, matched second moments and extends the result to unequal moments.

\subsubsection{Rolling prior}
\label{app:sparsity-results}

\cref{fig:synthsae-beta-prior-cap} varies $\beta$ and the cap on the mean initial prior in 72 completed fits: six values of each hyperparameter and seeds 0 and 1. Increasing $\beta$ reduces final activity in this sweep, but also reduces recovery at the larger values. Several looser caps produce overlapping curves because the learned prior lies below those caps. The initial-prior cap constrains the starting activity, while the rolling-reference dynamics determine activity later in the rollout.

\begin{figure}[htbp]
\centering
\includegraphics[width=\linewidth]{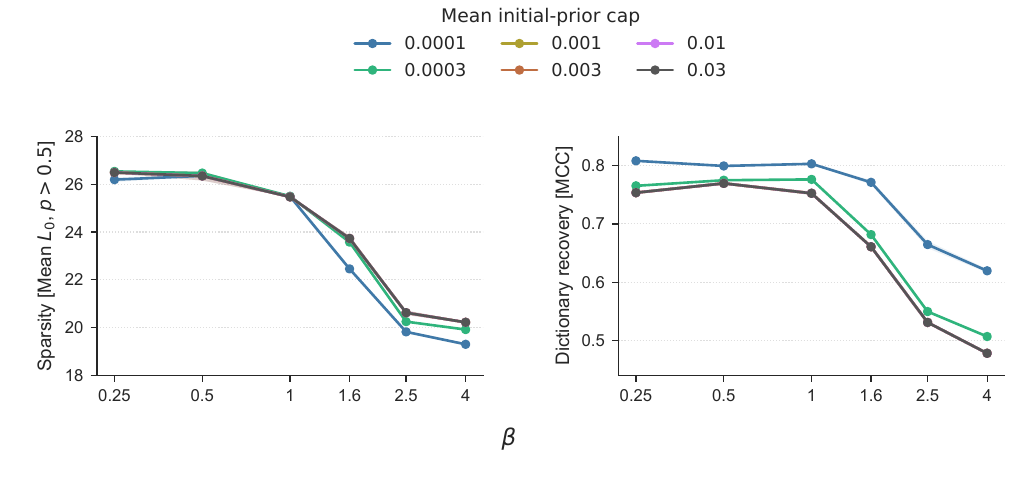}
\caption{Sparsity and recovery versus $\beta$ on \synthsaebench ($d=768$, $K=4{,}096$, $K_\star=16{,}384$). Colors indicate the mean initial-prior cap, lines show the mean over two seeds, and shading shows their range. Each fit uses 12,000 updates and is evaluated on 8,192 examples after 5,000 inference steps.}
\label{fig:synthsae-beta-prior-cap}
\end{figure}
All fits use raw, unwhitened inputs, batch size 1,024, a learning rate decaying from 0.001 to $10^{-5}$, one inner ETD1 step with $h=\infty$, and an exponential training horizon with mean 5.5 capped at 50. The decoder dictionary starts at norm 2 and is clipped at norm 4. The decoder covariance is fixed, decoder bias is learned, and Fisher damping is $10^{-5}$. Initial-prior fits use 100 inference steps on 1,024 examples every 500 updates, with no positive prior floor. Total exposure is 12.3M main plus 24,576 prior examples per fit.

\subsection{Exact and approximate rarity compensation}
\label{app:learning-geometry}
\label{app:proof-exact-rarity-compensation}
Fix a training minibatch and hold its inferred distributions, prior references, and covariance $\dec\bSigma\succ0$ fixed during the decoder update. Throughout this subsection, $\enc\bE\succ0$ and $\dec\bE_p\succ0$ denote the minibatch averages of the posterior and prior second moments in \cref{eq:app-moments}, and $\cF$ denotes the resulting minibatch-averaged free energy.
In Fisher-preconditioned decoder updates, the matrix gradient is implicitly vectorized columnwise before multiplication by $\bG_\Phi^{-1}$, and the result is reshaped to the decoder matrix.
\begin{proof}[Proof of \Cref{thm:exact-rarity-compensation}]
For the unit-covariance likelihood in the theorem, the decoder-dependent terms are
\[
 \cF(\dec\bPhi)
 =\tfrac12\EE_{\mathrm{batch}}\EE_q[\|\bx-\dec\bPhi\bz\|_2^2]+\mathrm{const}.
\]
The posterior and prior references are fixed, so the KL term is constant in $\dec\bPhi$. Expanding the quadratic gives
\begin{align}
 \nabla_{\dec\bPhi}\cF
 &=\dec\bPhi\enc\bE-\EE_{\mathrm{batch}}[\bx\enc\br^\top],\notag\\
 \nabla^2_{\operatorname{vec}(\dec\bPhi)}\cF
 &=\enc\bE\otimes\bI\succ0.
\end{align}
Hence the minimizer is unique,
$\dec\bPhi^*=\EE_{\mathrm{batch}}[\bx\enc\br^\top]\enc\bE^{-1}$,
and $\nabla_{\dec\bPhi}\cF=(\dec\bPhi-\dec\bPhi^*)\enc\bE$.

For a model draw $\tilde{\bx}\mid\bz\sim\mathcal N(\dec\bPhi\bz,\bI)$, the decoder score is
\[
 \nabla_{\dec\bPhi}\log p_{\dec\bPhi}(\tilde{\bx}\mid\bz)
 =(\tilde{\bx}-\dec\bPhi\bz)\bz^\top.
\]
The conditional residual has second moment $\bI$. Averaging the vectorized score's outer product over these model draws and the fixed prior references therefore gives $\bG_\Phi=\dec\bE_p\otimes\bI$. Its inverse acts on a matrix gradient by right multiplication with $\dec\bE_p^{-1}$. Thus, when $\dec\bE_p=\enc\bE$,
\begin{align*}
 \text{Euclidean:}\quad
 \frac{d}{d\tau}(\dec\bPhi-\dec\bPhi^*)
 &=-\eta\ssub{\Phi}(\dec\bPhi-\dec\bPhi^*)\enc\bE,\\
 \text{Natural:}\quad
 \frac{d}{d\tau}(\dec\bPhi-\dec\bPhi^*)
 &=-\eta\ssub{\Phi}(\dec\bPhi-\dec\bPhi^*).
\end{align*}
Integrating the natural flow yields
\begin{equation}
 \dec\bPhi(\Delta t)-\dec\bPhi^*
 =e^{-\eta\ssub{\Phi}\Delta t}(\dec\bPhi(0)-\dec\bPhi^*).
 \label{eq:app-exact-rarity}
\end{equation}

The matching condition is $\dec\bE_p=\enc\bE$ for the minibatch-averaged second moments.

To compare relaxation times, let $\enc\bE\bv_j=\lambda_j\bv_j$. Since $\enc\bE\succ0$, its eigenvectors form an orthonormal basis and $\lambda_j>0$. Along the Euclidean trajectory,
\[
 (\dec\bPhi(\Delta t)-\dec\bPhi^*)\bv_j
 =e^{-\eta\ssub{\Phi}\lambda_j\Delta t}(\dec\bPhi(0)-\dec\bPhi^*)\bv_j.
\]
Its modal relaxation times are $1/(\eta\ssub{\Phi}\lambda_j)$, so the ratio of the slowest to the fastest is $\lambda_{\max}(\enc\bE)/\lambda_{\min}(\enc\bE)=\kappa(\enc\bE)$. Natural flow instead has relaxation time $1/\eta\ssub{\Phi}$ in every direction. Since $z_i^2=z_i$ for Bernoulli latents,
\[
 \rho_i=\enc{\bE}_{ii}
 =\EE_{\mathrm{batch}}\EE_q[z_i]
 =\EE_{\mathrm{batch}}[\enc r_i].
\]
The Rayleigh quotient bounds each diagonal entry between the smallest and largest eigenvalues, giving
\[
 \lambda_{\max}(\enc\bE)\ge\rho_{\max},
 \qquad
 \lambda_{\min}(\enc\bE)\le\rho_{\min},
 \qquad
 \kappa(\enc\bE)\ge\frac{\rho_{\max}}{\rho_{\min}}.
\]
For one feature with known activity, $\enc\bE=p$, recovering the Euclidean rate $\eta\ssub{\Phi} p$ and natural rate $\eta\ssub{\Phi}$ used in the scalar example.
\end{proof}

\begin{corollary}[Power-law feature frequencies]
\label{cor:rarity-width-scaling}
Under \cref{thm:exact-rarity-compensation}, suppose the averaged posterior activation probabilities satisfy $\rho_i=c\,i^{-\alpha}$ for $i=1,\ldots,K$, with $c>0$ and $\alpha>0$. The ratio of the slowest to the fastest Euclidean relaxation time is at least $K^\alpha$; whereas, for the matched natural-gradient flow it is one.
\end{corollary}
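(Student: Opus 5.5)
The plan is to reduce the corollary directly to the eigenvalue bound already established in the proof of \cref{thm:exact-rarity-compensation}. There we showed that along the Euclidean trajectory each eigendirection $\bv_j$ of $\enc\bE$ relaxes with time constant $1/(\eta\ssub{\Phi}\lambda_j)$. Hence the ratio of the slowest to the fastest Euclidean relaxation time is exactly $\kappa(\enc\bE)=\lambda_{\max}(\enc\bE)/\lambda_{\min}(\enc\bE)$. The Rayleigh-quotient step then gives $\kappa(\enc\bE)\ge\rho_{\max}/\rho_{\min}$, with $\rho_i=\enc\bE_{ii}=\EE_{\mathrm{batch}}[\enc r_i]$. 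The corollary only requires evaluating this ratio for the stated power law, so no new dynamical analysis is needed.

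First, under $\rho_i=c\,i^{-\alpha}$ with $c>0$ and $\alpha>0$, the map $i\mapsto\rho_i$ is strictly decreasing on $\{1,\ldots,K\}$. Therefore $\rho_{\max}=\rho_1=c$ and $\rho_{\min}=\rho_K=cK^{-\alpha}$, so that $\rho_{\max}/\rho_{\min}=K^{\alpha}$ and the constant $c$ cancels. Combining this with the theorem's bound gives a slowest-to-fastest Euclidean relaxation time ratio of at least $K^\alpha$.

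Second, for the matched natural flow, the theorem shows that $\dec\bPhi-\dec\bPhi^*$ contracts as $e^{-\eta\ssub{\Phi}\Delta t}$ uniformly, by \cref{eq:app-exact-rarity}. Every direction therefore has the same relaxation time $1/\eta\ssub{\Phi}$, and the ratio is exactly one.

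There is no serious obstacle here. The one point to check is that the power-law hypothesis is compatible with the theorem's assumptions. We need $\rho_i\in(0,1]$, which requires $c\le1$ implicitly. We also need $\enc\bE\succ0$, which is already assumed, so every $\lambda_j>0$ and $\kappa$ is finite. Finally, the bound is only a lower bound: off-diagonal moments can make $\kappa(\enc\bE)$ strictly larger, but they cannot make it smaller than $K^\alpha$, because the diagonal entries are sandwiched between the extreme eigenvalues.
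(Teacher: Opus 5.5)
Your proposal is correct and follows the paper's proof exactly. You read off $\rho_{\max}=c$ and $\rho_{\min}=cK^{-\alpha}$, substitute into the theorem's bound $\kappa(\enc{\bE})\ge\rho_{\max}/\rho_{\min}$, and invoke the uniform contraction in \cref{eq:app-exact-rarity} for the natural flow. Your compatibility remark is a sensible addition; since $\enc{r}_i=\sigma(\enc{u}_i)<1$, it in fact forces $c<1$ strictly, which matches the hypothesis used in \cref{prop:adam-conditioning}.
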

\begin{proof}
The largest and smallest frequencies are $c$ and $cK^{-\alpha}$. Substituting their ratio into the theorem gives $\kappa(\enc\bE)\ge K^\alpha$. \cref{eq:app-exact-rarity} gives the common natural-gradient relaxation time.
\end{proof}

With the inferred codes fixed and the averaged second moments matched, natural-gradient flow has the same local contraction rate in every decoder direction.

\paragraph{General observation covariance.}
For fixed $\dec\bSigma\succ0$, the corresponding gradient, Hessian, and decoder Fisher are
\begin{align}
 \nabla_{\dec\bPhi}\cF
 &=\dec\bSigma^{-1}(\dec\bPhi-\dec\bPhi^*)\enc\bE,\notag\\
 \nabla^2_{\operatorname{vec}(\dec\bPhi)}\cF
 &=\enc\bE\otimes\dec\bSigma^{-1},
 \qquad \bG_\Phi=\dec\bE_p\otimes\dec\bSigma^{-1}.
\end{align}
The optimum is unchanged. Euclidean flow now obeys
$\dot{\dec\bPhi}=-\eta\ssub{\Phi}\dec\bSigma^{-1}(\dec\bPhi-\dec\bPhi^*)\enc\bE$;
its Hessian has condition number $\kappa(\enc\bE)\kappa(\dec\bSigma)$ because the Kronecker eigenvalues are products. Natural preconditioning cancels the observation precision and gives
\[
 \frac{d}{d\tau}(\dec\bPhi-\dec\bPhi^*)
 =-\eta\ssub{\Phi}(\dec\bPhi-\dec\bPhi^*)\enc\bE\dec\bE_p^{-1}.
\]
The coefficients are fixed during the update, hence
\begin{equation}
 \dec\bPhi(\Delta t)-\dec\bPhi^*
 =(\dec\bPhi(0)-\dec\bPhi^*)
 \exp[-\eta\ssub{\Phi}\Delta t\,\enc\bE\dec\bE_p^{-1}].
 \label{eq:app-general-contraction}
\end{equation}
When $\dec\bE_p=\enc\bE$, this again reduces to \cref{eq:app-exact-rarity}, independently of the fixed observation covariance. The following extension treats unequal moments.

\begin{proposition}[Rarity compensation with unequal moments]
\label{prop:unequal-rarity-compensation}
Under the same frozen-substep assumptions, let $\enc\bE\succ0$, $\dec\bE_p\succ0$, and $\eta\ssub{\Phi}>0$, without requiring $\enc\bE=\dec\bE_p$. The decoder error obeys \cref{eq:app-general-contraction}. Define
\[
 \lambda_*:=\lambda_{\min}\!\left(\dec\bE_p^{-1/2}\enc\bE\dec\bE_p^{-1/2}\right)>0.
\]
Then, for $\Delta t\ge0$,
\begin{equation}
 \| (\dec\bPhi(\Delta t)-\dec\bPhi^*)\dec\bE_p^{1/2}\|_F
 \le e^{-\eta\ssub{\Phi}\lambda_*\Delta t}
 \|(\dec\bPhi(0)-\dec\bPhi^*)\dec\bE_p^{1/2}\|_F.
 \label{eq:app-unequal-rarity}
\end{equation}
In particular, if
\[
 (1-\epsilon)\dec\bE_p\preceq\enc\bE\preceq(1+\epsilon)\dec\bE_p,\qquad 0\le\epsilon<1,
\]
then
\begin{equation}
 \| (\dec\bPhi(\Delta t)-\dec\bPhi^*)\dec\bE_p^{1/2}\|_F
 \le e^{-\eta\ssub{\Phi}(1-\epsilon)\Delta t}
 \|(\dec\bPhi(0)-\dec\bPhi^*)\dec\bE_p^{1/2}\|_F.
 \label{eq:app-approx-rarity}
\end{equation}
\end{proposition}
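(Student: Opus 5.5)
The plan is to transform the decoder error so that the flow becomes symmetric. Let $\bD(\tau)=\dec\bPhi(\tau)-\dec\bPhi^*$; by \cref{eq:app-general-contraction} it obeys the linear ODE $\dot\bD=-\eta\ssub{\Phi}\bD\enc\bE\dec\bE_p^{-1}$. Define the weighted error $\bY:=\bD\dec\bE_p^{1/2}$, which is well defined because $\dec\bE_p\succ0$. Differentiating gives
\[
\dot\bY=-\eta\ssub{\Phi}\bD\enc\bE\dec\bE_p^{-1}\dec\bE_p^{1/2}=-\eta\ssub{\Phi}\bY\,\bS,\qquad \bS:=\dec\bE_p^{-1/2}\enc\bE\dec\bE_p^{-1/2}.
\]
The matrix $\bS$ is symmetric and positive definite, since it is congruent to $\enc\bE\succ0$, so $\lambda_*=\lambda_{\min}(\bS)>0$.

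Next we do an energy estimate on $\|\bY\|_F^2$:
\[
\tfrac{d}{d\tau}\|\bY\|_F^2=-2\eta\ssub{\Phi}\Tr(\bY\bS\bY^\top).
\]
Each row $\by$ of $\bY$ satisfies $\by\bS\by^\top\ge\lambda_*\|\by\|^2$ by the Rayleigh quotient, so summing over rows gives $\tfrac{d}{d\tau}\|\bY\|_F^2\le-2\eta\ssub{\Phi}\lambda_*\|\bY\|_F^2$. Grönwall's inequality then yields $\|\bY(\Delta t)\|_F\le e^{-\eta\ssub{\Phi}\lambda_*\Delta t}\|\bY(0)\|_F$, which is \cref{eq:app-unequal-rarity}.

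Alternatively, one can solve explicitly: $\bY(\Delta t)=\bY(0)e^{-\eta\ssub{\Phi}\Delta t\bS}$. The operator norm of the exponential is $e^{-\eta\ssub{\Phi}\lambda_*\Delta t}$, and $\|\bA\bB\|_F\le\|\bA\|_F\|\bB\|_2$ gives the same bound. This also confirms consistency with \cref{eq:app-general-contraction}, because $\dec\bE_p^{1/2}e^{-s\bS}\dec\bE_p^{-1/2}=\exp[-s\enc\bE\dec\bE_p^{-1}]$ by similarity.

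For the corollary bound, conjugate the hypothesis $(1-\epsilon)\dec\bE_p\preceq\enc\bE$ by $\dec\bE_p^{-1/2}$. Congruence preserves the Loewner order, so $\bS\succeq(1-\epsilon)\bI$ and hence $\lambda_*\ge1-\epsilon$. Substituting into \cref{eq:app-unequal-rarity} gives \cref{eq:app-approx-rarity}. The upper bound $\enc\bE\preceq(1+\epsilon)\dec\bE_p$ is not needed for the contraction; it only certifies that no mode decays faster than $e^{-\eta\ssub{\Phi}(1+\epsilon)\Delta t}$, which we may remark on.

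There is no real obstacle here. The one step requiring care is choosing the $\dec\bE_p^{1/2}$-weighted norm: $\enc\bE\dec\bE_p^{-1}$ is not symmetric, so a plain Frobenius estimate on $\bD$ would give no clean rate. The similarity transform to the symmetric $\bS$ is what makes the Rayleigh-quotient bound available.
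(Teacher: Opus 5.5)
Your proposal is correct and follows the paper's proof. The paper also right-multiplies the error dynamics by $\dec\bE_p^{1/2}$ to get the symmetric positive-definite generator $\dec\bE_p^{-1/2}\enc\bE\dec\bE_p^{-1/2}$, bounds the spectral norm of its exponential by $e^{-\eta\ssub{\Phi}\lambda_*\Delta t}$, applies $\|AB\|_F\le\|A\|_F\|B\|_2$ (your ``alternative'' route), and reads off $\lambda_*\ge1-\epsilon$ from the moment bounds; your Gr\"onwall energy estimate is only a cosmetic variant of the same symmetrization argument, and you are right that the upper moment bound is not needed.
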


\begin{proof}
\cref{eq:app-general-contraction} follows from the quadratic decoder gradient derived above. Multiplying its error dynamics on the right by $\dec\bE_p^{1/2}$ gives:
$$
 \frac{d}{d\tau}\bigl[(\dec\bPhi-\dec\bPhi^*)\dec\bE_p^{1/2}\bigr]
 =-\eta\ssub{\Phi}\bigl[(\dec\bPhi-\dec\bPhi^*)\dec\bE_p^{1/2}\bigr]
 \bigl(\dec\bE_p^{-1/2}\enc\bE\dec\bE_p^{-1/2}\bigr).
$$
The matrix in the final parentheses is symmetric positive definite, so its decay exponential has spectral norm $e^{-\eta\ssub{\Phi}\lambda_*\Delta t}$. Applying $\|AB\|_F\le\|A\|_F\|B\|_2$ proves \cref{eq:app-unequal-rarity}. Under the stated moment bounds, this matrix has eigenvalues in $[1-\epsilon,1+\epsilon]$, hence $\lambda_*\ge1-\epsilon$, which proves \cref{eq:app-approx-rarity}.
\end{proof}

\subsubsection{Activity-normalized inference}
\label{app:activity-normalization-proof}
\begin{theorem}[Activity-normalized inference]
\label{thm:activity-normalized-inference}
Fix the observation, decoder, covariance, and current prior
reference during inference. For the factorized Bernoulli
posterior, let $\enc{r}_i=\sigma(\enc{u}_i)\in(0,1)$.
At the same posterior state, the unit-rate Euclidean and
natural-gradient flows in membrane-potential (logit) coordinates satisfy
$$
\boxed{
\dot{\enc{u}}_i^{\mathrm{NGD}}
=
-\frac{1}{\enc{r}_i(1-\enc{r}_i)}
\frac{\partial\cF}{\partial\enc{u}_i},
\qquad
\dot{\enc{u}}_i^{\mathrm{GD}}
=
\enc{r}_i(1-\enc{r}_i)\,\dot{\enc{u}}_i^{\mathrm{NGD}}.
}
$$
Thus natural-gradient inference removes the multiplicative Bernoulli variance factor from each membrane-potential update.
\end{theorem}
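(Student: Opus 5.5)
The plan is to compute the Fisher metric of the factorized Bernoulli posterior in logit coordinates and then read off both flows coordinatewise. The argument is the same calculation already used for \cref{eq:app-bernoulli-fisher,eq:app-bernoulli-fisher-gradient}, isolated and stated without committing to the specific form of $\cF$. Throughout, the observation, decoder, covariance and reference are held fixed. $\cF$ is then a smooth function of $\enc{\bu}$ alone, so $\partial\cF/\partial\enc{u}_i$ is well defined.

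\textbf{Step 1: the Fisher metric.} The log density is $\log q_{\enc{\bu}}(\bz)=\sum_i[z_i\enc{u}_i-\log(1+e^{\enc{u}_i})]$, so the score is $\partial\log q/\partial\enc{u}_i=z_i-\enc{r}_i$. Because the coordinates are independent under $q$, the off-diagonal expectations $\EE_q[(z_i-\enc r_i)(z_j-\enc r_j)]$ vanish for $i\neq j$. Hence $\bG_u=\enc{\bC}=\operatorname{diag}[\enc\br\odot(\bm1-\enc\br)]$, as in \cref{eq:tr-bernoulli-length}. Since $\enc{r}_i\in(0,1)$, every diagonal entry is strictly positive, and $\bG_u$ is invertible with inverse $\operatorname{diag}\bigl(1/(\enc r_i(1-\enc r_i))\bigr)$.

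\textbf{Step 2: the two flows.} Apply FOND's inference prescription $\dot{\enc{\bu}}=-\bG_u^{-1}\nabla_{\enc{\bu}}\cF$ with this diagonal inverse. This gives $\dot{\enc u}_i^{\mathrm{NGD}}=-\frac{1}{\enc r_i(1-\enc r_i)}\,\partial\cF/\partial\enc u_i$ directly. The unit-rate Euclidean flow is $\dot{\enc u}_i^{\mathrm{GD}}=-\partial\cF/\partial\enc u_i$. Comparing the two at the same state, that is with the same $\enc\bu$ and hence the same gradient, gives $\dot{\enc u}_i^{\mathrm{GD}}=\enc r_i(1-\enc r_i)\,\dot{\enc u}_i^{\mathrm{NGD}}$.

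\textbf{Step 3: interpretation via the chain rule.} To connect with the explicit drive, I would note that $\partial\cF/\partial\enc u_i=\enc r_i(1-\enc r_i)\,\partial\cF/\partial\enc r_i$, because $d\enc r_i/d\enc u_i=\sigma'(\enc u_i)=\enc r_i(1-\enc r_i)$. Substituting shows that the natural flow equals $-\partial\cF/\partial\enc r_i$, which is exactly \cref{eq:app-u-flow}. The Euclidean flow is that same drive multiplied by the Bernoulli variance. This is the sense in which natural-gradient inference removes the multiplicative variance factor.

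No step is a real obstacle. The only point needing care is that the comparison is made \emph{at the same posterior state} and with unit rates. The second identity is then an algebraic consequence of the first, valid for any differentiable $\cF$, and it needs the interior condition $\enc r_i\in(0,1)$ for the inverse Fisher to exist.
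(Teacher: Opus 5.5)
Your proposal is correct and follows essentially the same route as the paper's proof. Both compute the Bernoulli score $z_i-\enc{r}_i$ and use independence to get $\bG_u=\enc{\bC}$. Both then invert this diagonal metric and apply the chain rule $\nabla_{\enc{\bu}}\cF=\enc{\bC}\,\nabla_{\enc{\br}}\cF$. This shows the natural flow equals $-\partial\cF/\partial\enc{r}_i$, while the unit-rate Euclidean flow carries the extra factor $\enc{r}_i(1-\enc{r}_i)$.
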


\begin{proof}[Proof of \Cref{thm:activity-normalized-inference}]
For factorized Bernoulli latents, $\enc r_i=\sigma(\enc u_i)\in(0,1)$, so
\[
 \frac{\partial\enc r_i}{\partial\enc u_i}=\enc r_i(1-\enc r_i),
 \qquad \enc\bC=\operatorname{diag}[\enc\br\odot(\bm1-\enc\br)].
\]
The score is $\partial\log q_{\enc\bu}(\bz\mid\bx)/\partial\enc u_i=z_i-\enc r_i$.
Independence therefore gives the posterior Fisher $\boldsymbol G(\enc\bu)=\enc\bC$.
The chain rule yields $\nabla_{\enc\bu}\cF=\enc\bC\nabla_{\enc\br}\cF$.
Natural preconditioning cancels this factor:
\[
 \dot{\enc u}_i^{\mathrm{NGD}}
 =-\frac{1}{\enc r_i(1-\enc r_i)}\frac{\partial\cF}{\partial\enc u_i}
 =-\frac{\partial\cF}{\partial\enc r_i},
 \qquad \dot{\enc u}_i^{\mathrm{GD}}=\enc r_i(1-\enc r_i)\dot{\enc u}_i^{\mathrm{NGD}}.
\]
\end{proof}

When the inference drive favors activation, this cancellation 
removes an additional suppression of weakly active latents'
logit updates. When the inference drive favors activation, this cancellation removes the variance factor from weakly active latents’ logit updates. Activation probabilities evolve according to $\dot{\enc{r}}_i=\enc{r}_i(1-\enc{r}_i)\dot{\enc{u}}_i$.

\subsubsection{Local conditioning under Adam normalization}
\label{app:adam-conditioning}
Adam normalizes updates using coordinatewise estimates of gradient second moments \citep{kingma2015adam,balles2018dissecting}. The following calculation isolates the effect of this normalization on decoder conditioning.

\begin{proposition}[Local conditioning under frozen Adam normalization]
\label{prop:adam-conditioning}
Let $\bx=\dec\bPhi_*\bz+\boldsymbol\xi$, where the known codes have independent coordinates $z_i\sim\operatorname{Bernoulli}(\rho_i)$ with $0<\rho_i<1$, and $\boldsymbol\xi\sim\mathcal N(\bm0,\bI_d)$ is independent of $\bz$. For the unconstrained population loss
\[
 L(\dec\bPhi)=\tfrac12\EE\|\bx-\dec\bPhi\bz\|_2^2,
 \qquad \bE=\EE[\bz\bz^\top],
 \qquad \boldsymbol{H}=\bE\otimes\bI_d,
\]
consider gradients averaged over $B$ independent samples. Freeze the Adam denominator at the square root of its expected second-moment target at $\dec\bPhi_*$, with stabilizer $\epsilon>0$. The resulting diagonal preconditioner is
\[
 \bD_\epsilon
 =\operatorname{diag}_{i=1}^K\!\left[(\epsilon+\sqrt{\rho_i/B})^{-1}\right]\otimes\bI_d.
\]
Writing $\bA_\epsilon=\bD_\epsilon^{1/2}\boldsymbol{H}\bD_\epsilon^{1/2}$, its condition number satisfies
\begin{equation}
 \kappa(\bA_\epsilon)
 \ge
 \frac{\rho_{\max}/(\epsilon+\sqrt{\rho_{\max}/B})}
      {\rho_{\min}/(\epsilon+\sqrt{\rho_{\min}/B})}
 \ge \sqrt{\frac{\rho_{\max}}{\rho_{\min}}}.
 \label{eq:adam-conditioning-bound}
\end{equation}
Thus $\rho_i=c\,i^{-\alpha}$, with $0<c<1$ and $\alpha>0$, gives $\kappa(\bA_\epsilon)\ge K^{\alpha/2}$. Matched complete-data Fisher preconditioning instead gives $\boldsymbol{H}^{-1/2}\boldsymbol{H}\boldsymbol{H}^{-1/2}=\bI$ and condition number one.
\end{proposition}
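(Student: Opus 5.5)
The computation proceeds in three steps: the population Hessian, the frozen Adam preconditioner, and then a condition-number bound obtained from diagonal entries alone.

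\textbf{Hessian.} Expanding the quadratic loss gives $\nabla_{\dec\bPhi}L=\dec\bPhi\bE-\EE[\bx\bz^\top]$. Hence the Hessian in vectorized coordinates is $\boldsymbol H=\bE\otimes\bI_d$, independent of $\dec\bPhi$. By independence of the codes,
\[
\bE=\diag(\rho_i-\rho_i^2)+\boldsymbol\rho\boldsymbol\rho^\top,
\]
so its diagonal entries are $\bE_{ii}=\rho_i$.

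\textbf{Adam denominator.} At $\dec\bPhi_*$ the per-sample gradient of entry $(m,i)$ is $-\xi_m z_i$. Its second moment is $\EE[\xi_m^2 z_i^2]=\rho_i$, because $z_i^2=z_i$ and $\xi$ is independent of $\bz$ with unit variance. The population gradient vanishes at $\dec\bPhi_*$, so averaging over $B$ independent samples gives expected squared gradient $\rho_i/B$. The frozen denominator is therefore $\epsilon+\sqrt{\rho_i/B}$, which justifies the stated form of $\bD_\epsilon$ (constant across the $d$ output rows).

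\textbf{Condition number.} $\bD_\epsilon$ commutes with the Kronecker structure, so $\bA_\epsilon=(\bD'^{1/2}\bE\bD'^{1/2})\otimes\bI_d$ with $\bD'=\diag(d_i)$ and $d_i=(\epsilon+\sqrt{\rho_i/B})^{-1}$. The factor $\bI_d$ does not change the condition number. The Rayleigh-quotient argument from the proof of \cref{thm:exact-rarity-compensation} applies: the diagonal entries $\rho_i d_i$ of $\bD'^{1/2}\bE\bD'^{1/2}$ lie between its extreme eigenvalues. This yields
\[
\kappa(\bA_\epsilon)\ge \max_i \rho_i d_i/\min_j\rho_j d_j\ge \rho_{\max}d_{\max\text{-}\rho}/(\rho_{\min}d_{\min\text{-}\rho}),
\]
which is the first inequality.

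For the second inequality, set $g(\rho)=\rho/(\epsilon+\sqrt{\rho/B})$. We need $g(\rho_{\max})/g(\rho_{\min})\ge\sqrt{\rho_{\max}/\rho_{\min}}$. This is equivalent to
\[
\sqrt{\rho_{\max}}\,(\epsilon+\sqrt{\rho_{\min}/B})\ge\sqrt{\rho_{\min}}\,(\epsilon+\sqrt{\rho_{\max}/B}).
\]
The $B$-terms on the two sides are both $\sqrt{\rho_{\max}\rho_{\min}/B}$ and cancel, leaving $\epsilon\sqrt{\rho_{\max}}\ge\epsilon\sqrt{\rho_{\min}}$, which holds. In other words, $\rho\mapsto g(\rho)/\sqrt\rho$ is nondecreasing. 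This step is elementary, and the only real care needed in the proof is in the Adam second-moment target and the Kronecker reduction.

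\textbf{Consequences.} Under $\rho_i=c\,i^{-\alpha}$ with $0<c<1$ (so that all $\rho_i\in(0,1)$), the ratio is $\rho_{\max}/\rho_{\min}=K^\alpha$, and therefore $\kappa(\bA_\epsilon)\ge K^{\alpha/2}$. For the Fisher comparison, $\bE\succ0$ because $\diag(\rho_i(1-\rho_i))\succ0$. Moreover, the matched complete-data Fisher equals $\bE\otimes\bI=\boldsymbol H$, which gives $\boldsymbol H^{-1/2}\boldsymbol H\boldsymbol H^{-1/2}=\bI$ and condition number one.
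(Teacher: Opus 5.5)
Your proof is correct and follows the paper's argument essentially step for step. The shared steps are the Hessian $\bE\otimes\bI_d$ with $\bE_{ii}=\rho_i$, the Adam target $\rho_i/B$ from the mean-zero single-sample gradient $-\boldsymbol\xi\bz^\top$ at $\bPhi_*$, the Rayleigh-quotient bound on $\kappa(\bA_\epsilon)$ via the diagonal entries $\rho_i/(\epsilon+\sqrt{\rho_i/B})$, and the same cancellation for the second inequality, which the paper writes as $s^2(\epsilon+t)/(\epsilon+st)\ge s$ with $s=\sqrt{\rho_{\max}/\rho_{\min}}$ and $t=\sqrt{\rho_{\min}/B}$; the only cosmetic difference is that you get the first inequality from the trivial bounds $\max_i a_i\ge a(\rho_{\max})$ and $\min_i a_i\le a(\rho_{\min})$ rather than from monotonicity of $a_i$ in $\rho_i$, which is equally valid.
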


\begin{proof}
The Hessian is $\boldsymbol{H}$, and $\bE=\operatorname{diag}(\rho_i(1-\rho_i))+\boldsymbol\rho\boldsymbol\rho^\top\succ0$. At the optimum, a single-sample gradient is $-\boldsymbol\xi\bz^\top$, with mean zero and vectorized second moment $\boldsymbol{H}$. Independence gives second moment $\boldsymbol{H}/B$ for the averaged gradient, yielding $\bD_\epsilon$ above. Each diagonal entry of $\bA_\epsilon$ is $a_i=\rho_i/(\epsilon+\sqrt{\rho_i/B})$, repeated $d$ times. Rayleigh quotients give $\kappa(\bA_\epsilon)\ge\max_i a_i/\min_i a_i$. Since $a_i$ increases with $\rho_i$, the first bound follows. For $s=\sqrt{\rho_{\max}/\rho_{\min}}\ge1$ and $t=\sqrt{\rho_{\min}/B}$, the ratio is $s^2(\epsilon+t)/(\epsilon+st)\ge s$. Substitution gives the width bound. The complete-data Fisher equals $\boldsymbol{H}$ for this model, proving the final claim.
\end{proof}

This local conditioning analysis uses known latent codes and fixes Adam's gradient second-moment estimate at its expected value at the decoder optimum, before taking the square root to form the normalization. Adam's normalization depends on gradient variance \citep{balles2018dissecting}, while the model Fisher measures distributional sensitivity \citep{martens2020new}. \Cref{app:complexity} discusses the computational cost of the implemented Fisher updates.

\subsection{Learning and rare-feature recovery}
\label{app:full-width-results}

\subsubsection{Resampling and auxiliary-loss variants}
\label{app:modified-baselines}
Resampling and auxiliary losses aim to revive inactive features \citep{bricken2023monosemanticity,jermyn2024ghost}.
\cref{fig:synthsae-original-modified} and \Cref{tab:synthsae-original-modified} compare the original baselines with modified recipes using MinFire (BatchTopK), resampling (JumpReLU and Matryoshka), learning-rate decay, and retuned sparsity. These modifications substantially improve baseline recovery, but \befond retains the highest MCC, Macro F1, and Rare F1. The comparison evaluates complete recipes with different target sparsities. Measured $L_0$ is reported in the table.

The modified recipes follow the released full-width sweep \citep{chanind2026fullwidth}, using seed 0, 200M samples, batch size 1,024, Adam with learning rate $3\times10^{-4}$ and moments $(0.9,0.999)$, BF16 autocast, gradient-norm clipping at 1, and learning-rate decay during the final 20\% of training. Target $L_0$ is 45 for BatchTopK, 40 for JumpReLU, and 35 for Matryoshka.

MinFire (the sweep implementation's \texttt{MinFireGate} in \texttt{gates.py}) reserves each latent's strongest positive activation before filling the remaining batch-wide TopK budget. The selected BatchTopK recipe applies this gate every batch for the first 90\% of training, then uses ordinary BatchTopK. JumpReLU uses an input straight-through estimator and a base $L_0$ coefficient of 0.1 with an adaptive multiplier. JumpReLU and Matryoshka resample every 2,500 updates during the first 80\% of training, with inactivity windows of 1,000 and 100 updates, respectively. Resampled decoder atoms follow reconstruction-residual directions, and their optimizer moments are reset. BatchTopK and Matryoshka retain auxiliary-loss coefficient 1. Matryoshka uses prefixes 128, 512, 2,048, and 16,384 with per-level auxiliary losses.

Gemma Scope uses reconstruction and $L_0$ sparsity losses, without a separate dead-latent auxiliary loss, alongside straight-through estimators, tied initialization, learning-rate and sparsity warmup, and decoder normalization \citep[Sections~2.2 and~3.2]{lieberum2024gemma}. 

\begin{figure}[htbp]
\centering
\includegraphics[width=\linewidth]{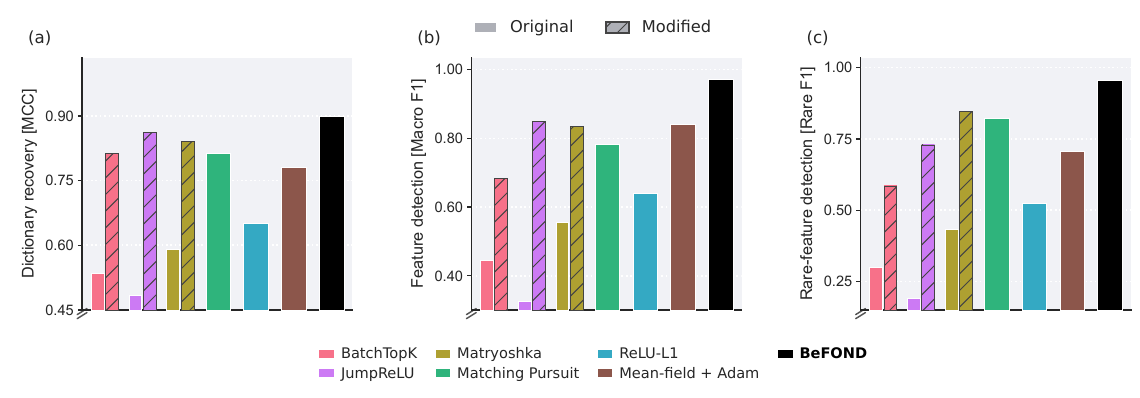}
\caption{\befond retains the highest MCC, Macro F1, and Rare F1 after baseline modifications aimed at preventing inactive features. Hatched bars denote modified recipes ($d=768$, $K=K_\star=16{,}384$, seed 0); numerical results appear in Table~\ref{tab:synthsae-original-modified}.}
\label{fig:synthsae-original-modified}
\end{figure}

\begin{table}[p]
\centering\small
\caption{Full-width SynthSAEBench results ($d=768$, $K=K_\star=16{,}384$). Individual rows report initialization seeds 0--2; mean rows give pointwise 95\% Student-$t$ confidence intervals for the three recovery metrics, as in Figure~\ref{fig:synthsae}. Each evaluation uses $10^6$ held-out examples. $L_0$ and dead counts in mean rows are arithmetic means. The lower block retains the modified recipes ($+$) and continuous-run checkpoint, each at seed 0. The \befond runs use 181.82M decoder-update plus 18.18M prior examples.}
\label{tab:synthsae-final}
\label{tab:synthsae-original-modified}
\label{tab:full-width-dead-latents}
\label{tab:synthsae-full-learning-final}
\setlength{\tabcolsep}{1.9pt}
\renewcommand{\arraystretch}{0.92}
\begin{tabular}{llrrrrr}
\toprule
Method & Seed & MCC $\uparrow$ & Macro F1 $\uparrow$ & Rare F1 $\uparrow$ & $L_0$ & Dead $\downarrow$ \\
\midrule
BatchTopK & 0 & 0.534523 & 0.445173 & 0.299436 & 34.833798 & 1,577 \\
 & 1 & 0.535311 & 0.444452 & 0.299666 & 35.031285 & 1,733 \\
 & 2 & 0.537442 & 0.446642 & 0.296203 & 34.948331 & 1,707 \\
 & Mean & $0.5358\pm 0.0038$ & $0.4454\pm 0.0028$ & $0.2984\pm 0.0048$ & 34.94 & 1672.33 \\
\midrule
JumpReLU & 0 & 0.483953 & 0.326567 & 0.190439 & 45.148172 & 8,069 \\
 & 1 & 0.481728 & 0.326672 & 0.184864 & 44.826161 & 8,090 \\
 & 2 & 0.482102 & 0.325518 & 0.184310 & 45.098772 & 8,117 \\
 & Mean & $0.4826\pm 0.0030$ & $0.3263\pm 0.0016$ & $0.1865\pm 0.0084$ & 45.02 & 8092.00 \\
\midrule
Matryoshka & 0 & 0.590345 & 0.556314 & 0.433018 & 19.978349 & 4,923 \\
 & 1 & 0.574629 & 0.530946 & 0.366127 & 19.991146 & 5,505 \\
 & 2 & 0.583862 & 0.552021 & 0.415913 & 19.989287 & 5,049 \\
 & Mean & $0.5829\pm 0.0196$ & $0.5464\pm 0.0337$ & $0.4050\pm 0.0863$ & 19.99 & 5159.00 \\
\midrule
Matching Pursuit & 0 & 0.812752 & 0.782551 & 0.822595 & 29.646241 & 2 \\
 & 1 & 0.811610 & 0.781264 & 0.821854 & 29.648383 & 1 \\
 & 2 & 0.815241 & 0.784698 & 0.827665 & 29.688899 & 3 \\
 & Mean & $0.8132\pm 0.0046$ & $0.7828\pm 0.0043$ & $0.8240\pm 0.0079$ & 29.66 & 2.00 \\
\midrule
ReLU-L1 & 0 & 0.651464 & 0.641205 & 0.522534 & 35.089073 & 174 \\
 & 1 & 0.650650 & 0.640259 & 0.524647 & 34.972258 & 135 \\
 & 2 & 0.650945 & 0.641409 & 0.524073 & 34.976428 & 137 \\
 & Mean & $0.6510\pm 0.0010$ & $0.6410\pm 0.0015$ & $0.5238\pm 0.0027$ & 35.01 & 148.67 \\
\midrule
Mean-field + Adam & 0 & 0.779715 & 0.841050 & 0.704307 & 45.107146 & 514 \\
 & 1 & 0.783435 & 0.846995 & 0.743286 & 48.116016 & 640 \\
 & 2 & 0.790151 & 0.855224 & 0.729628 & 68.019993 & 556 \\
 & Mean & $0.7844\pm 0.0131$ & $0.8478\pm 0.0177$ & $0.7257\pm 0.0491$ & 53.75 & 570.00 \\
\midrule
\textbf{BeFOND} & 0 & 0.898559 & 0.970345 & 0.955392 & 36.033549 & 1 \\
 & 1 & 0.898064 & 0.970567 & 0.949700 & 35.679190 & 0 \\
 & 2 & 0.897350 & 0.969745 & 0.950726 & 35.485085 & 0 \\
 & Mean & $\mathbf{0.8980}\pm 0.0015$ & $\mathbf{0.9702}\pm 0.0011$ & $\mathbf{0.9519}\pm 0.0075$ & 35.73 & 0.33 \\
\midrule
\multicolumn{7}{l}{\textit{Additional seed-0 checkpoints}} \\
BatchTopK+ & 0 & 0.8123 & 0.6826 & 0.5851 & 45.01 & 504 \\
JumpReLU+ & 0 & 0.8612 & 0.8487 & 0.7288 & 39.96 & 367 \\
Matryoshka+ & 0 & 0.8405 & 0.8348 & 0.8457 & 34.95 & 495 \\
\befond & 0 & 0.8974 & 0.9701 & 0.9482 & 35.70 & 0 \\
\bottomrule
\end{tabular}
\end{table}

\paragraph{Recovery across feature frequencies.}
\cref{fig:synthsae-frequency-quintiles} resolves the comparison into frequency quintiles. The \befond checkpoint is the best run in \Cref{tab:synthsae-final} (MCC 0.8986), and the original and modified baseline checkpoints are those in \cref{fig:synthsae-original-modified}. Improvements from the modified recipes extend into the rare tail, while \befond retains higher coverage and F1 in every quintile. Coverage measures dictionary alignment. F1 measures detection of the matched feature's activity, so the two need not coincide.

\begin{figure}[htbp]
\centering
\includegraphics[width=\linewidth]{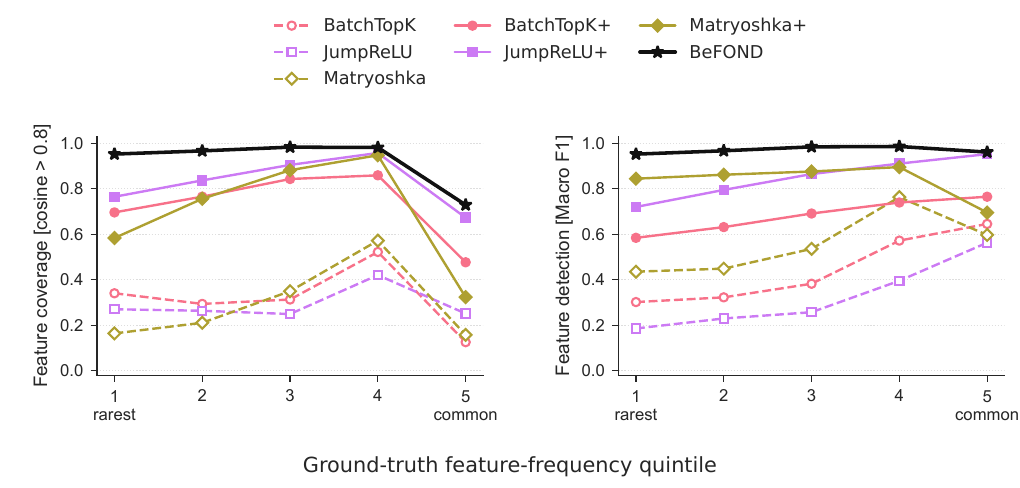}
\caption{Recovery across feature-frequency quintiles ($d=768$, $K=K_\star=16{,}384$, seed 0). Dashed lines show original baselines, and ``+'' denotes their modified recipes. Coverage is the fraction of features matched above cosine 0.8. Quintiles use each run's 500K-example frequency reference. F1 uses 1M held-out examples. Random draws differ between some runs.}
\label{fig:synthsae-frequency-quintiles}
\end{figure}

These fifths differ from Rare F1's least-frequent quarter.

\paragraph{Dictionary capacity.}
\label{app:dead-latents}
\Cref{tab:synthsae-final} combines recovery, sparsity, and dead-latent counts. Main-text \cref{fig:synthsae} reports three-seed means: MCC 0.8980 and Rare F1 0.9519, with 0.33 dead latents on average. Its seed-0 run reaches MCC 0.8986 and Rare F1 0.9554, with one dead latent. The separate continuous run below reaches MCC 0.8974 and Rare F1 0.9482, with zero. MP has two dead latents but lower recovery, so keeping latents active is not sufficient for accurate feature recovery.

\subsubsection{Training sample efficiency}
\label{app:learning-supplemental-results}
\cref{fig:synthsae-full-learning-curves} reports MCC, Macro F1, and Rare F1 throughout training in the full-width setting. Its \befond curve follows one continuous run, whose endpoint is identified separately in \Cref{tab:synthsae-final}.
\begin{figure}[H]
\centering
\includegraphics[width=\linewidth]{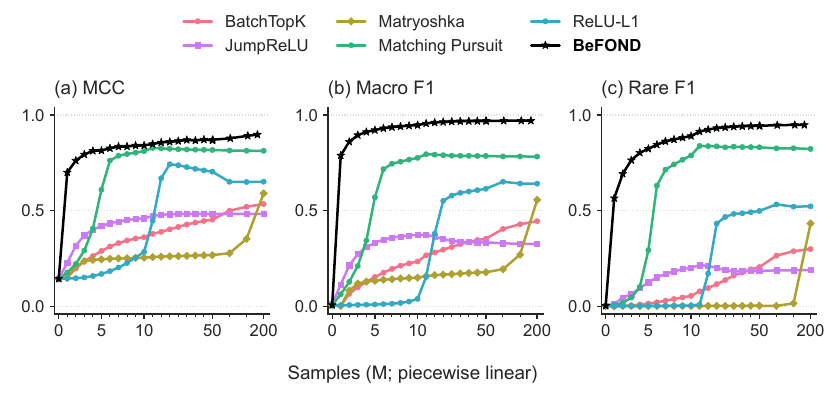}
\caption{Feature recovery throughout training on unwhitened SynthSAEBench ($d=768$; 16,384 learned and ground-truth features), using plain baseline recipes and seed 0. (a) Dictionary mean correlation coefficient (MCC). (b) Macro F1 over ground-truth features after positive-cosine one-to-one matching. (c) Rare F1 over the least frequent quartile, defined by a shared independent 500K-example frequency reference (seed 315031). Every checkpoint is evaluated on 1M held-out examples (seed 49100). BeFOND uses 200 inference steps with activity $p>0.5$; baselines use positive native activations. The x-axis counts decoder-update examples and is piecewise linear: 1M examples at 0--10M, 5M at 10--50M, and 25M beyond 50M occupy equal horizontal distances. Cross-region slopes therefore do not have a common per-sample interpretation. Baseline points through 10M, after 10M through 50M, and beyond 50M come from separate matching-recipe runs with their original full-budget learning-rate schedules; they do not form a single continuous trajectory. Overlapping checkpoints are selected by these fixed windows, not by score. BeFOND uses one continuous run. Samples are rounded up to full batches. Baselines finish at approximately 200M decoder-update examples and BeFOND at 181.82M plus 18.18M separate prior-update examples, which are excluded from the x-axis. Baseline auxiliary exposure is not counted on this axis. Final measured $L_0$ values are BatchTopK: 34.83, JumpReLU: 45.15, Matryoshka: 19.98, Matching Pursuit: 29.65, ReLU-L1: 35.09, BeFOND: 35.70. One seed is shown; no uncertainty intervals are estimated.}
\label{fig:synthsae-full-learning-curves}
\end{figure}

\section{Extended Gemma experiments}
\label{app:gemma-full-results}

\begin{table*}[t]
\centering
\caption{\saebench results on whitened Gemma-2-2B layer-12 activations after $49{,}999{,}872$ training tokens. Entries are mean $\pm$ sample standard deviation over three training seeds for \befond and five for each baseline. For SAE baselines, we selected target $L_0$ values by mean top-1 accuracy, and then used it for all probing columns, and the model selected by mean disentanglement supplies all RAVEL columns. Parentheses give both values when they differ. Sparse probing measures how well $k=1$ or $k=5$ features predict concepts, averaged equally across 113 tasks: \textbf{accuracy} is the fraction of correct predictions; \textbf{AUC} measures how well positives rank above negatives across thresholds; \textbf{F1} balances precision (correctness of positive predictions) and recall (coverage of positives). RAVEL measures whether interventions change a targeted attribute (\textbf{cause}) while preserving other attributes (\textbf{isolation}), and \textbf{disentanglement} is their mean. Higher is better throughout. Within each width and metric, bold marks the highest mean, selected before rounding.}
\label{tab:gemma}
\begingroup
\definecolor{saebenchBatchTopK}{HTML}{f77189}
\definecolor{saebenchMatryoshka}{HTML}{aea031}
\definecolor{saebenchMF}{HTML}{8c564b}
\definecolor{saebenchBeFOND}{HTML}{000000}
\setlength{\tabcolsep}{3pt}
\renewcommand{\arraystretch}{1.12}
\resizebox{\textwidth}{!}{%
\begin{tabular}{@{}l*{9}{r}@{}}
\toprule
& \multicolumn{6}{c}{Sparse probing} & \multicolumn{3}{c}{RAVEL} \\
\cmidrule(lr){2-7}\cmidrule(lr){8-10}
Model & \multicolumn{3}{c}{$k=1$} & \multicolumn{3}{c}{$k=5$} & & & \\
\cmidrule(lr){2-4}\cmidrule(lr){5-7}
& Acc. $\uparrow$ & AUC $\uparrow$ & F1 $\uparrow$ & Acc. $\uparrow$ & AUC $\uparrow$ & F1 $\uparrow$ & \makecell{Disentangle-\\ment $\uparrow$} & Cause $\uparrow$ & Isolation $\uparrow$ \\
\midrule
\rowcolor{black!6}
\multicolumn{10}{l}{\textbf{Dictionary width: 16k}} \\
\addlinespace[2pt]
\makecell[l]{\textcolor{saebenchBatchTopK}{\textbf{BatchTopK}}\\{\scriptsize (target $L_0=400$)}} & \entry{.767}{.002} & \entry{\mathbf{.798}}{.003} & \entry{.761}{.004} & \entry{\mathbf{.814}}{.002} & \entry{\mathbf{.862}}{.002} & \entry{\mathbf{.812}}{.002} & \entry{.729}{.005} & \entry{.692}{.019} & \entry{.767}{.017} \\[2pt]
\makecell[l]{\textcolor{saebenchMatryoshka}{\textbf{Matryoshka BatchTopK}}\\{\scriptsize ($L_0=200$ probing; $400$ RAVEL)}} & \entry{.768}{.003} & \entry{.794}{.003} & \entry{.762}{.002} & \entry{.810}{.002} & \entry{.858}{.002} & \entry{.808}{.002} & \entry{.731}{.003} & \entry{.695}{.013} & \entry{.767}{.015} \\[2pt]
\makecell[l]{\textcolor{saebenchMF}{\textbf{Mean-field + Adam}}\\{\scriptsize ($T=20$ inference steps)}} & \entry{.761}{.003} & \entry{.761}{.003} & \entry{.749}{.004} & \entry{.801}{.002} & \entry{.842}{.001} & \entry{.798}{.002} & \entry{.696}{.003} & \entry{.639}{.006} & \entry{.753}{.006} \\[2pt]
\makecell[l]{\textcolor{saebenchBeFOND}{\textbf{\befond}}\\{\scriptsize ($T=10$ inference steps)}} & \entry{\mathbf{.771}}{.004} & \entry{.781}{.004} & \entry{\mathbf{.765}}{.005} & \entry{.812}{.002} & \entry{.856}{.001} & \entry{.811}{.002} & \entry{\mathbf{.786}}{.002} & \entry{\mathbf{.759}}{.002} & \entry{\mathbf{.813}}{.003} \\[2pt]
\midrule
\rowcolor{black!6}
\multicolumn{10}{l}{\textbf{Dictionary width: 65k}} \\
\addlinespace[2pt]
\makecell[l]{\textcolor{saebenchBatchTopK}{\textbf{BatchTopK}}\\{\scriptsize ($L_0=400$ probing; $800$ RAVEL)}} & \entry{.770}{.003} & \entry{.800}{.002} & \entry{.765}{.003} & \entry{\mathbf{.814}}{.003} & \entry{\mathbf{.862}}{.002} & \entry{\mathbf{.812}}{.003} & \entry{.745}{.005} & \entry{.712}{.006} & \entry{.777}{.008} \\[2pt]
\makecell[l]{\textcolor{saebenchMatryoshka}{\textbf{Matryoshka BatchTopK}}\\{\scriptsize (target $L_0=400$)}} & \entry{.772}{.001} & \entry{\mathbf{.801}}{.001} & \entry{.766}{.002} & \entry{.811}{.001} & \entry{.858}{.001} & \entry{.809}{.002} & \entry{.737}{.004} & \entry{.704}{.007} & \entry{.771}{.006} \\[2pt]
\makecell[l]{\textcolor{saebenchMF}{\textbf{Mean-field + Adam}}\\{\scriptsize ($T=20$ inference steps)}} & \entry{.757}{.002} & \entry{.757}{.002} & \entry{.744}{.002} & \entry{.794}{.001} & \entry{.832}{.001} & \entry{.790}{.002} & \entry{.733}{.003} & \entry{.697}{.007} & \entry{.769}{.006} \\[2pt]
\makecell[l]{\textcolor{saebenchBeFOND}{\textbf{\befond}}\\{\scriptsize ($T=10$ inference steps)}} & \entry{\mathbf{.776}}{.004} & \entry{.794}{.004} & \entry{\mathbf{.769}}{.006} & \entry{.812}{.002} & \entry{.856}{.001} & \entry{.811}{.002} & \entry{\mathbf{.814}}{.000} & \entry{\mathbf{.789}}{.001} & \entry{\mathbf{.840}}{.001} \\[2pt]
\midrule
\rowcolor{black!6}
\multicolumn{10}{l}{\textbf{Dictionary width: 131k}} \\
\addlinespace[2pt]
\makecell[l]{\textcolor{saebenchBatchTopK}{\textbf{BatchTopK}}\\{\scriptsize (target $L_0=600$)}} & \entry{.773}{.002} & \entry{\mathbf{.804}}{.002} & \entry{.768}{.003} & \entry{.814}{.002} & \entry{\mathbf{.862}}{.001} & \entry{.813}{.002} & \entry{.751}{.003} & \entry{.720}{.008} & \entry{.782}{.013} \\[2pt]
\makecell[l]{\textcolor{saebenchMatryoshka}{\textbf{Matryoshka BatchTopK}}\\{\scriptsize ($L_0=200$ probing; $400$ RAVEL)}} & \entry{.773}{.004} & \entry{.799}{.004} & \entry{.768}{.004} & \entry{.814}{.001} & \entry{.860}{.002} & \entry{.812}{.001} & \entry{.739}{.002} & \entry{.713}{.008} & \entry{.764}{.006} \\[2pt]
\makecell[l]{\textcolor{saebenchMF}{\textbf{Mean-field + Adam}}\\{\scriptsize ($T=20$ inference steps)}} & \entry{.754}{.003} & \entry{.754}{.003} & \entry{.740}{.005} & \entry{.791}{.002} & \entry{.830}{.003} & \entry{.787}{.002} & \entry{.734}{.005} & \entry{.694}{.007} & \entry{.774}{.004} \\[2pt]
\makecell[l]{\textcolor{saebenchBeFOND}{\textbf{\befond}}\\{\scriptsize ($T=10$ inference steps)}} & \entry{\mathbf{.785}}{.002} & \entry{.802}{.003} & \entry{\mathbf{.778}}{.003} & \entry{\mathbf{.816}}{.001} & \entry{.857}{.002} & \entry{\mathbf{.814}}{.002} & \entry{\mathbf{.816}}{.001} & \entry{\mathbf{.790}}{.002} & \entry{\mathbf{.842}}{.003} \\[2pt]
\midrule
\rowcolor{black!6}
\multicolumn{10}{l}{\textbf{Dictionary width: 524k}} \\
\addlinespace[2pt]
\makecell[l]{\textcolor{saebenchBatchTopK}{\textbf{BatchTopK}}\\{\scriptsize (target $L_0=800$)}} & \entry{.773}{.004} & \entry{.803}{.004} & \entry{.768}{.006} & \entry{.814}{.003} & \entry{\mathbf{.860}}{.002} & \entry{.813}{.003} & \entry{.748}{.004} & \entry{.719}{.010} & \entry{.776}{.010} \\[2pt]
\makecell[l]{\textcolor{saebenchMatryoshka}{\textbf{Matryoshka BatchTopK}}\\{\scriptsize ($L_0=200$ probing; $400$ RAVEL)}} & \entry{.774}{.002} & \entry{.799}{.002} & \entry{.768}{.002} & \entry{.813}{.001} & \entry{.859}{.001} & \entry{.812}{.000} & \entry{.743}{.003} & \entry{.708}{.013} & \entry{.778}{.009} \\[2pt]
\makecell[l]{\textcolor{saebenchMF}{\textbf{Mean-field + Adam}}\\{\scriptsize ($T=20$ inference steps)}} & \entry{.751}{.007} & \entry{.751}{.007} & \entry{.735}{.009} & \entry{.786}{.009} & \entry{.825}{.010} & \entry{.782}{.009} & \entry{.685}{.029} & \entry{.622}{.047} & \entry{.748}{.013} \\[2pt]
\makecell[l]{\textcolor{saebenchBeFOND}{\textbf{\befond}}\\{\scriptsize ($T=10$ inference steps)}} & \entry{\mathbf{.789}}{.002} & \entry{\mathbf{.805}}{.004} & \entry{\mathbf{.784}}{.004} & \entry{\mathbf{.817}}{.001} & \entry{.859}{.001} & \entry{\mathbf{.816}}{.001} & \entry{\mathbf{.827}}{.001} & \entry{\mathbf{.800}}{.001} & \entry{\mathbf{.854}}{.002} \\[2pt]
\bottomrule
\end{tabular}%
}
\endgroup
\end{table*}

\begin{figure}[t]
    \centering
    \includegraphics[width=\linewidth]{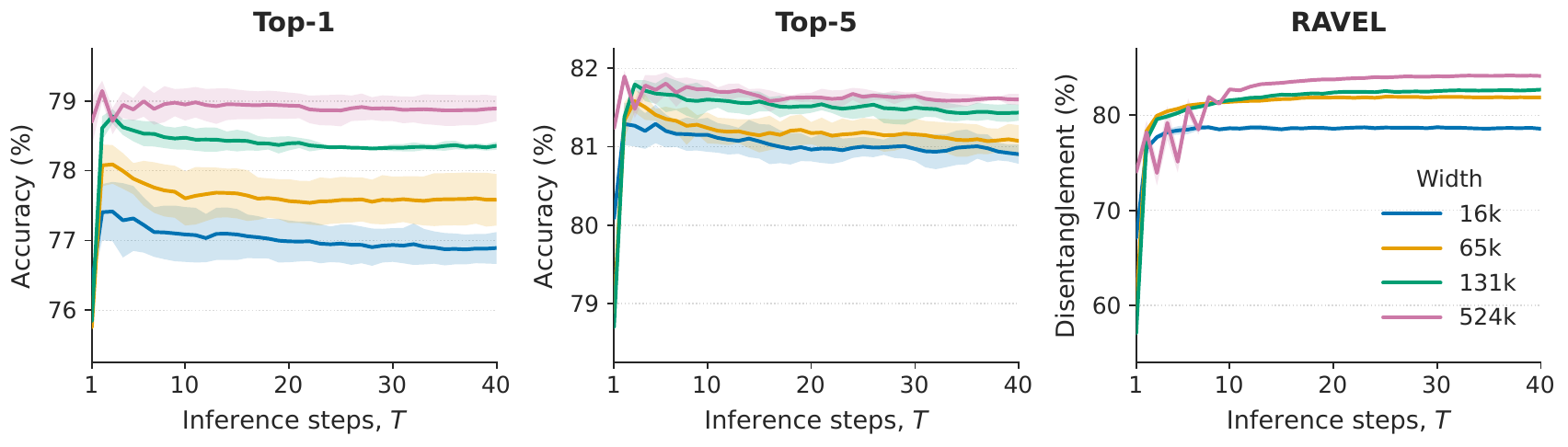}
    \caption{
        \textbf{Performance as a function of inference depth.}
        Sparse probing top-1 accuracy (left), top-5 accuracy (middle), and RAVEL disentanglement (right) for \befond on Gemma-2-2B layer-12 activations, evaluated over $T=1,\ldots,40$ inference steps at dictionary widths of $16$k, $65$k, $131$k, and $524$k. All evaluations use checkpoint $12{,}207$, corresponding to
        $49{,}999{,}872$ training tokens. Lines show means and shaded regions indicate $\pm 1$ sample
        standard deviation across three training seeds. 
        Sparse probing peaks within the first few steps, whereas RAVEL generally benefits from longer inference.
    }
    \label{fig:gemma-inference-depth}
\end{figure}

\subsection{Experimental setup}
\paragraph{Dataset.}
We extract layer-12 post-residual activations from Gemma-2-2B \citep{gemmateam2024gemma2} ($d=2{,}304$). Documents from Pile Uncopyrighted \citep{monology2023pile} are tokenized separately with BOS and truncated to 1,024 tokens. BOS and padding positions are excluded. Validation, calibration, and training use disjoint document ranges. Raw and whitened activations are evaluated separately.
Validation and calibration each use 65,536 tokens, with up to 500M training tokens. Extraction uses BF16, batch 4, and no document packing. Whitening statistics were fitted on 500M training tokens.

\paragraph{Training.}
\Cref{tab:app-gemma-training} summarizes the training and evaluation settings. The selected \befond results in \Cref{tab:gemma} use seeds 0--2 and checkpoint 12,207, corresponding to 49,999,872 decoder-training tokens. Evaluation uses ten outer inference steps with one inner step per reference update. The readout retains posterior means above 0.5 and sets the remainder to zero, followed by decoder-norm rescaling. \par At widths 16K, 65K, and 131K, the recipes use $\beta=1$, ETD1 step $h=1.5$, reference retention $0.999$, and decoder-norm targets $4$, $1.5$, and $0.75$, respectively. At 524K they use $\beta=1.5$, $h=\infty$, full reference retention, and norm target $0.5$. All use a global batch of 4,096 and a linear learning-rate schedule with peak $10^{-3}$ and minimum $10^{-5}$. Warmup occupies 5\% of training at 16K--131K and 1\% at 524K.
\begin{table}[htbp]
\centering\small
\caption{Gemma training and evaluation settings. Preprocessing, checkpoint, inference depth, and readout identify an evaluation configuration.}
\label{tab:app-gemma-training}
\label{tab:app-evaluation}
\begin{tabular}{@{}p{0.25\linewidth}p{0.70\linewidth}@{}}
\toprule
Method / metric & Settings \\
\midrule
\befond & Widths 16,384--524,288; exponentially sampled training depth (mean 16, cap 100; Appendix~\ref{app:methods}); FP32 inference/parameters; prior Fisher with current moments and no damping; $\Delta t=1$; decoder solver \texttt{matrix\_free\_phi1\_auto} \\
Mean-field + Adam & Whitened inputs, 50M tokens, 20 inference steps, readout $p>0.5$. After 2,000 updates, clip the dictionary gradient at three times its running-mean norm and skip batches whose negative ELBO exceeds five times its running mean, capped at approximately 20\% of recent batches \\
BatchTopK / Matryoshka & Whitened inputs; batch 4,096; 12,207 updates (49,999,872 tokens); constant LR $3\times10^{-4}$; FP32 SAEs, BF16 Gemma; SAELens 6.51.1. ZCA fitted on 500M training tokens. Target $L_0$ values in Table~\ref{tab:sae-bench-whitened} were selected on a separate pilot \\
Baseline initialization\newline / losses & Decoder norm initialized to 0.1; decoder bias subtracted before encoding; activations rescaled by decoder norm; auxiliary-loss coefficient 1; threshold EMA rate 0.01 \\
Matryoshka levels & 4,096, 16,384, and the full width (16,384 appears once for the smallest model); Matryoshka auxiliary loss enabled \\
Gemma Scope & Release \texttt{google/gemma-scope-2b-pt-res}, checkpoint paths \texttt{layer\_12/width\_\{w\}/average\_l0\_\{s\}}, with $(w,s)$ given by each row of Table~\ref{tab:gemma-scope-l12}. Training: 4B tokens at 16K, 8B at 32K--524K, and 16B at 1M~\citep{lieberum2024gemma} \\
\midrule
Sparse probing & 113 tasks; final non-padding token; context 1,024; 1/2/5 selected features; L1 logistic regression; equal task averaging \\
Probe selection & Rank absolute normalized class-mean difference; 10 inverse-penalty values from $10^5$ to $10^{-5}$; validation ROC-AUC; refit on training data \\
RAVEL & Average cause and isolation over attributes within each entity, then equally over entities; disentanglement is their mean. Aggregation requires completed results for every configured entity \\
\bottomrule
\end{tabular}
\end{table}

\paragraph{Evaluation.}
Sparse probing reports mean test accuracy across 113 tasks, using the best 1 or 5 features selected on training data. RAVEL reports cause, isolation, and their mean~\citep{huang2024ravel}. \Cref{tab:app-evaluation} specifies the evaluation settings \citep{karvonen2025saebench}.

\paragraph{Implementation.}
The Gemma reference environment uses Python 3.11.14, PyTorch 2.7.1+cu126, TransformerLens 2.16.1, Transformers 4.57.3, and SAE Lens 6.24.6.

\subsection{Results}
The whitened baseline comparison reports sparse probing and RAVEL in \Cref{tab:sae-bench-whitened}, and \Cref{tab:gemma-scope-l12} reports the Gemma Scope checkpoints.

\begin{table*}[t]
\centering
\caption{SAEBench results for SAEs trained on 50M tokens of whitened Gemma-2-2B layer-12 activations. Entries report mean $\pm$ sample standard deviation over five training seeds (1--5), evaluated at the final checkpoint (49,999,872 tokens). Sparse-probing metrics are macro-averaged over 113 tasks; RAVEL overall is the disentanglement score. All scores are fractions, and higher is better. $L_0$ denotes target training sparsity, fixed using a separate seed-0 pilot. Bold indicates the highest mean across $L_0$ values within each width--architecture group, selected before rounding.
}
\label{tab:sae-bench-whitened}
\begingroup
\setlength{\tabcolsep}{2pt}
\resizebox{\textwidth}{!}{%
\begin{tabular}{lr*{9}{r}}
\toprule
& & \multicolumn{6}{c}{Sparse probing \citep{kantamneni2025sparse}} & \multicolumn{3}{c}{RAVEL \citep{huang2024ravel}} \\
\cmidrule(lr){3-8}\cmidrule(lr){9-11}
Width & Target $L_0$ & \multicolumn{3}{c}{$k=1$} & \multicolumn{3}{c}{$k=5$} & & & \\
\cmidrule(lr){3-5}\cmidrule(lr){6-8}
& & Acc. $\uparrow$ & AUC $\uparrow$ & F1 $\uparrow$ & Acc. $\uparrow$ & AUC $\uparrow$ & F1 $\uparrow$ & Overall $\uparrow$ & Cause $\uparrow$ & Isolation $\uparrow$ \\
\midrule
\multicolumn{11}{l}{\textit{BatchTopK (16k)}} \\
16k & 200 & \entry{.765}{.004} & \entry{.791}{.004} & \entry{.758}{.005} & \entry{.810}{.002} & \entry{.859}{.002} & \entry{.808}{.003} & \entry{.717}{.009} & \entry{\mathbf{.698}}{.018} & \entry{.736}{.021} \\
16k & 400 & \entry{\mathbf{.767}}{.002} & \entry{\mathbf{.798}}{.003} & \entry{\mathbf{.761}}{.004} & \entry{\mathbf{.814}}{.002} & \entry{\mathbf{.862}}{.002} & \entry{\mathbf{.812}}{.002} & \entry{\mathbf{.729}}{.005} & \entry{.692}{.019} & \entry{\mathbf{.767}}{.017} \\
\midrule
\multicolumn{11}{l}{\textit{Matryoshka BatchTopK (16k)}} \\
16k & 200 & \entry{\mathbf{.768}}{.003} & \entry{.794}{.003} & \entry{\mathbf{.762}}{.002} & \entry{\mathbf{.810}}{.002} & \entry{\mathbf{.858}}{.002} & \entry{\mathbf{.808}}{.002} & \entry{.718}{.004} & \entry{.688}{.018} & \entry{.747}{.013} \\
16k & 400 & \entry{.767}{.001} & \entry{\mathbf{.797}}{.001} & \entry{.761}{.002} & \entry{.809}{.002} & \entry{.857}{.001} & \entry{.808}{.002} & \entry{\mathbf{.731}}{.003} & \entry{\mathbf{.695}}{.013} & \entry{\mathbf{.767}}{.015} \\
\midrule
\multicolumn{11}{l}{\textit{BatchTopK (65k)}} \\
65k & 400 & \entry{\mathbf{.770}}{.003} & \entry{.800}{.002} & \entry{\mathbf{.765}}{.003} & \entry{\mathbf{.814}}{.003} & \entry{\mathbf{.862}}{.002} & \entry{\mathbf{.812}}{.003} & \entry{.740}{.003} & \entry{.700}{.015} & \entry{\mathbf{.781}}{.012} \\
65k & 800 & \entry{.768}{.002} & \entry{\mathbf{.803}}{.002} & \entry{.764}{.002} & \entry{.813}{.001} & \entry{.860}{.001} & \entry{.812}{.001} & \entry{\mathbf{.745}}{.005} & \entry{\mathbf{.712}}{.006} & \entry{.777}{.008} \\
\midrule
\multicolumn{11}{l}{\textit{Matryoshka BatchTopK (65k)}} \\
65k & 200 & \entry{.772}{.002} & \entry{.798}{.002} & \entry{\mathbf{.766}}{.002} & \entry{\mathbf{.813}}{.001} & \entry{\mathbf{.861}}{.002} & \entry{\mathbf{.811}}{.001} & \entry{.734}{.009} & \entry{\mathbf{.707}}{.015} & \entry{.761}{.010} \\
65k & 400 & \entry{\mathbf{.772}}{.001} & \entry{\mathbf{.801}}{.001} & \entry{.766}{.002} & \entry{.811}{.001} & \entry{.858}{.001} & \entry{.809}{.002} & \entry{\mathbf{.737}}{.004} & \entry{.704}{.007} & \entry{\mathbf{.771}}{.006} \\
\midrule
\multicolumn{11}{l}{\textit{BatchTopK (131k)}} \\
131k & 600 & \entry{\mathbf{.773}}{.002} & \entry{\mathbf{.804}}{.002} & \entry{\mathbf{.768}}{.003} & \entry{\mathbf{.814}}{.002} & \entry{\mathbf{.862}}{.001} & \entry{\mathbf{.813}}{.002} & \entry{\mathbf{.751}}{.003} & \entry{\mathbf{.720}}{.008} & \entry{\mathbf{.782}}{.013} \\
131k & 800 & \entry{.770}{.003} & \entry{.803}{.003} & \entry{.765}{.003} & \entry{.814}{.002} & \entry{.859}{.002} & \entry{.812}{.002} & \entry{.746}{.004} & \entry{.720}{.011} & \entry{.771}{.005} \\
\midrule
\multicolumn{11}{l}{\textit{Matryoshka BatchTopK (131k)}} \\
131k & 200 & \entry{\mathbf{.773}}{.004} & \entry{.799}{.004} & \entry{\mathbf{.768}}{.004} & \entry{\mathbf{.814}}{.001} & \entry{\mathbf{.860}}{.002} & \entry{\mathbf{.812}}{.001} & \entry{.739}{.008} & \entry{\mathbf{.723}}{.019} & \entry{.754}{.006} \\
131k & 400 & \entry{.773}{.001} & \entry{\mathbf{.801}}{.001} & \entry{.767}{.002} & \entry{.812}{.001} & \entry{.858}{.001} & \entry{.810}{.002} & \entry{\mathbf{.739}}{.002} & \entry{.713}{.008} & \entry{\mathbf{.764}}{.006} \\
\midrule
\multicolumn{11}{l}{\textit{BatchTopK (524k)}} \\
524k & 600 & \entry{.772}{.004} & \entry{.801}{.004} & \entry{.766}{.004} & \entry{.813}{.002} & \entry{\mathbf{.861}}{.002} & \entry{.812}{.003} & \entry{.747}{.006} & \entry{.719}{.011} & \entry{.775}{.007} \\
524k & 800 & \entry{\mathbf{.773}}{.004} & \entry{\mathbf{.803}}{.004} & \entry{\mathbf{.768}}{.006} & \entry{\mathbf{.814}}{.003} & \entry{.860}{.002} & \entry{\mathbf{.813}}{.003} & \entry{\mathbf{.748}}{.004} & \entry{\mathbf{.719}}{.010} & \entry{\mathbf{.776}}{.010} \\
\midrule
\multicolumn{11}{l}{\textit{Matryoshka BatchTopK (524k)}} \\
524k & 200 & \entry{\mathbf{.774}}{.002} & \entry{.799}{.002} & \entry{\mathbf{.768}}{.002} & \entry{\mathbf{.813}}{.001} & \entry{\mathbf{.859}}{.001} & \entry{\mathbf{.812}}{.000} & \entry{.734}{.005} & \entry{.705}{.011} & \entry{.764}{.011} \\
524k & 400 & \entry{.771}{.002} & \entry{\mathbf{.800}}{.002} & \entry{.767}{.003} & \entry{.810}{.001} & \entry{.858}{.001} & \entry{.809}{.002} & \entry{\mathbf{.743}}{.003} & \entry{\mathbf{.708}}{.013} & \entry{\mathbf{.778}}{.009} \\
\bottomrule
\end{tabular}%
}
\endgroup
\end{table*}

\begin{table*}[t]
\centering
\caption{SAEBench evaluations of released Gemma Scope layer-12 checkpoints \citep{lieberum2024gemma,karvonen2025saebench}, organized by dictionary width and average sparsity $L_0$.}
\label{tab:gemma-scope-l12}
\resizebox{\textwidth}{!}{%
\begin{tabular}{lr*{12}{r}}
\toprule
& & \multicolumn{9}{c}{Sparse probing} & \multicolumn{3}{c}{RAVEL} \\
\cmidrule(lr){3-11}\cmidrule(lr){12-14}
Width & $L_0$
& \multicolumn{3}{c}{$k=1$}
& \multicolumn{3}{c}{$k=2$}
& \multicolumn{3}{c}{$k=5$}
& \multirow{2}{*}{\begin{tabular}{@{}c@{}}Disent-\\angle $\uparrow$\end{tabular}}
& \multirow{2}{*}{Cause $\uparrow$}
& \multirow{2}{*}{\begin{tabular}{@{}c@{}}Isola-\\tion $\uparrow$\end{tabular}} \\
\cmidrule(lr){3-5}\cmidrule(lr){6-8}\cmidrule(lr){9-11}
& & Acc. $\uparrow$ & AUC $\uparrow$ & F1 $\uparrow$ & Acc. $\uparrow$ & AUC $\uparrow$ & F1 $\uparrow$ & Acc. $\uparrow$ & AUC $\uparrow$ & F1 $\uparrow$ & & & \\
\midrule
16k & 22 & 0.7337 & 0.7476 & 0.7159 & 0.7529 & 0.7861 & 0.7441 & 0.7771 & 0.8243 & 0.7717 & 0.5989 & 0.5121 & 0.6857 \\
16k & 41 & 0.7519 & 0.7710 & 0.7366 & 0.7686 & 0.8056 & 0.7601 & 0.7893 & 0.8370 & 0.7835 & 0.6644 & 0.6163 & 0.7126 \\
16k & 82 & 0.7627 & 0.7857 & 0.7545 & 0.7821 & 0.8207 & 0.7768 & 0.8047 & 0.8523 & 0.8016 & 0.6960 & 0.6489 & 0.7431 \\
16k & 176 & 0.7679 & 0.7938 & 0.7602 & 0.7890 & 0.8262 & 0.7851 & 0.8129 & 0.8587 & 0.8119 & 0.7206 & 0.6508 & \textbf{0.7904} \\
16k & 445 & \textbf{0.7725} & \textbf{0.8033} & \textbf{0.7687} & \textbf{0.7942} & \textbf{0.8363} & \textbf{0.7921} & \textbf{0.8171} & \textbf{0.8632} & \textbf{0.8162} & \textbf{0.7488} & \textbf{0.7205} & 0.7771 \\
\midrule
32k & 12 & 0.7179 & 0.7264 & 0.6960 & 0.7350 & 0.7665 & 0.7181 & 0.7570 & 0.8046 & 0.7489 & 0.6050 & 0.5461 & 0.6639 \\
32k & 22 & 0.7379 & 0.7498 & 0.7221 & 0.7535 & 0.7824 & 0.7418 & 0.7758 & 0.8187 & 0.7711 & 0.6416 & 0.6251 & 0.6582 \\
32k & 40 & 0.7520 & 0.7695 & 0.7422 & 0.7662 & 0.7995 & 0.7590 & 0.7872 & 0.8327 & 0.7842 & 0.6755 & 0.6200 & 0.7311 \\
32k & 76 & 0.7607 & 0.7815 & 0.7534 & 0.7761 & 0.8138 & 0.7721 & 0.7990 & 0.8466 & 0.7958 & 0.7075 & 0.6714 & 0.7436 \\
32k & 155 & 0.7675 & 0.7907 & 0.7609 & 0.7865 & 0.8255 & 0.7833 & 0.8054 & 0.8528 & 0.8025 & 0.7249 & 0.6881 & 0.7616 \\
32k & 360 & \textbf{0.7749} & \textbf{0.8049} & \textbf{0.7713} & \textbf{0.7897} & \textbf{0.8306} & \textbf{0.7866} & \textbf{0.8140} & \textbf{0.8601} & \textbf{0.8129} & \textbf{0.7450} & \textbf{0.7105} & \textbf{0.7794} \\
\midrule
65k & 21 & 0.7361 & 0.7459 & 0.7139 & 0.7507 & 0.7785 & 0.7359 & 0.7743 & 0.8186 & 0.7686 & 0.6469 & 0.6053 & 0.6885 \\
65k & 38 & 0.7501 & 0.7656 & 0.7399 & 0.7660 & 0.7962 & 0.7589 & 0.7874 & 0.8290 & 0.7834 & 0.6767 & 0.6345 & 0.7188 \\
65k & 72 & 0.7647 & 0.7816 & 0.7552 & 0.7810 & 0.8128 & 0.7751 & 0.7987 & 0.8451 & 0.7965 & 0.7041 & 0.6596 & 0.7487 \\
65k & 141 & 0.7717 & 0.7930 & 0.7653 & 0.7858 & 0.8231 & 0.7814 & 0.8087 & 0.8541 & 0.8069 & 0.7346 & \textbf{0.7158} & 0.7535 \\
65k & 297 & \textbf{0.7788} & \textbf{0.8032} & \textbf{0.7731} & \textbf{0.7947} & \textbf{0.8343} & \textbf{0.7925} & \textbf{0.8178} & \textbf{0.8611} & \textbf{0.8169} & \textbf{0.7487} & 0.7016 & \textbf{0.7959} \\
\midrule
131k & 12 & 0.7108 & 0.7198 & 0.6870 & 0.7311 & 0.7585 & 0.7146 & 0.7525 & 0.7952 & 0.7443 & 0.6004 & 0.5764 & 0.6245 \\
131k & 20 & 0.7279 & 0.7362 & 0.7074 & 0.7483 & 0.7752 & 0.7335 & 0.7698 & 0.8132 & 0.7621 & 0.6408 & 0.5950 & 0.6866 \\
131k & 36 & 0.7379 & 0.7513 & 0.7222 & 0.7565 & 0.7843 & 0.7471 & 0.7810 & 0.8226 & 0.7767 & 0.6828 & 0.6451 & 0.7205 \\
131k & 67 & 0.7554 & 0.7713 & 0.7444 & 0.7693 & 0.8019 & 0.7622 & 0.7913 & 0.8347 & 0.7876 & 0.7083 & 0.6734 & 0.7432 \\
131k & 129 & 0.7677 & 0.7865 & 0.7608 & 0.7837 & 0.8173 & 0.7794 & 0.8027 & 0.8475 & 0.8011 & 0.7265 & 0.6561 & \textbf{0.7969} \\
131k & 264 & \textbf{0.7814} & \textbf{0.8033} & \textbf{0.7773} & \textbf{0.7941} & \textbf{0.8290} & \textbf{0.7914} & \textbf{0.8138} & \textbf{0.8575} & \textbf{0.8125} & \textbf{0.7486} & \textbf{0.7281} & 0.7691 \\
\midrule
262k & 11 & 0.7068 & 0.7168 & 0.6786 & 0.7232 & 0.7503 & 0.7054 & 0.7430 & 0.7872 & 0.7310 & 0.6166 & 0.5987 & 0.6345 \\
262k & 21 & 0.7240 & 0.7355 & 0.7037 & 0.7443 & 0.7732 & 0.7318 & 0.7657 & 0.8078 & 0.7607 & 0.6355 & 0.6169 & 0.6540 \\
262k & 36 & 0.7375 & 0.7533 & 0.7252 & 0.7573 & 0.7879 & 0.7479 & 0.7772 & 0.8195 & 0.7723 & 0.6795 & 0.6567 & 0.7023 \\
262k & 67 & 0.7510 & 0.7684 & 0.7427 & 0.7666 & 0.7960 & 0.7596 & 0.7911 & 0.8325 & 0.7883 & 0.7039 & 0.6652 & 0.7426 \\
262k & 121 & 0.7656 & 0.7854 & 0.7572 & 0.7818 & 0.8155 & 0.7772 & 0.8029 & 0.8480 & 0.8008 & 0.7158 & 0.6737 & 0.7579 \\
262k & 243 & \textbf{0.7733} & \textbf{0.7951} & \textbf{0.7655} & \textbf{0.7928} & \textbf{0.8291} & \textbf{0.7906} & \textbf{0.8119} & \textbf{0.8561} & \textbf{0.8105} & \textbf{0.7523} & \textbf{0.7219} & \textbf{0.7827} \\
\midrule
524k & 22 & 0.7129 & 0.7276 & 0.6976 & 0.7218 & 0.7529 & 0.7104 & 0.7453 & 0.7893 & 0.7384 & 0.6138 & 0.5800 & 0.6475 \\
524k & 29 & 0.7258 & 0.7421 & 0.7139 & 0.7400 & 0.7725 & 0.7297 & 0.7629 & 0.8076 & 0.7573 & 0.6451 & 0.6002 & 0.6900 \\
524k & 46 & 0.7387 & 0.7552 & 0.7264 & 0.7535 & 0.7872 & 0.7457 & 0.7770 & 0.8211 & 0.7731 & 0.6803 & 0.6602 & 0.7005 \\
524k & 65 & 0.7523 & 0.7688 & 0.7412 & 0.7663 & 0.7981 & 0.7589 & 0.7879 & 0.8303 & 0.7856 & 0.6958 & 0.6583 & 0.7332 \\
524k & 115 & 0.7637 & 0.7815 & 0.7550 & 0.7791 & 0.8110 & 0.7734 & 0.8004 & 0.8419 & 0.7981 & 0.7251 & 0.6760 & 0.7741 \\
524k & 227 & \textbf{0.7735} & \textbf{0.7955} & \textbf{0.7682} & \textbf{0.7927} & \textbf{0.8268} & \textbf{0.7890} & \textbf{0.8106} & \textbf{0.8520} & \textbf{0.8094} & \textbf{0.7555} & \textbf{0.7330} & \textbf{0.7781} \\
\midrule
1m & 19 & 0.7028 & 0.7182 & 0.6875 & 0.7165 & 0.7497 & 0.7081 & 0.7396 & 0.7847 & 0.7350 & 0.5910 & 0.5570 & 0.6250 \\
1m & 26 & 0.7234 & 0.7388 & 0.7047 & 0.7357 & 0.7693 & 0.7273 & 0.7587 & 0.8084 & 0.7523 & 0.6371 & 0.6120 & 0.6622 \\
1m & 58 & 0.7354 & 0.7548 & 0.7239 & 0.7550 & 0.7876 & 0.7499 & 0.7755 & 0.8193 & 0.7737 & 0.6722 & 0.6560 & 0.6884 \\
1m & 73 & 0.7526 & 0.7714 & 0.7435 & 0.7671 & 0.8013 & 0.7633 & 0.7887 & 0.8295 & 0.7866 & 0.6933 & 0.6533 & 0.7334 \\
1m & 107 & 0.7633 & 0.7797 & 0.7545 & 0.7737 & 0.8030 & 0.7684 & 0.7959 & 0.8385 & 0.7937 & 0.7201 & 0.6857 & 0.7545 \\
1m & 207 & \textbf{0.7788} & \textbf{0.7983} & \textbf{0.7726} & \textbf{0.7919} & \textbf{0.8264} & \textbf{0.7881} & \textbf{0.8095} & \textbf{0.8514} & \textbf{0.8081} & \textbf{0.7420} & \textbf{0.6860} & \textbf{0.7979} \\
\bottomrule
\end{tabular}%
}
\end{table*}

\subsection{Wall-clock training times on H200 GPUs}
\label{app:training-runtime}
\Cref{tab:gemma-training-times} compares wall-clock training times on eight NVIDIA H200 GPUs, with one model measured at a time and seed 0 throughout. Each measurement covers the first 999,424 main-training tokens of the original 50M-token schedule. Global batch size is 4,096 for \befond, BatchTopK, and Matryoshka, and 2,048 for MF20+clip.

\begin{table}[htbp]
\centering\small
\caption{Elapsed training time in seconds for 999,424 training tokens on eight NVIDIA H200 GPUs (lower is better). Each model is measured once at seed 0. Timings include \befond's prior updates and exclude evaluation, initialization, warmup, data preparation, checkpointing, and logging. Matryoshka denotes Matryoshka BatchTopK. Mean-field + Adam uses 20 inference steps, gradient clipping, and batch skipping.}
\label{tab:gemma-training-times}
\begin{tabular}{@{}rrrrr@{}}
\toprule
Width $K$ & \befond & BatchTopK & Matryoshka & Mean-field + Adam \\
\midrule
16,384  & 33.125  & 2.719  & 2.991  & 2.714 \\
65,536  & 118.959 & 9.202  & 10.737 & 10.157 \\
131,072 & 234.665 & 17.935 & 20.075 & 19.914 \\
524,288 & 989.321 & 70.121 & 77.382 & 78.331 \\
\bottomrule
\end{tabular}
\end{table}

Inputs are loaded, shuffled, whitened, and placed in GPU memory before timing. Initialization and CUDA/JIT warmup are excluded, and the complete initial training state is restored after warmup. The timed interval includes inference, losses, parameter updates, normalization, required communication, and \befond's prior updates with convergence checks. Evaluation, checkpoint writing, and logging are excluded. PyTorch timings use the maximum CUDA-synchronized elapsed time across ranks, while JAX waits for all output shards on all devices.

\befond retains its original stochastic training horizons, capped at 100 with one inner step, and warms the complete measured prefix to cover adaptive solver paths. MF uses 20 inference steps. Its clipping and loss-based skipping begin after 2,000 updates and are inactive during this prefix. Model tensors are FP32, with native FP64 intermediates retained in \befond's moment reductions and Woodbury calculations. \befond and the amortized baselines use PyTorch, while MF uses JAX. The distributed baseline implementations preserve the global TopK budget and synchronize gradients, thresholds, and dead-feature histories.

\befond takes approximately $12$--$14\times$ as long as BatchTopK across these widths.

\clearpage
\flushbottom


\end{document}